\documentclass[11pt,letterpaper]{article}
\usepackage[dvipsnames]{xcolor}
\usepackage{tikz}
\usetikzlibrary{angles,quotes,calc}
\usetikzlibrary{
  shapes.multipart,
  matrix,
  positioning,
  shapes.callouts,
  shapes.arrows,
  calc}

\usepackage{answers}
\usepackage{setspace}

\usepackage{graphicx}
\usepackage[shortlabels]{enumitem}
\usepackage{multicol}
\usepackage{mathrsfs}
\usepackage[margin=1in]{geometry} 
\usepackage{amsmath,amsthm,amssymb}
\usepackage{mathtools}
\usepackage[utf8]{inputenc}
\usepackage[english]{babel}

\usepackage[maxalphanames=99, maxbibnames=99,style=alphabetic,natbib=true]{biblatex}
\usepackage{bbm}
\usepackage{breqn}

\usepackage{subfiles}
\usepackage{amsmath,amssymb}
\usepackage[english]{babel}
\usepackage[nottoc]{tocbibind}
\usepackage{bm}
\usepackage{hyperref}
\usepackage{algorithm}
\usepackage{multirow}
\usepackage{multicol}
\usepackage{algpseudocode}
\usepackage[capitalise, nameinlink, noabbrev]{cleveref}
\usepackage{xifthen}
\usepackage{booktabs}
\hypersetup{
    colorlinks=true,
    linkcolor=blue,
    filecolor=magenta,      
    urlcolor=cyan,
    citecolor=Green
}
\usepackage{thmtools}
\usepackage{thm-restate}
\usepackage{mdframed}

\usepackage{gensymb}

\newtheorem{theorem}{Theorem}[section]

\newtheorem{corollary}[theorem]{Corollary}
\newtheorem{lemma}[theorem]{Lemma}

\newtheorem{definition}[theorem]{Definition}
\newtheorem{remark}[theorem]{Remark}
\newtheorem{assumption}{Assumption}
\newtheorem{proposition}[theorem]{Proposition}

\newcommand{\R}{\mathbb{R}}

\DeclareMathOperator*{\E}{\mathbb{E}}

\DeclareMathOperator*{\poly}{poly}

\newcommand{\wt}{\widetilde}

\newcommand{\Prb}[2][]{ \ifthenelse{\isempty{#1}}
  {\Pr\left[#2\right]}
  {\Pr_{#1}\left[#2\right]} }
\newcommand{\Ex}[2][]{ \ifthenelse{\isempty{#1}}
  {\E\left[#2\right]}
  {\E_{#1}\left[#2\right]} }
\newcommand{\Vari}[2][]{ \ifthenelse{\isempty{#1}}
  {\mathbf{Var}\left[#2\right]}
  {\mathop{\mathbf{Var}}_{#1}\left[#2\right]} }
\newcommand{\nPrb}[2][]{ \ifthenelse{\isempty{#1}}
  {\Pr[#2]}
  {\Pr_{#1}[#2]} }
\newcommand{\nEx}[2][]{ \ifthenelse{\isempty{#1}}
  {\E[#2]}
  {\E_{#1}[#2]} }

\newcommand{\Hd}{\{-1,+1\}^d}

\usepackage{subcaption}
\newcommand{\Aset}{\mathcal A}
\newcommand{\Vset}{\mathcal V}

\DeclareMathOperator{\Unif}{Unif}

\DeclareMathOperator{\TV}{d_{\mathrm{TV}}}
\DeclareMathOperator{\KL}{D_{\mathrm{KL}}}

\newcommand{\err}{\mathrm{err}}

\author{Zhiyang Xun \\ UT Austin \\ \texttt{zxun@cs.utexas.edu} \and Eric Price \\ UT Austin \\ \texttt{ecprice@cs.utexas.edu}}

\title{Query Lower Bounds for Diffusion Sampling}
\usepackage{tikz}
\usepackage{pgfplots}
\pgfplotsset{compat=1.18}
\usetikzlibrary{arrows.meta,calc,positioning}

\begin{document}

\maketitle
\begin{abstract}
Diffusion models generate samples by iteratively querying learned score estimates. A rapidly growing literature focuses on accelerating sampling by minimizing the number of score evaluations, yet the information-theoretic limits of such acceleration remain unclear.

In this work, we establish the first score query lower bounds for diffusion sampling. We prove that for $d$-dimensional distributions, given access to score estimates with polynomial accuracy $\varepsilon=d^{-O(1)}$ (in any $L^p$ sense), any sampling algorithm requires $\widetilde{\Omega}(\sqrt{d})$ adaptive score queries. In particular, our proof shows that any sampler must search over $\widetilde{\Omega}(\sqrt{d})$ distinct noise levels, providing a formal explanation for why multiscale noise schedules are necessary in practice.
\end{abstract}

\section{Introduction}
Diffusion models have become a central paradigm in modern generative modeling, enabling major advances in tasks ranging from high-fidelity image synthesis to scientific computing \citep{sohl2015diffusion,ho2020ddpm,song2021score_sde,nichol2021improved,karras2022edm}.
The key to their success lies in an iterative formulation: rather than generating samples directly in a single shot, these models progressively transform simple noise into structured data by repeatedly evaluating the \textit{smoothed scores} of the distribution \citep{song2019sgm,song2021score_sde}.
By reducing the task of sampling from complex high-dimensional multimodal distributions to estimating the scores, this framework has dramatically improved generative modeling capabilities.

A central goal within this framework is improving sampling efficiency.
Since each iteration typically requires evaluating a large neural network, the computational cost is dominated by the total number of such evaluations (queries).
This has motivated a large body of work on reducing query complexity, including higher-order numerical solvers, optimized discretization schemes, and flow matching techniques that straighten generation trajectories  \citep{lu2022dpmsolver,lu2022dpmsolverpp,liu2022pndm,zhang2022deis,zhao2023unipc,karras2022edm,lipman2022flowmatching,liu2022rectified, fukumizu2025flow}.
These advances have steadily reduced the number of queries required in both theory and practice.

This rapid progress raises a basic theoretical question: as we aim to reduce the number of score queries, what are the intrinsic information-theoretic limits of diffusion sampling in high dimension?
When access to the target distribution is limited to querying smoothed score estimates, is there an unavoidable lower bound on the number of such queries?
In this work, we ask:
\begin{quote}
  \centering
  \textit{Can we establish a query-complexity lower bound for score-based sampling?}
\end{quote}

Recent theory has made substantial progress on \emph{upper} bounds.
In particular, for targets with bounded second moments, one can sample in $\widetilde O(d)$ iterations given access to $L^2$-accurate smoothed score estimates~\citep{chen2022samplingiseasy,lee2022polynomialcomplexity,benton2023linearconvergence}.
At the same time, practical diffusion samplers often produce high-quality
samples in only a handful of steps, far fewer than worst-case theoretical
guarantees.
This gap makes lower bounds essential: is a polynomial dependence on~$d$
truly unavoidable, or could the query complexity be reduced to
$\mathrm{polylog}(d)$, or even to $O(1)$?

 Ruling out this possibility turns out to be very nontrivial. Information-theoretically, \textit{smoothed} score estimates are very powerful in sampling, suggesting that such efficiency improvements might indeed be feasible:
 \begin{enumerate}
    \item Each query of a smoothed score returns a full vector in $\mathbb{R}^d$, providing $\Omega(d)$ bits of information.
Thus, if each query can extract a near-optimal amount of information, the intrinsic difficulty of sampling might not necessarily grow with dimension.
\item Although scores are by definition local quantities, the smoothed scores can reveal global properties: Gaussian convolution can encode global distributional structure into local gradients, potentially enabling efficient navigation of high-dimensional space.

\item  Even though $L^2$-accurate score estimates only guarantee accuracy in expectation rather than at any individual query point, large smoothing levels can mitigate this gap: they ensure that queries land within the typical set of the smoothed distribution, where the $L^2$ guarantee translates into high-probability pointwise accuracy.
\end{enumerate}
Furthermore, recent results hint at a gap between limitations of specific discretizations and the potential power of smoothed-score access.
For instance, \cite{jiao2025optimalddpm,gao2025wasserstein} show that even for a $d$-dimensional standard Gaussian, standard discretizations can require $\Omega(\sqrt d)$ steps due to error accumulation.
On the other hand, work on high-accuracy regimes has begun to explore $\varepsilon$-accurate sampling with $\mathrm{polylog}(1/\varepsilon)$ dependence~\citep{gatmiry2026highaccuracy,chen2026highaccuracy}, raising the possibility of sampling with $\mathrm{polylog}(d)$ steps on product distributions.
These considerations leave open whether there is a fundamental, information-theoretic barrier to few-step diffusion sampling in the worst case.

\subsection{Our Results}
In this work, we establish the first query-complexity lower bounds for diffusion sampling. Our main message is that, under standard distributional assumptions and polynomially accurate score estimates, there is an information-theoretic $\widetilde\Omega(\sqrt d)$ barrier on the number of score queries required to obtain a nontrivial sample.

To state our bounds formally, we first introduce the necessary notation. Let $\pi$ be  a target distribution on $\mathbb{R}^d$. For any noise level $\sigma>0$ we write
\[
\pi_\sigma = \pi * \mathcal{N}(0,\sigma^2 I_d),
\qquad
s_{\pi,\sigma}(x)=\nabla \log \pi_\sigma(x),
\]
for the Gaussian-smoothed distribution and its score. We model diffusion samplers as follows:

\begin{definition}[Diffusion Sampling]
A diffusion sampling algorithm $\mathcal{A}$ accesses a target distribution $\pi$ {exclusively} via adaptive queries to an oracle for smoothed score estimates $\widehat s_\sigma(x)$, outputting a sample $\widehat X$ whose law aims to approximate $\pi$.
\end{definition}

To guarantee convergence, assumptions on the class of target distributions and the quality of the score estimates are needed. We work under two \textit{strong} conditions (i.e., favorable to the algorithm) that are standard in the diffusion-theory literature; therefore, our lower bounds apply to a broad range of algorithms.

First, we assume the target is a Gaussian smoothing of a bounded-support distribution.

\begin{assumption}[Bounded plus noise]
  \label{ass:bounded}
  For constant $R > 0$ and constant $\gamma \in (0, R / 2)$, there exists a distribution $\pi_{pre}$ supported on
  $[-R, R]^d$ such that \[
    \pi = \pi_{\mathrm{pre}} * \mathcal{N}(0, \gamma^2 I_d).
  \]
\end{assumption}

Bounded-plus-noise is stronger than bounded moments or subgaussian tails, and it automatically implies $M_2=\E[\|X\|_2^2]=O(d)$ as well as bounds on all higher moments.
The additional Gaussian smoothing ensures that $\pi$ is infinitely differentiable and mirrors the smoothed target distributions typically sampled by diffusion models (e.g., via early stopping). Furthermore, this smoothing makes Total Variation (TV) convergence guarantees possible for sampling algorithms.

Second, we assume the score estimate is polynomially $L^p$-accurate. The case $p = 2$ matches the standard assumption in diffusion theory analysis, as it aligns with the standard score matching training objective used in practice~\citep{hyvarinen2005score,vincent2011connection,song2019sgm,ho2020ddpm,song2021score_sde,improvedSample}. We state the assumption for any constant $p \ge 2$ to cover a wider range of algorithms~\citep{L41, xun2025posterior}.

\begin{assumption}[$L^p$-accurate score]
  \label{ass:Lp}
    There exists a constant $p \ge 2$ and $\varepsilon_\err = 1 / \poly(d)$ such that for all $\sigma > 0$,
  \[
    \E_{X\sim \pi_\sigma}\big[\|\widehat s_\sigma(X)-s_{\pi,\sigma}(X)\|_2^p\big]\ \le\ \frac{\varepsilon_\err^p}{\sigma^p}.
  \]
\end{assumption}

Under Assumptions~\ref{ass:bounded} and~\ref{ass:Lp}, the best known upper bound is that $\widetilde O(d)$ queries suffice to achieve small TV error. The bound can be achieved by analyzing the discretization error of standard algorithms like DDPM or DDIM~\citep{benton2023linearconvergence,li2024odt,conforti2023score,li2024sharp}. We restate the result of~\cite{benton2023linearconvergence} below, translating their original KL bound to TV distance to facilitate comparison:

\begin{theorem}[\cite{benton2023linearconvergence}]
    Under Assumptions~\ref{ass:bounded} and~\ref{ass:Lp}, there exists a diffusion sampling algorithm (DDPM) that achieves TV error $0.01$ using $\widetilde O(d)$ queries.
\end{theorem}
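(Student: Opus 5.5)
We plan to derive the statement directly from the KL guarantee of \cite{benton2023linearconvergence} for DDPM with early stopping, followed by Pinsker's inequality. Recall their result: if $\pi$ has finite second moment $M_2$, and the score estimates satisfy the integrated $L^2$ condition $\sum_k h_k\, \E_{\pi_{\sigma_k}}\|\widehat s_{\sigma_k}-s_{\pi,\sigma_k}\|^2 \lesssim \varepsilon_{\mathrm{score}}^2$ along the time grid, then $N$ exponentially-then-linearly spaced steps give
\[
\KL\bigl(\pi_{\delta}\,\|\,\mathrm{Law}(\widehat X)\bigr)\ \lesssim\ \varepsilon_{\mathrm{score}}^2 + \kappa^2 d N + \kappa d + \mathrm{KL}(\pi_T\|\mathcal N(0,T I_d)),
\]
where $\kappa\approx \log(T/\delta)/N$ is the step parameter, $\delta$ the early-stopping noise level, and $T$ the horizon.

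The steps are as follows. First, by Assumption~\ref{ass:bounded}, $M_2 \le d(R^2+\gamma^2)=O(d)$, so the initialization error is $e^{-\Omega(T)}\,(d+M_2)$. Taking $T=\Theta(\log d)$ makes it $\le 10^{-6}$. Second, we convert Assumption~\ref{ass:Lp} to the needed $L^2$ bound. By Jensen's inequality ($p\ge 2$), $\E\|\widehat s_\sigma - s_{\pi,\sigma}\|^2 \le \varepsilon_\err^2/\sigma^2$. Summing against step sizes $h_k\asymp \kappa\,\sigma_k^2$ in the geometric regime, we get $\sum_k h_k \varepsilon_\err^2/\sigma_k^2 \lesssim \varepsilon_\err^2 \log(T/\delta)$. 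This is $o(1)$ since $\varepsilon_\err=1/\poly(d)$. Third, we choose $N=\Theta(d\log^2(d/\delta))$ so that $\kappa^2 dN+\kappa d \lesssim d\log^2(T/\delta)/N$ is a small constant.

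The step needing care is the early stopping, since we want a sample approximating $\pi$ itself, not $\pi_\delta$. Here Assumption~\ref{ass:bounded} is used again. Because $\pi=\pi_{\mathrm{pre}}*\mathcal N(0,\gamma^2 I)$, the smoothed distributions $\pi_\sigma$ are $\pi_{\mathrm{pre}}$ smoothed at level $\sqrt{\gamma^2+\sigma^2}$. We therefore run the diffusion for $\pi_{\mathrm{pre}}$ (bounded support, $M_2\le dR^2$), stopping at noise level exactly $\gamma$. The score at level $\sigma$ for $\pi_{\mathrm{pre}}$-diffusion with $\sigma'^2=\sigma^2-\gamma^2$ is $s_{\pi,\sigma'}$, which the oracle provides.

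The accuracy condition transfers with $\varepsilon_\err^2/\sigma'^2$ in place of $\varepsilon_\err^2/\sigma^2$. Near the end of the schedule, $\sigma'$ can be small, so we must check the sum. We handle this by terminating at $\sigma'=\delta'$ with $\delta'=\gamma/\poly(d)$ and outputting the last iterate. The distribution $\pi_{\delta'}$ is within TV $O(\sqrt d\,\delta'/\gamma)$ of $\pi$, because shifting the Gaussian variance from $\gamma^2$ to $\gamma^2+\delta'^2$ costs TV $O(\sqrt d\,\delta'^2/\gamma^2)$. With this choice the remaining error sum is only polylogarithmically larger and stays $o(1)$. This accounting, namely verifying that the $1/\sigma'^2$ weighting summed against the step sizes remains $\varepsilon_\err^2\cdot\mathrm{polylog}(d)$, is the main obstacle.

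Finally, Pinsker's inequality turns total KL $\le 10^{-4}$ into TV $\le 0.01$, using $N=\widetilde O(d)$ queries.
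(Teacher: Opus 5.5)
Your proposal is correct and matches the paper's route. The paper simply restates the KL guarantee of \cite{benton2023linearconvergence} for DDPM with early stopping and converts it to TV, and your steps supply exactly the details behind that translation: Jensen to pass from the $L^p$ to the $L^2$ score condition, $M_2=O(d)$ with $T=\Theta(\log d)$, the $\gamma$-smoothing in Assumption~\ref{ass:bounded} to make $\TV(\pi_{\delta'},\pi)=O(\sqrt d\,\delta'^2/\gamma^2)$ negligible, and then Pinsker.

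One simplification: the detour of running the diffusion for $\pi_{\mathrm{pre}}$ is unnecessary. Running DDPM on $\pi$ itself with early stopping at $\delta'=\gamma/\poly(d)$ and geometric steps directly gives a score-error sum of order $\varepsilon_\err^2(T+\log(1/\delta'))$. The triangle inequality with $\TV(\pi_{\delta'},\pi)$ then finishes the argument.
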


Our main theorem shows that one cannot reduce the query complexity below  $\widetilde \Omega(\sqrt{d})$ in the worst case, even when allowing TV error as large as 0.99.

\begin{theorem}[Main Theorem]
  \label{thm:main_simplified}
   Any diffusion sampling algorithm under Assumptions~\ref{ass:bounded} and~\ref{ass:Lp} with TV error less than $0.99$ requires at least $\widetilde\Omega(\sqrt{d})$ queries.
\end{theorem}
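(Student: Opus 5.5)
We will prove the statement by a reduction to a hidden-structure search problem with an adversarial oracle. We build a family of targets $\{\pi^{(\theta)}\}$ satisfying Assumption~\ref{ass:bounded}, indexed by a hidden parameter $\theta$, together with a single \emph{reference} oracle $\widehat s_\sigma$ that does not depend on $\theta$. For every $\theta$, this oracle will be a valid $L^p$-accurate estimate (Assumption~\ref{ass:Lp}) at all but a small set of noise levels. The hidden structure lives at an unknown scale. A natural choice is to let $\theta$ select a scale $\sigma^\ast$ among $N\approx\sqrt d/\mathrm{polylog}(d)$ candidate scales in a geometric grid between $\gamma$ and $\poly(d)$. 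The parameter $\theta$ also selects a hidden direction or sign pattern, which determines where the mass of $\pi^{(\theta)}$ sits. For example, $\pi_{\mathrm{pre}}^{(\theta)}$ could be a mixture of the uniform distribution on $[-R,R]^d$ with a small perturbation, or it could be a product distribution with a planted low-dimensional component.

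The key steps are as follows.

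\textbf{Step 1: indistinguishability away from the hidden scale.} We show that for $\sigma$ outside a window $[\sigma^\ast/c,\,c\sigma^\ast]$, the smoothed scores $s_{\pi^{(\theta)},\sigma}$ and $s_{\pi^{(0)},\sigma}$ differ by at most $\varepsilon_{\err}/\sigma$ in $L^p(\pi^{(\theta)}_\sigma)$. At large $\sigma$ the Gaussian convolution washes out the planted structure exponentially. At small $\sigma$ the difference is confined to a region of negligible mass, so we can simply answer with the reference score there and pay only a negligible $L^p$ cost.

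\textbf{Step 2: the window is narrow.} We must make the window multiplicatively narrow, of width $1+O(1/\sqrt d)$ in $\sigma$, so that about $\sqrt d$ disjoint windows fit into a constant-factor range of scales. We get this from concentration of $\|x\|$ in $d$ dimensions. Smoothing at level $\sigma$ spreads mass onto a shell of radius $\sigma\sqrt d$ with width $O(\sigma)$. Planting the structure at a specific radius therefore makes it visible only when $\sigma\sqrt d$ matches that radius to within $O(\sigma)$, which is a relative precision of $1/\sqrt d$. Let $\Delta(\sigma)$ denote the relative $L^p$ score discrepancy between $\pi^{(\theta)}$ and $\pi^{(0)}$ at level $\sigma$. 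The quantitative heart of Step 2 is showing that $\Delta(\sigma)$ decays like $\exp(-\Omega(d\,(\log(\sigma/\sigma^\ast))^2))$ up to polynomial factors. Since $\varepsilon_{\err}=1/\poly(d)$, the sensitive window then has relative width $\widetilde O(1/\sqrt d)$.

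\textbf{Step 3: search lower bound.} The reference oracle answers identically for all $\theta$ unless a query hits the sensitive window of $\theta$. A standard adversary argument then applies: an adaptive algorithm making $q$ queries hits the window of a uniformly random $\sigma^\ast$ with probability at most $q/N$. This uses Yao's principle, with the adaptive queries fixed along the reference transcript.

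\textbf{Step 4: sampling requires finding the scale.} If no window is hit, the output law is the same for all $\theta$. We design the family so that the $\pi^{(\theta)}$ are pairwise nearly disjoint; for instance, the planted component can carry most of the mass at distinct radii or directions. Then a single output law cannot be within TV $0.99$ of more than an $O(1)$ fraction of them. Combining this with Step 3 forces $q=\widetilde\Omega(\sqrt d)$.

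The main obstacle is reconciling Steps 1, 2 and 4. The planted structure must carry most of the mass, so that TV-distinct targets exist. At the same time it must be invisible in $L^p$ score error at every noise level except a window of relative width $\widetilde O(1/\sqrt d)$, and this must hold under bounded support plus $\gamma$-smoothing. To handle this, we will first try radial shells. Using $\theta$, we choose the shell radius $r_\theta\in[\sqrt d,2\sqrt d]$ and also a hidden sign vector, with one shell per grid point. The hidden signs make the score at mismatched levels look like that of an uninformative isotropic shell. We then verify the Gaussian tail estimates on the score discrepancy directly.
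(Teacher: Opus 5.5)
Your skeleton matches the paper's reduction (Theorem~\ref{thm:appendix-rate-engine}), but there is a genuine gap: you never exhibit an instance satisfying Steps~1, 2 and~4 simultaneously, and the mechanisms you suggest fail. The skeleton is: hide which noise level is informative, show that queries outside a narrow window reveal nothing, union-bound over $\widetilde\Omega(\sqrt d)$ packed windows, and show that a sampler that never hits its window fails.

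\textbf{Large-$\sigma$ half of Step~1.} Gaussian smoothing suppresses differences in low-order statistics only polynomially in $\sigma$, not exponentially. A planted structure that shifts a mean, a direction or a radius is therefore visible over a polynomially wide range of scales. For two centered isotropic shells of radii $r_\theta\neq r_{\theta'}$ and $\sigma\gg\sqrt d$, one has $m(x)\approx\operatorname{Cov}(Y)\,x/\sigma^2$. The two scores then differ by about $|r_\theta^2-r_{\theta'}^2|\,\|x\|/(d\sigma^4)\approx|r_\theta^2-r_{\theta'}^2|/(\sqrt d\,\sigma^3)$ at typical $x$. Disjoint windows of relative width $1/\sqrt d$ force $|r_\theta-r_{\theta'}|\gtrsim 1$, hence $|r_\theta^2-r_{\theta'}^2|\gtrsim\sqrt d$. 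This exceeds the allowed $\varepsilon_{\err}/\sigma$ throughout $\sqrt d\ll\sigma\lesssim\varepsilon_{\err}^{-1/2}$, which is a polynomially wide range for, e.g., $\varepsilon_{\err}=d^{-2}$. At these scales the two smoothed laws nearly coincide, so no single reference can be accurate for both. Step~1 thus fails far from every window. The heuristic ``visible only when $\sigma\sqrt d$ matches the radius'' also ignores that the algorithm chooses $x$.

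\textbf{Small-$\sigma$ half of Step~1.} The $L^p$ error is measured under $\pi^{(\theta)}_\sigma$, and Step~4 requires the planted component to carry most of that mass. Below the window, the discrepancy from $s_{\pi^{(0)},\sigma}$ therefore sits on the bulk of $\pi^{(\theta)}_\sigma$, not on a negligible set. Answering with the reference there violates the accuracy guarantee.

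The paper resolves both issues with a different kind of construction. The hidden object is a random codebook $S$ of unknown size $n$, with codewords drawn i.i.d.\ from the null $U$ on the product-circle support. So $\E_S\nu_S=U$, all codewords share the same radial and blockwise structure, and only the sparsity of $S$ relative to the scale varies. Outside the window the oracle is \emph{not} instance-independent: for small $n$ it returns the true planted score on the high-likelihood set $G_\tau(S)$, which has $\nu_{S,\tau}$-mass at least $1-\zeta(\tau)$. Indistinguishability holds only per fixed query point, in probability over $S$. For small $n$, Markov's inequality with $\E_Y L_\tau(Y,x)=1$ and a union bound over codewords give $\Pr_S[\widehat s^{(S)}_\tau(x)\neq s_{U,\tau}(x)]\le en/\Lambda_\tau$. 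For large $n$, the random-coding KL bound plus the heat-flow KL-to-Fisher transfer make the thresholded oracle exactly null with probability at least $1-\delta$. The $1/\sqrt d$ width you anticipate is right, but it is a window in rate space. Fluctuations of the information density $\ell_\tau$ about $dI_\tau$ give a window of $\widetilde O(\sqrt d)$ nats in $\log n$, packed into a range of $\Theta(d\log M)$ nats. Accordingly, Step~3 must be a coupling with the null transcript, with failure probability $\delta$ per query, rather than an identical-answers argument. Step~4 must be the needle bound $\Pr_S[x\in A(S)]\le n_{\max}/\Lambda_\gamma$ for every fixed $x$, not pairwise disjointness. Without such an instance, Steps~1--2 are unproved and the proposal does not establish the theorem.
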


More recently, researchers have shown that beyond Assumptions~\ref{ass:bounded} and~\ref{ass:Lp}, assuming additional Lipschitz regularity of the score can lead to algorithm acceleration, achieving sublinear iteration complexity in $d$~\citep{gupta2024randomized,li2024accelerating,wu2024stochasticrk,li2025improved}. For example,~\cite{zhang2025sublinear} showed that if the score estimates are $L$-Lipschitz, then DDPM can require only $O(L\sqrt{d})$ score queries; \cite{jiaoli2024instancedependent} gave an algorithm with~$\min(d,L^{1/3}d^{2/3},Ld^{1/3})$ queries, where $L$ is the Lipschitz constant of the true scores.

Therefore, a useful perspective is that our hard instance has globally Lipschitz true smoothed scores and score estimates, with a Lipschitz parameter that scales linearly with dimension.

\begin{remark}[Lipschitz scale of the smoothed score]
\label{rmk:lip}
In the hard instance underlying \cref{thm:main_simplified}, for every $\sigma>0$, both $s_{\pi,\sigma}$ and $\widehat s_\sigma$ are globally $O(d)$-Lipschitz.
\end{remark}

This implies that any algorithm with a query upper bound of $\widetilde O(L^a d^b)$ yields a complexity of $\widetilde O(d^{a+b})$ when instantiated on our hard instance.
Theorem~\ref{thm:main_simplified} therefore rules out query upper bounds of the form $\widetilde O(L^a d^b)$ with $a+b<1/2$.

A more formal version of the theorem, which tracks the dependence on $(R,\gamma)$, the oracle accuracy, and the explicit Lipschitz bound, is given in \cref{thm:main_parameterized}.

At the proof level, our lower bound isolates one intrinsic source
of hardness: any diffusion sampler must scan through
$\widetilde\Omega(\sqrt{d})$ distinct noise levels.
This matches what diffusion algorithms do in practice, where scores
are queried along a multiscale noise schedule, and provides a
formal explanation for why such a design is necessary.
Our proof proceeds by reduction to a hypothesis-testing problem:
we construct a null distribution $\pi_{\mathrm{null}}$ and a class
of planted distributions~$\mathcal{D}$, and show that
$\widetilde\Omega(\sqrt{d})$ queries at distinct noise levels are
needed to distinguish whether the score estimates come from
$\pi_{\mathrm{null}}$ or from some $\pi$ drawn
from~$\mathcal{D}$.
Since sampling is at least as hard as this testing task, the lower
bound follows.

Under Assumptions~\ref{ass:bounded} and~\ref{ass:Lp}, a gap
remains between our $\widetilde\Omega(\sqrt{d})$ lower bound and
the best known $\widetilde O(d)$ upper bound.
Closing this gap is an important open problem, and progress could
come from either direction: stronger lower bounds that go beyond
hypothesis testing to directly exploit the difficulty of producing
a sample, or faster algorithms that leverage the structure of the
noise-level scanning bottleneck identified here.

\paragraph{Subexponential  Error Tail.}
While $L^p$ accuracy is the standard assumption, it is also instructive to consider the impact of stronger error tail conditions. Recent works have shown that stronger error tail requirements might yield accelerated algorithms~\citep{gatmiry2026highaccuracy,chen2026highaccuracy}.

\begin{assumption}[Subexponential Error]
\label{ass:psi1}
For all $\sigma > 0$ and all $z\ge 0$, the model can query $\widehat s_\sigma(x)$ with the guarantee that
\[
\Pr_{X\sim \pi_\sigma}\Big[\|\widehat s_\sigma(X)-s_{\pi,\sigma}(X)\|_2\ge z\Big]
\ \le\
2\exp\left(-\frac{z\sigma}{\varepsilon_{\rm err}}\right).
\]
\end{assumption}

We show that, under a constant sub-exponential error model, $\Omega(d^{1/4})$ queries are still necessary.

\begin{theorem}[Subexponential Tail]
  \label{thm:psi1_simplified}
  For any constant $\varepsilon_{\mathrm{err}} > 0$, any algorithm that solves diffusion sampling under Assumptions~\ref{ass:bounded} and~\ref{ass:psi1} with TV error less than $0.99$ requires at least $\Omega(d^{1/4})$ queries.
\end{theorem}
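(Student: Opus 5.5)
We reduce sampling to hypothesis testing, as for \cref{thm:main_simplified}, but recalibrate the construction to the tighter error model of Assumption~\ref{ass:psi1}. Fix a null distribution $\pi_{\mathrm{null}}=\mathcal N(0,(1+\gamma^2)I_d)$ (or a product of bounded-plus-noise coordinates, to meet Assumption~\ref{ass:bounded}). The planted class $\mathcal D$ is indexed by a hidden unit direction $u$ and a hidden noise scale $\sigma^*$, drawn from a grid of about $m$ geometrically spaced levels. Each planted $\pi_{u,\sigma^*}$ equals $\pi_{\mathrm{null}}$ except for a small bump along $u$, for example a mild mean shift or a variance change on a set of mass at least $0.02$. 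This bump is chosen so that:
\begin{enumerate}[(i)]
\item $\TV(\pi_{u,\sigma^*},\pi_{\mathrm{null}})\ge 0.02$ for every planted instance, so any sampler with TV error $<0.99$ yields a tester;
\item for every $\sigma$ outside a window of constant multiplicative width around $\sigma^*$, the true smoothed score $s_{\pi_{u,\sigma^*},\sigma}$ differs from $s_{\pi_{\mathrm{null}},\sigma}$ by a vector that is small except on a set of tiny $\pi_\sigma$-mass.
\end{enumerate}
For such $\sigma$ the oracle returns the null score $\widehat s_\sigma=s_{\pi_{\mathrm{null}},\sigma}$. We then check Assumption~\ref{ass:psi1}: the error tail is bounded by the Gaussian tail of $\langle X,u\rangle$ under $\pi_\sigma$, and this tail is dominated by $2\exp(-z\sigma/\varepsilon_{\mathrm{err}})$ once the bump size is tied to $\sigma$. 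For $\sigma$ inside the window the oracle is exact, so information is revealed only there.

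Next we bound what a query inside the window can reveal. Even at the right scale, the oracle answer depends on $x$ only through $\langle x,u\rangle$ and the scalar shape of the bump. Because $u$ is a uniformly random direction, a query point $x$ chosen without knowledge of $u$ has $|\langle x,u\rangle|\approx \|x\|/\sqrt d$. The answer then differs from the null answer by an amount controlled by a smooth function of that projection. The point of the sub-exponential assumption is that it forces the bump to have constant amplitude; polynomial slack is no longer available to hide it. As a result each query yields a signal of order $1/\sqrt d$ in the relevant scale.

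A standard adaptive argument converts this into the count. We use a hybrid argument, or equivalently a chi-square/KL chain rule over the $q$ adaptive queries, to bound the total variation between the transcripts under the null and under the planted model. This distance is at most the sum of per-query divergences. Each query contributes $\Pr[\sigma_i\text{ hits the window}]\cdot O(\text{signal}^2)$, which is $O(1/m)\cdot O(d^{-1/2})$ up to the scale factors of the construction, or else $O(1/\sqrt d)$ for in-window queries once we account for the projection. Balancing the two constraints, scanning the $m$ noise levels and resolving the direction, and choosing $m\asymp d^{1/4}$ gives a total distance $o(1)$ whenever $q=o(d^{1/4})$. This contradicts (i).

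The main obstacle is the calibration in the first step. Under Assumption~\ref{ass:psi1} with constant $\varepsilon_{\mathrm{err}}$, the oracle can hide the planted structure only at a constant multiplicative score scale, not a polynomially small one. That is why the exponent drops from $1/2$ to $1/4$. I would first try a one-dimensional bump of width about $\sigma^*$ along $u$, and verify the tail condition via a Gaussian concentration bound on the score difference. Only afterwards would I tune $m$ and the bump amplitude so that the hybrid sum yields the threshold $q=\Omega(d^{1/4})$.
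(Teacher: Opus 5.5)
Your proposal has genuine gaps, and they are structural. \emph{First, the reduction in (i) cannot give a lower bound at TV error $0.99$.} If each planted $\pi_{u,\sigma^*}$ is a mild perturbation of $\pi_{\mathrm{null}}$, the zero-query algorithm that outputs $\pi_{\mathrm{null}}$ already has small TV error on every instance. Knowing $\TV(\pi_{u,\sigma^*},\pi_{\mathrm{null}})\ge 0.02$ only turns samplers with error below $0.01$ into testers. What you need is a planted family that is nearly singular with respect to \emph{every fixed} output law. The paper gets this from an exponentially large random codebook $S$ and a set $A(S)$ with $\pi_{S,\gamma}(A(S))\ge 1-\rho/2$ but $\sup_x\mathbb P[x\in A(S)]\le\rho^2/8$ over the random instance, and then couples the true run with the null run.

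\emph{Second, the count cannot reach $d^{1/4}$.} For bounded-plus-noise targets with constant $\varepsilon_{\mathrm{err}}$, only total noise in roughly $[\gamma,O(R\sqrt d)]$ can carry information, since beyond that the worst-case discrepancy $2R\sqrt d/\tau^2$ is within tolerance. Windows of constant multiplicative width therefore give only $O(\log d)$ disjoint scales, so $m\asymp d^{1/4}$ contradicts your own item (ii). \emph{Third, the $d^{-1/2}$ per-query signal is unfounded.} For typical $x$, $\|x\|/\sqrt d$ is of the order of the noise scale, which in the window is the bump width, so there is no $1/\sqrt d$ attenuation. Moreover, a nonzero in-window answer is a $d$-dimensional vector that essentially determines $u$; for a density modulation along $u$ it is exactly parallel to $u$. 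Even the final balancing does not produce the exponent: summing the per-query bounds you state would give thresholds like $m\sqrt d$ or $\sqrt d$, not $d^{1/4}$.

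In the paper the $d^{1/4}$ lives on a different axis. The hidden parameter is the rate $\kappa=\frac1d\log n$ of a random codebook on the product-circle support, which ranges over an interval of constant length $\Theta(\log M)$. At each total noise $\tau$, the masked oracle equals the null score at any fixed $x$ with probability $1-\delta$ unless $\kappa$ lies in an interval near $I_\tau$ of width $O(\sqrt{H/d}+H/d)$. Below the interval, a union bound with $\mathbb E_{Y\sim U}[L_\tau(Y,x)]=1$, together with Bernstein concentration of $\ell_\tau$ over the $d/2$ blocks, shows no codeword has a large likelihood ratio at $x$. Above it, the random-codebook KL bound plus the KL/Fisher inequality push $J_\tau(S)$ below threshold, so the oracle is globally null. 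The $\psi_1$ assumption enters only through the masking level. The unmasked error can be as large as $2R\sqrt d/\tau^2$, so the uncovered mass must satisfy $\zeta(\tau)\le 2\exp(-2R\sqrt d/(\tau\varepsilon_{\mathrm{err}}))$. This forces $H=\Theta(\sqrt d)$ for constant $\varepsilon_{\mathrm{err}}$, hence window width $\Theta(d^{-1/4})$ and $\Omega(d^{1/4})$ well-separated rates that the queries must cover. That mechanism, not a constant-amplitude bump, is why the exponent drops from $1/2$ to $1/4$.
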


\section{Intuition on Hardness}
\label{sec:intuition}
To build intuition, we consider a simple hard family: mixtures of $n$ well-separated Gaussians.
Concretely, pick a uniformly random set of centers $S\subseteq\Hd$ with $|S|=n$, and define
\[
\pi_{\rm pre}=\Unif(S), \qquad \pi=\pi_{\rm pre}*\mathcal N(0,0.1 I_d).
\]
Clearly, any nontrivial sampler must locate at least one center in $S$ using score queries.
Our proof shows that this is only possible when the algorithm queries a \emph{narrow window} of smoothing levels determined by $n$.
Since $n$ is hidden and may range over $2^{\Theta(d)}$ possibilities, the sampler is forced to scan many different noise levels.
 
{Throughout this section, we write $\tau$ for the total noise
level, setting $\pi_\tau := \pi_{\rm pre} * \mathcal N(0, \tau^2
I_d)$ with score $s_{\pi,\tau}$, and write $s^{\rm null}_\tau$
for the null score obtained by replacing $S$ with $\Hd$.}
 
By Tweedie's formula, for $X=Y+Z$ with $Y\sim\pi_{\rm pre}$, $Z\sim\mathcal{N}(0,\tau^2 I_d)$, so that $X\sim\pi_\tau$, the score can be rewritten as
\[
s_{\pi, {\tau}}(x)=\frac{m(x)-x}{ {\tau}^2},
\]
where $m(x) = \E[Y\mid X=x]$.
Since $Y$ is supported on $S$, $m(x)$ is a posterior average over the modes $y\in S$, with weights proportional to
$\exp(-\|x-y\|_2^2/(2 {\tau}^2))$.
Thus a score query is informative only insofar as this posterior average \emph{depends on the planted set $S$}.

As a reference,  {recall that} $s^{\rm null}_{ {\tau}}$  {is} the score one would obtain if the adversary pretended $S=\Hd$ (so the answer carries no information about $S$).
We will see that for most $ {\tau}$, an adversary can answer essentially with $s^{\rm null}_{ {\tau}}$ while still satisfying the polynomial $L^p(\pi_{ {\tau}})$ accuracy guarantee, forcing the sampler to scan across~$ {\tau}$.

\paragraph{Warmup: Gaussian as a sphere.}
A Gaussian in $\R^d$ concentrates on a thin shell: $\|Z\|_2\approx  {\tau}\sqrt d$.
Imagine that it were \emph{exactly uniform} on the sphere of radius $ {\tau}\sqrt d$.
Then for fixed $(x, {\tau})$, $m(x)$ would simply average the sampled centers lying in the corresponding sphere around $x$.
Let $N_{x, \tau}$ denote the expected number of points of $S$ falling in this sphere.
 
This yields two uninformative extremes:
\begin{itemize}
\item 
If $N_{x, \tau}\gg \poly(d)$, with high probability the averaging washes out the instance: by concentration, $m(x)$ becomes $1/\poly(d)$-close to the null answer, so an adversary can answer with $s^{\rm null}_{ {\tau}}$.
\item 
If $N_{x, \tau}\ll 1/\poly(d)$, then with high probability over $S$ there are no points in the sphere. This implies that $\pi_{ {\tau}}(x)$ has tiny mass, and the $L^p(\pi_{ {\tau}})$ guarantee does not constrain the oracle on these points. The adversary can again answer with $s^{\rm null}_{ {\tau}}$.
\end{itemize}
 
Either way, a query at this $ {\tau}$ reveals essentially nothing except that the chosen smoothing level is ``wrong.''

To get more information, $ {\tau}$ must be tuned so that $N_{x, \tau}$  lies within an intermediate window of a $\poly(d)$ range. Note that $N_{x, \tau}$ scales linearly with the underlying $n$. Therefore, each query would only rule out the range of $n$ up to a $\poly(d)$ factor, and there are \[
\log_{\poly(d)} (2^{\Theta(d)}) = \Theta(d/\log d)
\]
such ranges to check. Therefore, we need to query $\Theta(d/\log d)$ distinct smoothing levels to get one ``not null'' answer that actually helps sampling.

\paragraph{The real Gaussian gives $\sqrt d$.}
 
The analysis above fails in one place: a Gaussian is \emph{not} uniform even on its thin shell.
With $1 - 1/\poly(d)$ probability,
\[
\|Z\|_2^2= {\tau}^2 d \pm \wt{O}( {\tau}^2\sqrt d),
\]
so the posterior is influenced by a whole band of squared distances of width $\Theta( {\tau}^2\sqrt d)$, and the mass outside is negligible.

Pick two squared radii $r_\pm^2= {\tau}^2 d \pm \Theta( {\tau}^2\sqrt d)$ within this typical band.
The per-center likelihood weights satisfy
\[
\frac{\exp(-r_-^2/(2 {\tau}^2))}{\exp(-r_+^2/(2 {\tau}^2))}
=\exp\!\left(\Theta\!\left(\frac{r_+^2-r_-^2}{ {\tau}^2}\right)\right)
=\exp(\Theta(\sqrt d)).
\]
 
At the same time, the size of the corresponding Hamming shell at radius $r_+$ versus $r_-$ differs by
$\exp(\Theta(\sqrt d))$ as well.
 
When $S$ is a random subset of size $n$, this compensation creates a gap between the two thresholds in the counting story:
as $n$ grows, one first exits the ``empty'' regime by hitting the high-surface-area part of the band, but only after an additional
$\exp(\wt\Theta(\sqrt d))$ factor in $n$ does one reach the true ``averaging'' regime, where even the low-surface-area (but high-weight) part is well populated.
Equivalently, for fixed $ {\tau}$ the informative window can pin down $n$ only up to an $\exp(\wt\Theta(\sqrt d))$ multiplicative factor.

Therefore we must scan
\[
\log_{\exp(\wt\Theta(\sqrt d))}\!\big(\exp(\Theta(d))\big)=\wt\Theta(\sqrt d)
\]
distinct smoothing levels to reliably hit an informative scale, yielding the $\wt{\Omega}(\sqrt d)$ lower bound.
 
In the simplified hard instance above, each coordinate of the codebook takes values
in $\{-1,+1\}$, giving a support of size $2^d$.
The formal construction replaces each
pair of coordinates with $M = \Theta(R/\gamma)$ evenly spaced
points on a circle and takes the Cartesian product over all $d/2$
blocks, giving a support of size $M^{d/2}$.
The range of hidden codebook sizes therefore spans
$\Theta(d\log M)$ on the logarithmic scale rather than
$\Theta(d)$; this is how the dependence on $R/\gamma$ enters the
parameterized bound.

Now we state a parameterized version of \cref{thm:main_simplified}.

\begin{restatable}{theorem}{MainParameterized}
  \label{thm:main_parameterized}

For every $p>0$ and $\rho\in(0,1/4)$ there exists
  $c=c(p,\rho)>0$ such that the following holds.
  Let $R>0$ and $\gamma\in(0,R/2)$.
  For all sufficiently large $d$ and every
  $\varepsilon_{\rm err}\in(0,1]$, there exists a distribution
  $\mathcal D$ over pairs
  $(\pi,\{\widehat s_\sigma\}_{\sigma>0})$ on $\mathbb R^d$
  satisfying:
  \begin{enumerate}[label=(\arabic*)]
    \item \textbf{(bounded-plus-noise)}\;
      $\pi=\pi_{\rm pre}*\mathcal N(0,\gamma^2I_d)$ for some
      $\pi_{\rm pre}$ supported on $[-R,R]^d$.

    \item \textbf{($L^p$-accurate score oracle)}\;
      For every $\sigma>0$,
      \[
        \E_{X\sim \pi_\sigma}
        \big[\|\widehat s_\sigma(X)-s_{\pi,\sigma}(X)\|_2^p\big]
        \le \frac{\varepsilon_{\rm err}^p}{\sigma^p}.
      \]

    \item \textbf{(global Lipschitzness)}\;
      For every $\sigma>0$, both $s_{\pi,\sigma}$ and
      $\widehat s_\sigma$ are globally $L_\sigma$-Lipschitz with
      \[
        L_\sigma\le \frac{3}{\gamma^2+\sigma^2}
        +\frac{7R^2d}{(\gamma^2+\sigma^2)^2}.
      \]

    \item \textbf{(query lower bound)}\;
      Every algorithm making at most
      \[
        Q\le c\cdot \frac{d\log(R/\gamma)}{\sqrt{dH}+H},
\qquad
H:=\log(d/\varepsilon_{\rm err})+\log(R/\gamma).
      \]
      adaptive score queries satisfies
      \[
        \Pr_{(\pi,\{\widehat s_\sigma\})\sim\mathcal D}
        \Big[\TV\big(\widehat X,\pi\big)\ge 1-\rho\Big]
        \ge 1-\rho.
      \]
  \end{enumerate}
    \end{restatable}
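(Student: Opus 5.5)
We reduce sampling to a hypothesis test and build the hard distribution $\mathcal D$ from a random codebook. Partition the $d$ coordinates into $d/2$ planar blocks and place $M=\Theta(R/\gamma)$ evenly spaced points on a circle of radius $\Theta(R)$ in each block. This gives a product grid $G\subset[-R,R]^d$ of size $M^{d/2}$. Let $K:=\frac d2\log M$. We draw a scale $k$ uniformly from a grid of $\Theta(K/(\sqrt{dH}+H))$ values of $\log n$, spaced $\Delta:=C(\sqrt{dH}+H)$ apart inside $[K/4,K/2]$. Then we draw a uniformly random $S\subseteq G$ with $|S|=n=e^{k}$, and set $\pi_{\rm pre}=\Unif(S)$ and $\pi=\pi_{\rm pre}*\mathcal N(0,\gamma^2 I_d)$. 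Item (1) holds by construction. The null is $S=G$, with smoothed score $s^{\rm null}_\tau$, where $\tau^2=\gamma^2+\sigma^2$. Item (3) follows from Tweedie: $\nabla s=(\mathrm{Cov}(Y\mid X)/\tau^2-I)/\tau^2$ and $\|\mathrm{Cov}(Y\mid X)\|_{op}\le\|Y\|^2\le R^2d$, which gives the stated $L_\sigma$. The oracle will be a smooth interpolation between $s_{\pi,\sigma}$ and $s^{\rm null}_\sigma$, so the same Hessian bound applies to $\widehat s_\sigma$ with constant slack.

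The core step is a per-query lemma. Fix $\tau$. We show that for all $k$ except those in a window $W(\tau)$ of length $O(\sqrt{dH}+H)$ in $\log n$, an oracle can answer $\widehat s_\sigma=s^{\rm null}_\sigma$ (or a fixed function independent of $S$) at every point the algorithm could query, while still satisfying (2). The argument follows \cref{sec:intuition}. Write $m(x)$ as a ratio of sums over $y\in S$ of $\exp(-\|x-y\|^2/2\tau^2)$, and decompose by the squared distance $r^2$. Gaussian concentration (at probability $1-e^{-H}$) restricts the relevant $r^2$ to a band $\tau^2 d\pm O(\tau^2\sqrt{dH}+\tau^2H)$. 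Across this band, the weights and the counts of grid points at distance $r$ change by $e^{O(\sqrt{dH}+H)}$ at most. This gives two regimes.

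\emph{Averaging regime.} If $n$ exceeds the top of the window, every band shell contains $\gg\poly(d/\varepsilon_{\rm err})$ expected points of $S$. Hypergeometric concentration (Bernstein applied to numerator and denominator), together with a union bound over a net of $x$ using the Lipschitz bound, gives $\|m_S(x)-m_G(x)\|\le \varepsilon_{\rm err}\tau/\poly$ uniformly on the typical set. So the null answer is $L^p$-accurate there. Off the typical set, both scores are bounded by $O(\|x\|+R\sqrt d)/\tau^2$, and the tail probability $e^{-H}$ absorbs this in $L^p$.

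\emph{Empty regime.} If $n$ is below the bottom of the window, the set of $x$ where $S$ contributes non-negligibly is disjoint from the typical set of $\pi_\tau$ restricted to the null's high-mass region. We define $\widehat s$ to equal $s_{\pi,\sigma}$ on the $\pi_\sigma$-typical set (within $O(\tau\sqrt{dH})$ of $S$) and $s^{\rm null}$ elsewhere, interpolated smoothly. Then (2) holds trivially on the typical set. We must also check that, with high probability over $S$, the algorithm's query points (which are chosen without knowing $S$) land in the null region. This follows because a fixed $x$ is within band distance of some point of $S$ with probability at most $n\cdot|\text{shell}|/|G|\le e^{-H}$.

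Given the lemma, the standard adaptive argument finishes the proof. Run the algorithm against the null oracle, which gives a deterministic transcript of $Q$ query levels $\tau_1,\dots,\tau_Q$. The union of their windows covers at most $Q\cdot O(\sqrt{dH}+H)/\Delta$ of the grid of scales. So when $Q\le c\,d\log(R/\gamma)/(\sqrt{dH}+H)$, with probability $1-\rho/2$ over $k$ the planted oracle coincides with the null oracle on every query, up to a failure probability $Qe^{-H}$ over $S$. The output $\widehat X$ then has the same law as under the null, independent of $S$. Since $\pi$ puts mass $1-o(1)$ within $O(\gamma\sqrt d)$ of $S$, while a fixed law hits that neighborhood with probability $\le n\,e^{-\Omega(d)}\cdot$(volume ratio) $=o(1)$ for $n\le|G|^{1/2}$, we get $\TV\ge1-\rho$. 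Markov over $\mathcal D$ then gives item (4).

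I expect the main obstacle to be the per-query lemma, specifically the \emph{empty regime}. Defining an oracle that is $L^p$-accurate for $\pi_\sigma$, globally Lipschitz, and still agrees with the null at all adaptively chosen points requires a careful smooth cutoff. We also need the shell-counting estimate on the circle-product grid to pin down the window width at exactly $\sqrt{dH}+H$.
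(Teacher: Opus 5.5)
Your outer architecture matches the paper's: null-transcript coupling, per-$\tau$ windows of width $O(\sqrt{dH}+H)$ in $\log n$, packing of the $\log n$ axis, and base-noise separation. The gap is in the per-query lemma, and it is not merely technical.

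\textbf{The empty regime fails.} You gate by Euclidean distance to $S$ and use $\Pr_S[x\text{ within band distance of }S]\le n\,|\mathrm{shell}(x)|/|G|$. This implicitly treats $|\mathrm{shell}(x)|$ as its typical value $|G|e^{-dI_\tau+O(\sqrt{dH})}$. But the query point is the algorithm's choice, and $|\mathrm{shell}(x)|$ varies by $e^{\Theta(d)}$ over $x$.

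For example, take $\gamma\ll\tau\ll R$ and let $x$ lie exactly on the radius-$R$ circle in every block. For a typical $x=y+Z$, the radial parts of $Z$ use up about $\tau^2$ of the per-block budget $2\tau^2$. The on-circle point can spend the whole budget on angular offsets, and a Chernoff count makes its shell larger by a factor of about $2^{d/4}$. So for every $n$ with $\log n$ in a range of length about $\frac d4\log 2$ below $dI_\tau$, this fixed query is within band distance of many points of $S$ with high probability. Your oracle then returns $s_{S,\tau}(x)\neq s_{U,\tau}(x)$, the window has length $\Theta(d)$, and the packing step gives no dimension dependence.

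The missing idea is to normalize by the null density and gate on the likelihood ratio. Reveal the planted correction only where $\max_{y\in S}\varphi_{\tau^2}(x-y)/u_\tau(x)\ge\Lambda_\tau$.
\begin{itemize}
\item Because $\E_{Y\sim U}[\varphi_{\tau^2}(x-Y)/u_\tau(x)]=1$ for \emph{every} $x$, Markov and a union bound give $\Pr_S[\text{reveal at }x]\le en/\Lambda_\tau$ uniformly over query points.
\item Choose $\Lambda_\tau$ as a lower $\zeta$-quantile of the information density $\ell_\tau(Y,X)$, which is a sum of $d/2$ i.i.d.\ sub-exponential block terms. By the symmetry of $\Vset$, this keeps mass $1-\zeta$ of $\nu_{S,\tau}$ in the revealed set, and it gives $\log\Lambda_\tau\ge dI_\tau-O(\sqrt{dH}+H)$ with no shell counting at all.
\item Since $\nabla_x\log(\varphi_{\tau^2}(x-y)/u_\tau(x))=(y-m_U(x))/\tau^2$, the cutoff is $2R\sqrt d/\tau^2$-Lipschitz, which is what item (3) needs.
\end{itemize}
The same normalization gives the bound $\sup_x\Pr_S[x\in A(S)]\le n/\Lambda_\gamma$ that your final separation step actually requires. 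A volume ratio does not control an arbitrary fixed output law.

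\textbf{The averaging regime has a separate problem.} Uniform closeness of $m_S$ to $m_G$ on the typical set cannot come from a union bound over a net. The net has $e^{\Theta(d\log d)}$ points. For a fixed $x$, the failure event that some point of $S$ lands unusually close and dominates the posterior has probability of order $\Pr[\ell_\tau\ge\log n-O(1)]$. This is only $e^{-O(H)}$ when $\log n-dI_\tau=O(\sqrt{dH}+H)$.

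Since $L^p$ accuracy is an average over $X\sim\nu_{S,\tau}$, you only need to control $\E_S\E_{X\sim\nu_{S,\tau}}\|s_{S,\tau}-s_{U,\tau}\|_2^2$. Then apply Markov over $S$ to decide between answering null everywhere and using the masked oracle. The paper does this with two ingredients:
\begin{itemize}
\item the soft-covering bound $\E_S\KL(\nu_{S,\tau}\|\nu_{U,\tau})\le e^{-a}+\log(1+|\Vset|)\Pr[\ell_\tau\ge\log n-a]$, which handles the dependence of $X$ on its own codeword by exchangeability;
\item a heat-flow estimate that converts KL at $\wt\tau=\sqrt{1-\gamma^2/(R^2d^2)}\,\tau$ into relative Fisher information at $\tau$.
\end{itemize}
A direct truncation of heavy terms could also work, but the net-based uniform bound does not.
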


\begin{figure}[t]
  \centering
  \begin{subfigure}[t]{0.485\textwidth}
    \centering
    \includegraphics[width=\textwidth]{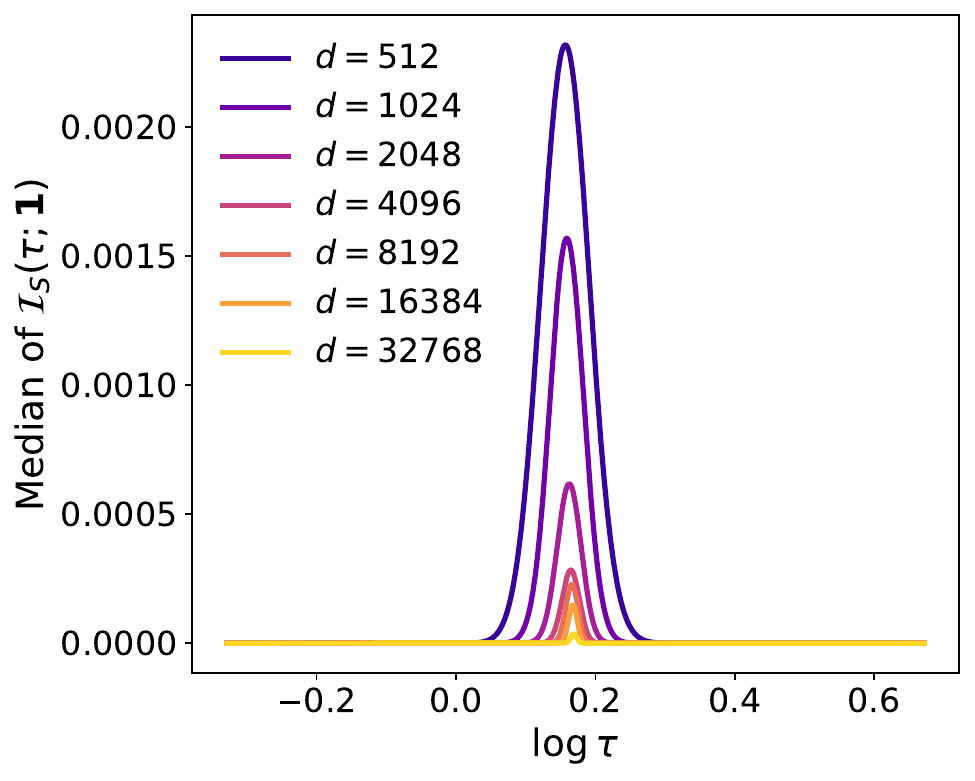}
    \caption{Median of the shell-resolved proxy for
    $\mathcal{I}_S(\tau;\,\mathbf{1})$,
    for $d=512$ to $32768$.}
    \label{fig:synthetic-signal}
  \end{subfigure}
  \hfill
  \begin{subfigure}[t]{0.45\textwidth}
    \centering
    \includegraphics[width=\textwidth]{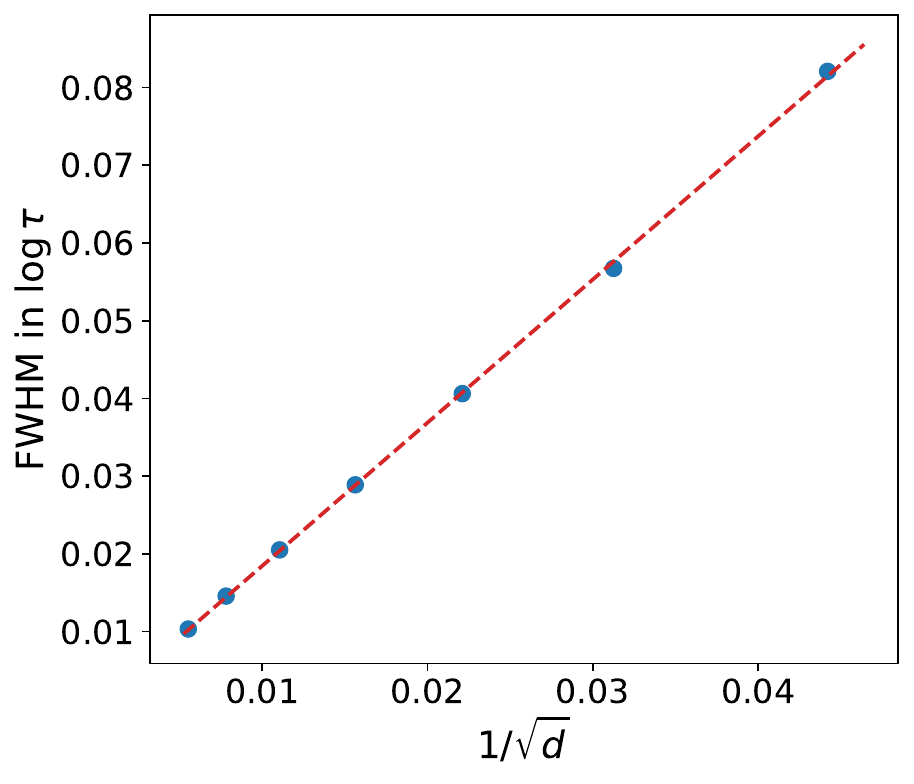}
    \caption{FWHM versus $1/\sqrt{d}$.}
    \label{fig:synthetic-width}
  \end{subfigure}
  \caption{\textbf{Informative window on the hypercube warm-up
  family} ($\rho=0.2$).
  The signal concentrates near a single smoothing scale and narrows
  as $d$ grows; the width scales as $1/\sqrt{d}$, confirming the
  predicted $d^{-1/2}$ rate.
  See Appendix~\ref{app:synthetic} for the precise definition of
  the proxy.}
  \label{fig:main-synthetic}
\end{figure}

\paragraph{Synthetic illustration.}
Figure~\ref{fig:main-synthetic} visualizes the informative-window
phenomenon on the hypercube warm-up family described above.
For a random codebook~$S$ drawn from a Poissonized random-coding
ensemble with expected size $e^{\rho d}$ on $\{-1,+1\}^d$
(with $\rho=0.2$), consider the local signal at the fixed query
point $x = \mathbf{1}$:
\[
\mathcal{I}_{S}(\tau;\,\mathbf{1})
\;:=\;
\frac{\pi_\tau(\mathbf{1})}{\pi^{\mathrm{null}}_\tau(\mathbf{1})}
\;\cdot\;
\frac{1}{d}\,
\bigl\|s_{\pi,\tau}(\mathbf{1})-s^{\mathrm{null}}_{\tau}(\mathbf{1})
\bigr\|_2^2,
\]
where $\pi_\tau$ is the smoothed density of the mixture defined
by~$S$ and $\pi^{\mathrm{null}}_\tau$ is the smoothed null density
(corresponding to $S = \{-1,+1\}^d$), with $s_{\pi,\tau}$ and
$s^{\mathrm{null}}_\tau$ their respective scores.
This measures the per-coordinate Fisher signal at $x=\mathbf{1}$
between the planted and null distributions at noise level~$\tau$.
When $\mathcal{I}_{S}(\tau;\,\mathbf{1})$ is negligible, a query
at noise level~$\tau$ reveals no information about~$S$ at this
point.

\cref{fig:synthetic-signal} plots a tractable proxy for
$\mathcal{I}_{S}(\tau;\,\mathbf{1})$, obtained by conditioning on
shell occupancies
(see Appendix~\ref{app:synthetic} for details).
The signal concentrates around a single matching scale and becomes
progressively narrower as $d$ grows.
\cref{fig:synthetic-width} confirms that the full width at half
maximum scales as $1/\sqrt{d}$, consistent with the $d^{-1/2}$
prediction from the shell fluctuation argument above.
As dimension grows, a sampler must therefore search through an
increasing number of noise levels to find the narrow informative
region.

\section{Related Work}
\label{sec:related}

While a rapidly growing literature studies the iteration complexity of diffusion sampling under various assumptions~\cite{chen2022samplingiseasy, debortoli2022convergence, conforti2023score,lee2022polynomialcomplexity, chen2023probability,lee2023convergence,chen2023improved, chen2023restoration}, lower bounds remain much less developed than convergence analyses.
We discuss two directions that are closest in spirit: 
iteration lower bounds for specific diffusion algorithms and query lower bounds in the classical sampling literature.

\subsection{Iteration Lower Bounds for Diffusion Models}
\label{sec:diffusion_lower}

Existing algorithmic lower bounds typically fix a discretization or a noise schedule.
\cite{jiao2025optimalddpm} show that under a standard DDPM discretization schedule, even with exact scores and a Gaussian target, the KL error of the terminal marginal satisfies $\mathrm{KL} \gtrsim d/T^2$, which forces $T=\Omega(\sqrt d/\varepsilon)$ steps to achieve $\mathrm{KL}\le \varepsilon^2$.
Related Gaussian lower bounds in Wasserstein metrics appear in \cite{gao2025wasserstein}.
These results identify barriers for specific solvers, but their analyses are tied to particular discretizations of specific stochastic processes and therefore cannot be generalized to the algorithm-agnostic lower bounds we seek.

It is also worth mentioning a potential route toward dimension-dependent query lower bounds via tensorization.
For a product target $p^{\otimes d}$, KL divergence is additive:
\[
  \KL(p^{\otimes d}\|q^{\otimes d}) = d \cdot \KL(p\|q),
\]
so achieving $\varepsilon$-KL accuracy on $p^{\otimes d}$ requires $(\varepsilon/d)$-KL accuracy on each marginal~\citep{benton2023linearconvergence}.
If one-dimensional score-based sampling required $\mathrm{poly}(1/\varepsilon)$ queries, this would yield dimension-dependent lower bounds even for product targets.
However, recent results~\citep{chen2026highaccuracy} achieve $\mathrm{polylog}(1/\varepsilon)$ query complexity in one dimension under $L^2$-accurate score estimates, ruling out polynomial one-dimensional lower bounds in this oracle model.

\subsection{Query Lower Bounds for Sampling}

A classical oracle model writes the target as $\pi(x)\propto e^{-V(x)}$ and grants pointwise access to $V(x)$ and possibly its derivatives~\citep{chewi2021query1d}.
With first-order access, querying $\nabla V(x)$ is equivalent to querying the exact unsmoothed score
\[
  s_0(x):=\nabla\log\pi(x) = -\nabla V(x).
\]

In the strongly log-concave and smooth setting, \cite{chewi2023querylogconcave} prove dimension-dependent lower bounds that already appear for Gaussian targets.
In particular, for a centered Gaussian with condition number $\kappa$, any algorithm with first-order oracle access needs
\[
  \widetilde{\Omega}\big(\min\{\sqrt{\kappa}\log d,\, d\}\big)
\]
queries to achieve constant accuracy, matching block-Krylov type methods up to logarithmic factors. Sharp lower bounds are also known in one dimension~\citep{chewi2021query1d}.

For general non-log-concavity, it is known that $\exp(\Omega(d))$ queries are needed. 
A typical construction hides probability mass in one of exponentially many well-separated regions of $\R^d$, so that local values and gradients are essentially indistinguishable unless the algorithm queries near the right region~\citep{LRG, he2025nonlogconcave, chewi2022fisherinfo}.

Diffusion-style access changes the picture because the oracle is smoothed.
Querying $s_{\pi,\sigma}=\nabla\log(\pi*\mathcal N(0,\sigma^2I))$ aggregates information from an $\mathcal O(\sigma)$ neighborhood.
Large $\sigma$ can reveal coarse global structure and rule out many candidate regions at once, while small $\sigma$ refines local details.
Therefore, our lower bounds require a different approach, and we show that the bottleneck is identifying an informative smoothing scale $\sigma$.

\section{Applicability to Practical Algorithms}
\label{sec:applicability}

Our oracle model is intended to capture the inference-time information available to a broad class of
practical diffusion and diffusion-adjacent samplers: repeated evaluations of a learned network at
user-chosen states and noise levels, without any additional access to the target distribution $\pi$.

\paragraph{Common network parameterizations are score-equivalent.}
In standard diffusion setups, a network may be parameterized as a denoiser/$x_0$-predictor, a noise predictor, or directly as a score/drift model.
All of these are information-equivalent to a smoothed-score oracle, up to known, noise-dependent affine transformations.
Concretely, under additive Gaussian corruption, the denoiser (posterior mean) $D_\sigma(x)$ and the noise predictor $\varepsilon_\sigma(x)$ satisfy the standard identities
\[
  s_{\pi,\sigma}(x) \;=\; \frac{D_\sigma(x)-x}{\sigma^2},
  \qquad
  \varepsilon_\sigma(x) \;=\; -\sigma\, s_{\pi,\sigma}(x),
\]
and hence any learned approximation of $D_\sigma$ or $\varepsilon_\sigma$ can be converted into a score estimate $\widehat s_\sigma$ by the same formulas.
These conversions are routinely used in diffusion models such as DDPM/DDIM and score-SDE formulations (up to conventional rescalings that depend only on the noise/time parameterization)
\citep{ho2020ddpm,song2021score_sde, karras2022edm, ddim}.
Therefore, up to constant factors, each neural network evaluation can be counted as one oracle query.
(Methods that use higher-order updates or corrector steps may spend multiple network evaluations per
macro-step, but the relevant complexity measure for our results is the \emph{total} number of evaluations.)

Consequently, \cref{thm:main_simplified,thm:psi1_simplified} can be interpreted as lower bounds on
the number of neural network evaluations required by any inference procedure whose only access to
$\pi$ is through repeated evaluation of such score-equivalent networks, regardless of how the updates
are organized or whether the method is deterministic or randomized.

\paragraph{SDE/ODE-based diffusion samplers.}
This scope includes both classic reverse-time SDE discretizations and ODE-based solvers.
Concretely, it covers DDPM and its variants, predictor--corrector samplers, the probability flow ODE,
and generic integrators such as DPM-Solver, PNDM, DEIS, UniPC, and EDM-style solvers
\citep{song2021score_sde,lu2022dpmsolver,lu2022dpmsolverpp,liu2022pndm,zhang2022deis,zhao2023unipc,karras2022edm}.
These methods differ in discretization order, step-size selection, and whether the path is formulated
as an SDE or an ODE, but they all obtain information about $\pi$ through the same primitive:
evaluating a learned score (or an equivalent drift derived from it) at adaptively chosen states and noise levels.

\paragraph{Flow matching and rectified flow in the linear Gaussian interpolation setting.}
Our oracle model also captures the standard rectified-flow / flow-matching setup with a Gaussian prior
and linear interpolation. Specifically, let $X_1\sim \pi$ and $X_0\sim\mathcal N(0,I_d)$ be independent and define
\[
  X_t = tX_1 + (1-t)X_0,\qquad t\in(0,1).
\]
Rectified flow defines the velocity field
\[
  v_t(x) := \E[X_1-X_0 \mid X_t=x].
\]
A direct calculation using Tweedie's formula \citep{robbins1956empirical,efron2011tweedie} shows that
$v_t$ is an explicit affine transform of the Gaussian-smoothed score:
\[
  v_t(x)=\frac{x}{t}+\frac{1-t}{t^2}\, s_{\pi,\sigma_t}(x/t),
  \qquad \sigma_t=\frac{1-t}{t}.
\]
Equivalently, for each fixed $t$, querying $v_t(x)$ is invertibly equivalent to querying
$s_{\pi,\sigma_t}$ at the rescaled input $x/t$. Therefore, any sampler whose inference-time access to $\pi$
is only through evaluations of such velocity fields inherits the same query lower bounds
\citep{lipman2022flowmatching,liu2022rectified, benton2023error}.

\paragraph{What is not covered.}
Our lower bounds apply to any algorithm whose only source of
information about $\pi$ at inference time is score-equivalent
evaluations at adaptively chosen states and noise levels,
regardless of what computation the algorithm performs between
or after queries.
In contrast, approaches such as progressive distillation
\citep{salimans2022progressive} or consistency
models \citep{song2023consistency} may fall outside this model:
their training procedures can encode additional global information
about $\pi$ into a new mapping, so that each inference-time
evaluation is no longer a score-equivalent query.
For hybrid systems that combine a partially distilled backbone
with residual score-correction steps, applicability depends on
whether the correction steps are the sole source of information
about $\pi$ at inference time; if so, the lower bound applies to
the number of such correction queries.

\section{Conclusion}
In this work, we establish the first information-theoretic lower bounds for diffusion sampling. We prove that for $d$-dimensional distributions, acquiring a nontrivial sample using $L^p$-accurate score estimates requires $\widetilde{\Omega}(\sqrt{d})$ adaptive score queries, even under favorable assumptions of smoothness and polynomial estimation accuracy.
These results show that the dependence of the computational cost on dimension is a fundamental barrier intrinsic to the score-based sampling paradigm, rather than a limitation of current solvers.

At a technical level, our proof shows that any sampler must query
$\widetilde{\Omega}(\sqrt{d})$ distinct noise levels, providing
an information-theoretic rationale for the multiscale noise
schedules used in practice.
The key mechanism is that the score signal is localized within a
narrow window of noise levels; outside this window, oracle
responses carry essentially no information about the target.
This means that the bottleneck for few-step sampling is not
discretization error but the number of noise levels queried.

Our work raises interesting open directions for future research:
\begin{enumerate}
\item \textbf{Tightening the Bound.}
There remains a gap between our $\widetilde\Omega(\sqrt{d})$ lower
bound and the best known $\widetilde O(d)$ upper bounds.
Determining the precise query complexity is a major open question.
On the lower-bound side, our reduction to hypothesis testing
suggests a natural avenue: directly lower-bounding the difficulty
of producing a sample, rather than extracting a single bit.
On the upper-bound side, the noise-level scanning bottleneck
identified here may inform the design of faster algorithms.
    \item \textbf{Beyond the Smoothed-Score Oracle.}
Our lower bounds apply when inference-time access to the target is only through repeated evaluations of a Gaussian-smoothed score, or an explicitly equivalent field as in rectified flow.
A natural next step is to extend lower bounds to paradigms that change this primitive, including flow matching with non-Gaussian priors or alternative couplings, and distilled generators that produce samples in very few steps.

    \item \textbf{Real-World Structure.} The empirical success of few-step sampling suggests that real-world data possess structures stronger than generic smoothness. Identifying specific geometric properties, such as low intrinsic dimension, that enable practical algorithms to break the worst-case $\sqrt{d}$ barrier is a promising direction for bridging theory and practice.
\end{enumerate}

\section*{Acknowledgments}

This work is supported by the NSF AI Institute for Foundations of Machine Learning (IFML).
\printbibliography

@article{efron2011tweedie,
  title        = {Tweedie's Formula and Selection Bias},
  author       = {Efron, Bradley},
  journal      = {Journal of the American Statistical Association},
  volume       = {106},
  number       = {496},
  pages        = {1602--1614},
  year         = {2011},
  doi          = {10.1198/jasa.2011.tm11181}
}

@inproceedings{robbins1956empirical,
  title        = {An Empirical Bayes Approach to Statistics},
  author       = {Robbins, Herbert},
  booktitle    = {Proceedings of the Third Berkeley Symposium on Mathematical Statistics and Probability},
  volume       = {1},
  pages        = {157--163},
  publisher    = {University of California Press},
  address      = {Berkeley, CA},
  year         = {1956},
  editor       = {Neyman, Jerzy}
}

@article{hyvarinen2005score,
  title   = {Estimation of Non-Normalized Statistical Models by Score Matching},
  author  = {Hyv{\"a}rinen, Aapo},
  journal = {Journal of Machine Learning Research},
  volume  = {6},
  number  = {24},
  pages   = {695--709},
  year    = {2005}
}

@article{vincent2011connection,
  title={A Connection Between Score Matching and Denoising Autoencoders},
  author={Vincent, Pascal},
  journal={Neural Computation},
  volume={23},
  number={7},
  pages={1661--1674},
  year={2011}
}

@inproceedings{song2019sgm,
  title     = {Generative Modeling by Estimating Gradients of the Data Distribution},
  author    = {Song, Yang and Ermon, Stefano},
  booktitle = {Advances in Neural Information Processing Systems},
  year      = {2019}
}

@inproceedings{ho2020ddpm,
  title     = {Denoising Diffusion Probabilistic Models},
  author    = {Ho, Jonathan and Jain, Ajay and Abbeel, Pieter},
  booktitle = {Advances in Neural Information Processing Systems},
  year      = {2020}
}

@inproceedings{nichol2021improved,
  title     = {Improved Denoising Diffusion Probabilistic Models},
  author    = {Nichol, Alex and Dhariwal, Prafulla},
  booktitle = {Proceedings of the 38th International Conference on Machine Learning},
  year      = {2021}
}

@inproceedings{song2021score_sde,
  title     = {Score-Based Generative Modeling through Stochastic Differential Equations},
  author    = {Song, Yang and Sohl-Dickstein, Jascha and Kingma, Diederik P. and Kumar, Abhishek and Ermon, Stefano and Poole, Ben},
  booktitle = {International Conference on Learning Representations},
  year      = {2021},
  note      = {arXiv:2011.13456}
}

@inproceedings{karras2022edm,
  title     = {Elucidating the Design Space of Diffusion-Based Generative Models},
  author    = {Karras, Tero and Aittala, Miika and Aila, Timo and Laine, Samuli},
  booktitle = {Advances in Neural Information Processing Systems},
  year      = {2022}
}

@inproceedings{lu2022dpmsolver,
author = {Lu, Cheng and Zhou, Yuhao and Bao, Fan and Chen, Jianfei and Li, Chongxuan and Zhu, Jun},
title = {DPM-solver: a fast ODE solver for diffusion probabilistic model sampling in around 10 steps},
year = {2022},
isbn = {9781713871088},
publisher = {Curran Associates Inc.},
address = {Red Hook, NY, USA},
booktitle = {Proceedings of the 36th International Conference on Neural Information Processing Systems},
articleno = {418},
numpages = {13},
location = {New Orleans, LA, USA},
series = {NIPS '22}
}

@article{lu2022dpmsolverpp,
  author = {Lu, Cheng and Zhou, Yuhao and Bao, Fan and Chen, Jianfei and Li, Chongxuan and Zhu, Jun},
  title = {DPM-Solver++: Fast Solver for Guided Sampling of Diffusion Probabilistic Models},
  journal = {Machine Intelligence Research},
  year = {2025},
  volume = {22},
  pages = {730--751},
  doi = {10.1007/s11633-025-1562-4},
  url = {https://doi.org/10.1007/s11633-025-1562-4},
  note = {arXiv:2211.01095},
}

@inproceedings{liu2022pndm,
  author = {Liu, Luping and Ren, Yi and Lin, Zhijie and Zhao, Zhou},
  title = {Pseudo Numerical Methods for Diffusion Models on Manifolds},
  booktitle = {International Conference on Learning Representations},
  year = {2022},
  url = {https://openreview.net/forum?id=PlKWVd2yBkY},
  note = {arXiv:2202.09778},
}

@inproceedings{zhang2022deis,
  author = {Zhang, Qinsheng and Chen, Yongxin},
  title = {Fast Sampling of Diffusion Models with Exponential Integrator},
  booktitle = {International Conference on Learning Representations},
  year = {2023},
  url = {https://openreview.net/forum?id=Loek7hfb46P},
  note = {arXiv:2204.13902},
}

@inproceedings{zhao2023unipc,
  author = {Zhao, Wenliang and Bai, Lujia and Rao, Yongming and Zhou, Jie and Lu, Jiwen},
  title = {UniPC: A Unified Predictor-Corrector Framework for Fast Sampling of Diffusion Models},
  booktitle = {Advances in Neural Information Processing Systems},
  year = {2023},
  note = {arXiv:2302.04867},
}

@inproceedings{lipman2022flowmatching,
  author = {Lipman, Yaron and Chen, Ricky T. Q. and Ben-Hamu, Heli and Nickel, Maximilian and Le, Matt},
  title = {Flow Matching for Generative Modeling},
  booktitle = {International Conference on Learning Representations},
  year = {2023},
  url = {https://openreview.net/forum?id=PqvMRDCJT9t},
  note = {arXiv:2210.02747},
}

@article{liu2022rectified,
  title={Flow Straight and Fast: Learning to Generate and Transfer Data with Rectified Flow},
  author={Liu, Xingchao and Gong, Chengyue and Liu, Qiang},
  journal={arXiv preprint arXiv:2209.03003},
  year={2022}
}

@inproceedings{salimans2022progressive,
  title     = {Progressive Distillation for Fast Sampling of Diffusion Models},
  author    = {Salimans, Tim and Ho, Jonathan},
  booktitle = {International Conference on Learning Representations},
  year      = {2022},
  note      = {arXiv:2202.00512}
}

@inproceedings{song2023consistency,
  author = {Song, Yang and Dhariwal, Prafulla and Chen, Mark and Sutskever, Ilya},
  title = {Consistency Models},
  booktitle = {Proceedings of the 40th International Conference on Machine Learning},
  year = {2023},
  volume = {202},
  pages = {32211--32252},
  series = {Proceedings of Machine Learning Research},
  url = {https://proceedings.mlr.press/v202/song23a.html},
  note = {arXiv:2303.01469},
}

@inproceedings{chen2022samplingiseasy,
  title     = {Sampling is as Easy as Learning the Score: Theory for Diffusion Models with Minimal Data Assumptions},
  author    = {Chen, Sitan and Chewi, Sinho and Li, Jerry and Li, Yuanzhi and Salim, Adil and Zhang, Anru R.},
  booktitle = {International Conference on Learning Representations},
  year      = {2023},
  note      = {arXiv:2209.11215}
}

@inproceedings{benton2023linearconvergence,
  title     = {Nearly $d$-Linear Convergence Bounds for Diffusion Models via Stochastic Localization},
  author    = {Benton, Joe and De Bortoli, Valentin and Doucet, Arnaud and Deligiannidis, George},
  booktitle = {International Conference on Learning Representations},
  year      = {2024},
  note      = {arXiv:2308.03686}
}

@article{li2024odt,
  title={O(d/T) Convergence Theory for Diffusion Probabilistic Models under Minimal Assumptions},
  author={Li, Gen and Yan, Yuling},
  journal={arXiv preprint arXiv:2409.18959},
  year={2024}
}

@article{conforti2023score,
  title={KL Convergence Guarantees for Score Diffusion Models under Minimal Data Assumptions},
  author={Conforti, Giovanni and Durmus, Alain and Gentiloni Silveri, Marta},
  journal={arXiv preprint arXiv:2308.12240},
  year={2023}
}

@inproceedings{gupta2024randomized,
  author = {Gupta, Shivam and Cai, Linda and Chen, Sitan},
  title = {Faster Diffusion Sampling with Randomized Midpoints: Sequential and Parallel},
  booktitle = {International Conference on Learning Representations},
  year = {2025},
  url = {https://proceedings.iclr.cc/paper_files/paper/2025/hash/f30307ac840b88f86f4ab5761b2d6595-Abstract-Conference.html},
  note = {arXiv:2406.00924},
}

@inproceedings{li2024accelerating,
  title     = {Accelerating Convergence of Score-Based Diffusion Models, Provably},
  author    = {Li, Gen and Huang, Yu and Efimov, Timofey and Wei, Yuting and Chi, Yuejie and Chen, Yuxin},
  booktitle = {Proceedings of the 41st International Conference on Machine Learning},
  year      = {2024},
  note      = {arXiv:2403.03852}
}

@article{wu2024stochasticrk,
  title={Stochastic Runge--Kutta Methods: Provable Acceleration of Diffusion Models},
  author={Wu, Yuchen and Chen, Yuxin and Wei, Yuting},
  journal={arXiv preprint arXiv:2410.04760},
  year={2024}
}

@article{jiaoli2024instancedependent,
  title={Instance-dependent Convergence Theory for Diffusion Models},
  author={Jiao, Yuchen and Li, Gen},
  journal={arXiv preprint arXiv:2410.13738},
  year={2024}
}

@article{zhang2025sublinear,
  title={Sublinear iterations can suffice even for DDPMs},
  author={Zhang, Matthew S. and Huan, Stephen and Huang, Jerry and Boffi, Nicholas M. and Chen, Sitan and Chewi, Sinho},
  journal={arXiv preprint arXiv:2511.04844},
  year={2025}
}

@article{jiao2025optimalddpm,
  title   = {Optimal Convergence Analysis of {DDPM} for General Distributions},
  author  = {Jiao, Yuchen and Zhou, Yuchen and Li, Gen},
  journal = {arXiv preprint arXiv:2510.27562},
  year    = {2025}
}

@inproceedings{gao2025wasserstein,
  author = {Gao, Xuefeng and Zhu, Lingjiong},
  title = {Convergence Analysis for General Probability Flow {ODE}s of Diffusion Models in Wasserstein Distances},
  booktitle = {International Conference on Artificial Intelligence and Statistics (AISTATS)},
  year = {2025},
  note = {arXiv:2401.17958},
  pages = {1009--1017},
  series = {Proceedings of Machine Learning Research},
  volume = {258},
}

@article{gatmiry2026highaccuracy,
  title={High-accuracy and Dimension-free Sampling with Diffusions},
  author={Gatmiry, Khashayar and Chen, Sitan and Salim, Adil},
  journal={arXiv preprint arXiv:2601.10708},
  year={2026}
}

@inproceedings{lee2022polynomialcomplexity,
  title     = {Convergence for score-based generative modeling with polynomial complexity},
  author    = {Lee, Holden and Lu, Jianfeng and Tan, Yixin},
  booktitle = {Advances in Neural Information Processing Systems},
  year      = {2022},
  note      = {arXiv:2206.06227}
}

@inproceedings{chewi2021query1d,
  title     = {The query complexity of sampling from strongly log-concave distributions in one dimension},
  author    = {Chewi, Sinho and Gerber, Patrik and Lu, Chen and Le Gouic, Thibaut and Rigollet, Philippe},
  booktitle = {Conference on Learning Theory},
  year      = {2022},
  note      = {arXiv:2105.14163}
}

@article{chewi2023querylogconcave,
  author = {Chewi, Sinho and de Dios Pont, Jaume and Li, Jerry and Lu, Chen and Narayanan, Shyam},
  title = {Query Lower Bounds for Log-concave Sampling},
  journal = {Journal of the ACM},
  year = {2024},
  volume = {71},
  number = {4},
  pages = {29:1--29:42},
  doi = {10.1145/3673651},
  url = {https://doi.org/10.1145/3673651},
  note = {Preliminary version in FOCS 2023; arXiv:2304.02599},
}

@inproceedings{chewi2022fisherinfo,
  title     = {Fisher information lower bounds for sampling},
  author    = {Chewi, Sinho and Gerber, Patrik and Lee, Holden and Lu, Chen},
  booktitle = {Algorithmic Learning Theory},
  year      = {2023},
  note      = {arXiv:2210.02482}
}

@inproceedings{he2025nonlogconcave,
  author = {He, Yuchen and Zhang, Chihao},
  title = {On the query complexity of sampling from non-log-concave distributions (extended abstract)},
  booktitle = {Proceedings of Thirty Eighth Conference on Learning Theory},
  year = {2025},
  volume = {291},
  pages = {2786--2787},
  series = {Proceedings of Machine Learning Research},
  publisher = {PMLR},
  editor = {Haghtalab, Nika and Moitra, Ankur},
  url = {https://proceedings.mlr.press/v291/he25a.html},
  note = {Full version: arXiv:2502.06200},
}

@inproceedings{sohl2015diffusion,
  title     = {Deep Unsupervised Learning using Nonequilibrium Thermodynamics},
  author    = {Sohl-Dickstein, Jascha and Weiss, Eric and Maheswaranathan, Niru and Ganguli, Surya},
  booktitle = {Proceedings of the 32nd International Conference on Machine Learning},
  series    = {Proceedings of Machine Learning Research},
  volume    = {37},
  pages     = {2256--2265},
  year      = {2015}
}

@inproceedings{
li2025improved,
title={Improved Convergence Rate for Diffusion Probabilistic Models},
author={Gen Li and Yuchen Jiao},
booktitle={The Thirteenth International Conference on Learning Representations},
year={2025},
url={https://openreview.net/forum?id=SOd07Qxkw4}
}

@inproceedings{LRG,
 author = {Lee, Holden and Risteski, Andrej and Ge, Rong},
 booktitle = {Advances in Neural Information Processing Systems},
 editor = {S. Bengio and H. Wallach and H. Larochelle and K. Grauman and N. Cesa-Bianchi and R. Garnett},
 pages = {},
 publisher = {Curran Associates, Inc.},
 title = {Beyond Log-concavity: Provable Guarantees for Sampling Multi-modal Distributions using Simulated Tempering Langevin Monte Carlo},
 url = {https://proceedings.neurips.cc/paper_files/paper/2018/file/c6ede20e6f597abf4b3f6bb30cee16c7-Paper.pdf},
 volume = {31},
 year = {2018}
}

@inproceedings{
ddim,
title={Denoising Diffusion Implicit Models},
author={Jiaming Song and Chenlin Meng and Stefano Ermon},
booktitle={International Conference on Learning Representations},
year={2021},
url={https://openreview.net/forum?id=St1giarCHLP}
}

@inproceedings{
fukumizu2025flow,
title={Flow matching achieves almost minimax optimal convergence},
author={Kenji Fukumizu and Taiji Suzuki and Noboru Isobe and Kazusato Oko and Masanori Koyama},
booktitle={The Thirteenth International Conference on Learning Representations},
year={2025},
url={https://openreview.net/forum?id=2OMyAFjiJJ}
}

@inproceedings{improvedSample,
 author = {Gupta, Shivam and Parulekar, Aditya and Price, Eric and Xun, Zhiyang},
 booktitle = {Advances in Neural Information Processing Systems},
 doi = {10.52202/079017-1296},
 editor = {A. Globerson and L. Mackey and D. Belgrave and A. Fan and U. Paquet and J. Tomczak and C. Zhang},
 pages = {40976--41012},
 publisher = {Curran Associates, Inc.},
 title = {Improved Sample Complexity Bounds for Diffusion Model Training},
 url = {https://proceedings.neurips.cc/paper_files/paper/2024/file/480e563034dbb6d1dd622d8eab7d129b-Paper-Conference.pdf},
 volume = {37},
 year = {2024}
}

@InProceedings{L41,
  title = 	 {Adaptivity of Diffusion Models to Manifold Structures},
  author =       {Tang, Rong and Yang, Yun},
  booktitle = 	 {Proceedings of The 27th International Conference on Artificial Intelligence and Statistics},
  pages = 	 {1648--1656},
  year = 	 {2024},
  editor = 	 {Dasgupta, Sanjoy and Mandt, Stephan and Li, Yingzhen},
  volume = 	 {238},
  series = 	 {Proceedings of Machine Learning Research},
  month = 	 {02--04 May},
  publisher =    {PMLR},
  url = 	 {https://proceedings.mlr.press/v238/tang24a.html}
}

@inproceedings{
xun2025posterior,
title={Posterior Sampling by Combining Diffusion Models with Annealed Langevin Dynamics},
author={Zhiyang Xun and Shivam Gupta and Eric Price},
booktitle={The Thirty-ninth Annual Conference on Neural Information Processing Systems},
year={2025},
url={https://openreview.net/forum?id=ARZiMmb619}
}

@article{li2024sharp,
  title={A sharp convergence theory for the probability flow odes of diffusion models},
  author={Li, Gen and Wei, Yuting and Chi, Yuejie and Chen, Yuxin},
  journal={arXiv preprint arXiv:2408.02320},
  year={2024}
}

@article{
debortoli2022convergence,
title={Convergence of denoising diffusion models under the manifold hypothesis},
author={Valentin De Bortoli},
journal={Transactions on Machine Learning Research},
issn={2835-8856},
year={2022},
url={https://openreview.net/forum?id=MhK5aXo3gB},
note={Expert Certification}
}

@article{chen2023probability,
  title={The probability flow ode is provably fast},
  author={Chen, Sitan and Chewi, Sinho and Lee, Holden and Li, Yuanzhi and Lu, Jianfeng and Salim, Adil},
  journal={Advances in Neural Information Processing Systems},
  volume={36},
  pages={68552--68575},
  year={2023}
}

@inproceedings{lee2023convergence,
  title={Convergence of score-based generative modeling for general data distributions},
  author={Lee, Holden and Lu, Jianfeng and Tan, Yixin},
  booktitle={International Conference on Algorithmic Learning Theory},
  pages={946--985},
  year={2023},
  organization={PMLR}
}

@inproceedings{chen2023improved,
  title={Improved analysis of score-based generative modeling: User-friendly bounds under minimal smoothness assumptions},
  author={Chen, Hongrui and Lee, Holden and Lu, Jianfeng},
  booktitle={International Conference on Machine Learning},
  pages={4735--4763},
  year={2023},
  organization={PMLR}
}

@inproceedings{chen2023restoration,
  title={Restoration-degradation beyond linear diffusions: A non-asymptotic analysis for ddim-type samplers},
  author={Chen, Sitan and Daras, Giannis and Dimakis, Alex},
  booktitle={International Conference on Machine Learning},
  pages={4462--4484},
  year={2023},
  organization={PMLR}
}

@article{benton2023error,
  title={Error bounds for flow matching methods},
  author={Benton, Joe and Deligiannidis, George and Doucet, Arnaud},
  journal={arXiv preprint arXiv:2305.16860},
  year={2023}
}

@misc{chen2026highaccuracy,
      title={High-accuracy sampling for diffusion models and log-concave distributions}, 
      author={Fan Chen and Sinho Chewi and Constantinos Daskalakis and Alexander Rakhlin},
      year={2026},
      eprint={2602.01338},
      archivePrefix={arXiv},
      primaryClass={cs.LG},
      url={https://arxiv.org/abs/2602.01338}, 
}
\onecolumn
\appendix

\section*{Appendix}

\paragraph{Appendix organization.}
Appendix~\ref{sec:hard} defines the product-circle hard family and states the parameterized $\psi_1$ lower bound.
Appendix~\ref{sec:appendix-rate-engine} proves the core fixed-noise lower bound, which shows that each query is informative only on a narrow interval of codebook rates.
Appendices~\ref{sec:appendix-lp-kappa} and~\ref{sec:appendix-psi-kappa} instantiate this framework for the $L^p$ and $\psi_1$ oracle models, completing the proofs of Theorems~\ref{thm:main_parameterized} and~\ref{thm:psi1_parameterized}.
Appendix~\ref{app:synthetic} describes the synthetic experiments.

\section{Parameterized lower bounds and hard family}
\label{sec:hard}

This appendix proves Theorem~\ref{thm:main_parameterized} and the following $\psi_1$ counterpart, from which Theorem~\ref{thm:psi1_simplified} is obtained by specialization.

\begin{theorem}[Parameterized $\psi_1$ lower bound]
  \label{thm:psi1_parameterized}
  Fix $\rho\in(0,1/4)$. Then there exists a constant $c=c(\rho)>0$ such that the following holds for every $R>0$, every $\gamma\in(0,R/2)$, every fixed $\varepsilon_{\rm err}\in(0,1]$, and all sufficiently large $d$. There exists a distribution $\mathcal D$ over pairs $(\pi,\{\widehat s_\sigma\}_{\sigma>0})$ on $\mathbb R^d$ such that:
  \begin{enumerate}[label=(\arabic*)]
    \item \textbf{(bounded-plus-noise)} $\pi=\pi_{\rm pre}*\mathcal N(0,\gamma^2I_d)$ for some $\pi_{\rm pre}$ supported on $[-R,R]^d$;
    \item \textbf{($\psi_1$-accurate score oracle)} for every $\sigma>0$ and every $z\ge 0$,
    \[
      \mathbb P_{X\sim\pi_\sigma}
      \big[\|\widehat s_\sigma(X)-s_{\pi,\sigma}(X)\|_2\ge z\big]
      \le 2\exp\!\left(-\frac{z\sigma}{\varepsilon_{\rm err}}\right);
    \]
    \item \textbf{(global Lipschitzness)} for every $\sigma>0$, both $s_{\pi,\sigma}$ and $\widehat s_\sigma$ are globally $L_\sigma$-Lipschitz with
    \[
      L_\sigma\le \frac{3}{\gamma^2+\sigma^2}+\frac{7R^2d}{(\gamma^2+\sigma^2)^2};
    \]
    \item \textbf{(query lower bound)} every adaptive algorithm making at most
    \[
      Q\le c\cdot \frac{d\,\log(R/\gamma)}{\sqrt{dH_{\psi_1}}+H_{\psi_1}},
      \qquad
      H_{\psi_1}:=\log d+\log(R/\gamma)+\frac{R}{\gamma}\frac{\sqrt d}{\varepsilon_{\rm err}},
    \]
    score queries and outputting $\widehat X$ satisfies
    \[
      \mathbb P_{(\pi,\{\widehat s_\sigma\})\sim\mathcal D}
      \Big[\TV(\widehat X,\pi)\ge 1-\rho\Big]
      \ge 1-\rho.
    \]
  \end{enumerate}
\end{theorem}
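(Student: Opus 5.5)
We plan to reuse the hard family and the fixed-noise ``rate engine'' that underlie Theorem~\ref{thm:main_parameterized}, and to change only the oracle-accuracy step. The family is as follows. Pair up coordinates into $d/2$ blocks. Place $M=\Theta(R/\gamma)$ evenly spaced points on a circle of radius $\Theta(R)$ in each block, and let $\Omega$ be the product codebook of size $M^{d/2}$. Draw a hidden rate $\lambda$ uniformly from a grid spanning $\Theta(d\log M)$ on the log scale. Then draw a Poissonized random sub-codebook $S\subseteq\Omega$ with expected size $e^{\lambda}$, and set $\pi_{\rm pre}=\Unif(S)$, $\pi=\pi_{\rm pre}*\mathcal N(0,\gamma^2 I_d)$. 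The null is $S=\Omega$.

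Items (1) and (3) are then inherited verbatim. The support lies in $[-R,R]^d$. The Lipschitz bound follows from the Tweedie representation: the Jacobian of $s_{\pi,\sigma}$ equals $-I/\tau^2+\mathrm{Cov}(Y\mid X)/\tau^4$ with $\tau^2=\gamma^2+\sigma^2$, and $\|\mathrm{Cov}\|\le \E\|Y\|^2\le R^2d$. The oracle $\widehat s_\sigma$ will be defined piecewise, using the true score where it is informative and the null score elsewhere. It is glued by a smooth cutoff, so the same bound holds after one pays constants.

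The core step is the fixed-noise lemma (Appendix~\ref{sec:appendix-rate-engine}). For each noise level $\tau$, the set of rates $\lambda$ at which the planted score deviates from the null score is an interval of log-width $O(\sqrt{dH}+H)$. Here $H$ controls the admissible error level: deviations below $e^{-H}$ are negligible. The width comes from the shell-fluctuation argument of Section~\ref{sec:intuition}, namely a $\sqrt{dH}$ band of squared radii at tail level $e^{-H}$, plus an additive $H$ for the empty and averaging thresholds. Outside this interval, answering with $s^{\rm null}_\tau$ is accurate in the required sense, except on an event of probability $e^{-\Omega(H)}$ over $S$ and $X\sim\pi_\sigma$.

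For the $\psi_1$ model, accuracy must hold as a tail bound $2\exp(-z\sigma/\varepsilon_{\rm err})$ for every $z$, and not merely in $L^p$. The deviation $\|\widehat s_\sigma-s_{\pi,\sigma}\|$ is always at most $\|m-m^{\rm null}\|/\tau^2\le 2R\sqrt d/\tau^2$. So for $z$ above this bound the tail requirement is vacuous. For smaller $z$ we need the bad-event probability to be at most $\exp(-2R\sqrt d\,\sigma/(\tau^2\varepsilon_{\rm err}))\ge\exp(-\frac{R}{\gamma}\frac{\sqrt d}{\varepsilon_{\rm err}})$, using $\sigma/\tau^2\le 1/(2\gamma)$. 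It therefore suffices to run the engine with failure level $e^{-H_{\psi_1}}$, with $H_{\psi_1}$ as in the statement; the additive $\log d+\log(R/\gamma)$ absorbs the polynomial slack. This is the analogue of Appendix~\ref{sec:appendix-lp-kappa} and forms Appendix~\ref{sec:appendix-psi-kappa}. I expect the main obstacle to be the small-$z$ regime, where the null answer on the good event still differs by $e^{-H}$-small amounts. There I would either switch to the exact true score on a neighborhood where it matters, or ensure the residual is below $\varepsilon_{\rm err}/\sigma$ times a constant, so that the tail condition holds trivially for $z\gtrsim\varepsilon_{\rm err}/\sigma$ and the constant $2$ covers small $z$.

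The final step is a hypothesis-testing and union argument. With $Q$ queries, each at one noise level, the union of informative rate intervals has log-measure $O(Q(\sqrt{dH_{\psi_1}}+H_{\psi_1}))$. If $Q\le c\,d\log(R/\gamma)/(\sqrt{dH_{\psi_1}}+H_{\psi_1})$, this covers at most a $\rho/2$ fraction of the $\Theta(d\log M)$ rate range. Adaptivity is handled by coupling: the algorithm runs against null answers, and this transcript is identical to the planted one unless some query lands in an informative window. Conditioned on the transcript being null-like, $\widehat X$ is independent of $S$ given $\lambda$. A random sub-codebook of rate below $\log|\Omega|-\omega(\sqrt d)$ puts negligible $\pi$-mass near any fixed output law's typical points, so $\TV(\widehat X,\pi)\ge1-\rho$ with probability $\ge1-\rho$. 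Specializing to constant $\varepsilon_{\rm err},R,\gamma$ gives $H_{\psi_1}=\Theta(\sqrt d)$ and $Q=\Omega(d/d^{3/4})=\Omega(d^{1/4})$, which is Theorem~\ref{thm:psi1_simplified}.
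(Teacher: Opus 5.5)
Your overall plan matches the paper's: the product-circle family, a per-$\tau$ rate window of width $O(\sqrt{dH_{\psi_1}}+H_{\psi_1})$, coupling with the null run, and packing on the rate axis. Your $\psi_1$ bookkeeping through $\|s_{S,\tau}-s_{U,\tau}\|_2\le 2R\sqrt d/\tau^2$ also agrees in substance with the paper's choice of $\zeta^{(\psi)}$.

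\textbf{The gap is in your last step.} It is false that a random codebook of rate below $\log|\Omega|-\omega(\sqrt d)$ is invisible to an $S$-independent output. The relevant threshold is $dI_\gamma$, the mutual information of the base-noise channel, not $\log|\Omega|$.
\begin{itemize}
\item Neighbouring circle points are at distance $2R\sin(\pi/M)\le 2\gamma$. So each block has $H(Y\mid X)$ bounded below by an absolute constant, and $\log|\Omega|-dI_\gamma\ge c\,d$.
\item For rates above $dI_\gamma+\widetilde O(\sqrt d)$, the random-codebook KL bound (Lemma~\ref{lem:appendix-rate-explicit-kl} at $\tau=\gamma$) gives $\mathbb E_S\KL(\pi_{S,\gamma}\|\pi_{U,\gamma})=o(1)$. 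The zero-query sampler that outputs $\pi_{U,\gamma}$ therefore succeeds at those rates.
\item When $R/\gamma=O(1)$, the regime of Theorem~\ref{thm:psi1_simplified}, this band is a constant fraction of a range reaching up to $\log|\Omega|-\omega(\sqrt d)$. Nothing forces that fraction below $\rho$.
\end{itemize}
What is missing is a quantitative lower bound on $I_\gamma$. The paper proves $I_\tau\ge c_I\log M$ with $c_I>1/16$ for $\tau\in[\gamma,c_{\rm sm}\gamma]$, via nearest-point decoding and Fano (Lemma~\ref{lem:appendix-rate-small-noise-mi}). It then caps the codebook size at $n_{\max}=M^{d/32}$ and takes $A(S)=G_\gamma(S)$. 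For this set, $\mathbb P_S[x\in A(S)]\le n/\Lambda_\gamma$ with $\log\Lambda_\gamma\ge dI_\gamma-C(\sqrt{dH_{\psi_1}}+H_{\psi_1})$ (Lemma~\ref{lem:appendix-rate-quantile}). The same lemma certifies a usable rate range of length $\Theta(d\log(R/\gamma))$, which is where the numerator of the bound comes from.

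\textbf{A second problem is your oracle invariant}, ``outside the window, $s^{\rm null}_\tau$ is accurate except on an event of small probability over $S$''.
\begin{itemize}
\item Below the window the codebook is sparse, and $s_{U,\tau}$ is far from $s_{S,\tau}$ in $L^2(\nu_{S,\tau})$. The oracle must return the planted score on the bulk $G_\tau(S)$ of $\nu_{S,\tau}$. It agrees with the null only at query points fixed independently of $S$, by a union bound over the $n$ codewords and Markov's inequality for $L_\tau(Y,x)$.
\item Accuracy must hold for every $S$, not off a small event. So the switch between ``null everywhere'' and ``masked planted score'' must be a function of $S$, not of the rate. The paper uses the test $J_\tau(S)\le\theta^{(\psi)}(\tau)$. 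It bounds $\mathbb P_S[J_\tau(S)>\theta^{(\psi)}(\tau)]$ above the window by the heat-flow KL/Fisher inequality (Proposition~\ref{prop:appendix-rate-kl-fisher}(d)) together with the random-codebook KL lemma at noise $\widetilde\tau$.
\end{itemize}
With this design your small-$z$ obstacle does not arise. Write $\Delta_\tau=s_{S,\tau}-s_{U,\tau}$ and $B_\tau=2R\sqrt d/\tau^2$.
\begin{itemize}
\item In the masked branch the error vanishes on $G_\tau(S)$, and $\nu_{S,\tau}(G_\tau(S)^c)\le\zeta^{(\psi)}(\tau)\le 2e^{-\tau B_\tau/\varepsilon_{\rm err}}$.
\item In the null branch $W=\|\Delta_\tau(X)\|_2\le B_\tau$ and $\mathbb E W^2\le\theta^{(\psi)}(\tau)=\tfrac{1}{16}B_\tau^2e^{-\tau B_\tau/\varepsilon_{\rm err}}$. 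This yields $\mathbb E e^{\tau W/\varepsilon_{\rm err}}\le 2$.
\end{itemize}
Finally, for this $\theta^{(\psi)}$ the quantity $H(\tau)$ controlling the window grows like $\log\tau$. You therefore also need the cutoff $\tau_\star=4R\sqrt d/\varepsilon_{\rm err}$, above which the oracle is set to null. There $\tau B_\tau/\varepsilon_{\rm err}\le 1/2$, so the tail bound is trivial. Without this cutoff the window width is not uniformly $O(\sqrt{dH_{\psi_1}}+H_{\psi_1})$.
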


Both parameterized lower bounds use the same hard family. Each planar block contributes $\Theta(R/\gamma)$ well-separated locations on a circle of radius $R$; taking their product preserves the blockwise structure from the hypercube warm-up while yielding the $\log(R/\gamma)$ dependence in the parameterized bounds.

Assume first that $d$ is even, and set
\[
  M:=\left\lceil \pi\frac{R}{\gamma}\right\rceil,
\]
where $\pi$ denotes the mathematical constant.
For $k=0,1,\dots,M-1$, let
\[
  a_k:=R\bigl(\cos(2\pi k/M),\sin(2\pi k/M)\bigr),
\]
and write
\[
  \Aset:=\{a_0,\dots,a_{M-1}\}\subset\mathbb R^2,
  \qquad
  \Vset:=\Aset^{d/2}\subset[-R,R]^d.
\]
Let $U:=\Unif(\Vset)$ denote the uniform law on this product support. For a codebook size $n$, we sample
\[
  S=(Y^{(1)},\dots,Y^{(n)}),
  \qquad
  Y^{(i)}\stackrel{\mathrm{i.i.d.}}{\sim} U,
\]
and write
\[
  \nu_S:=\frac1n\sum_{i=1}^n \delta_{Y^{(i)}}
\]
for its empirical measure. The planted and null targets at base noise $\gamma$ are
\[
  \pi_{S,\gamma}:=\nu_S*\mathcal N(0,\gamma^2I_d),
  \qquad
  \pi_{U,\gamma}:=U*\mathcal N(0,\gamma^2I_d).
\]
When the sampler queries the $\sigma$-smoothed score of $\pi_{S,\gamma}$, the corresponding total-noise scale is
\[
  \tau(\sigma):=\sqrt{\gamma^2+\sigma^2}.
\]
We will therefore work below with a $\tau$-indexed oracle for $\tau\ge\gamma$ and return to the original $\sigma$-indexed model only in the final transfer step. When $d$ is odd, we apply the same construction in $d-1$ coordinates and append one deterministic coordinate; this standard one-coordinate padding/projection reduction is omitted from the notation below.

\section{A fixed-noise lower bound in rate space}
\label{sec:appendix-rate-engine}

For the fixed-noise argument it is convenient to parameterize the codebook size by
\[
  \kappa(n):=\frac{1}{d}\log n.
\]
Throughout, we interpret $[a,b]=\emptyset$ when $a>b$, and we write $|[a,b]|:=(b-a)_+$ for its length.

\subsection{The fixed-noise theorem}
\label{sec:appendix-rate-engine-theorem}

Fix a query budget $Q\ge 1$, a target error level $\rho\in(0,1/4)$, and set
\[
  \delta:=\frac{\rho^2}{80Q}.
\]
For each codebook $S$ and total-noise level $\tau\ge \gamma$, let $\widehat s^{(S)}_\tau$ denote the oracle response at total noise $\tau$, and let $s_{U,\tau}$ denote the corresponding null score. Unless stated otherwise, whenever $\mathbb P_S$ or $\mathbb E_S$ appears below, the codebook $S$ is sampled i.i.d. from $U$ with the value of $n$ prescribed there.

\begin{theorem}[A fixed-noise lower bound in rate space]
  \label{thm:appendix-rate-engine}
  Let $1\le n_{\min}\le n_{\max}$ be integers, and define the discrete admissible rate set
  \[
    K_d:=\{\kappa(n):\ n\in[n_{\min},n_{\max}]\cap\mathbb N\}.
  \]
  Assume that for every $\tau\ge\gamma$ we are given an interval
  \[
    \mathcal J(\tau)\subseteq\mathbb R,
  \]
  and let
  \[
    w:=\sup_{\tau\ge\gamma}|\mathcal J(\tau)|,
  \]
  with the convention that $|\mathcal J(\tau)|=0$ when $\mathcal J(\tau)=\emptyset$.
  Suppose that the following hold.
  \begin{enumerate}[label=(\roman*)]
    \item \textbf{(Agreement with the null score outside the interval)}
    For every fixed query point $x\in\mathbb R^d$, every $\tau\ge\gamma$, and every integer $n\in[n_{\min},n_{\max}]$,
    \[
      \kappa(n)\notin \mathcal J(\tau)
      \quad\Longrightarrow\quad
      \mathbb P_S\big[\widehat s^{(S)}_\tau(x)\neq s_{U,\tau}(x)\big]\le \delta,
    \]
    where $S$ is sampled i.i.d. from $U$ with size $n$.

    \item \textbf{(Packing on the rate axis)}
    The set $K_d$ contains a subset $G$ of cardinality
    \[
      |G|\ge \frac{80Q}{\rho^2}
    \]
    whose points are pairwise separated by more than $w$.

    \item \textbf{(Separation at the base noise level)}
    There exists a measurable map $S\mapsto A(S)\subseteq\mathbb R^d$ such that
    \[
      \pi_{S,\gamma}(A(S))\ge 1-\rho/2
      \qquad\text{for every codebook }S\text{ of size }n\in[n_{\min},n_{\max}]\cap\mathbb N,
    \]
    and, if $J$ is uniform on $G$, if $n(J)$ denotes the unique integer with $\kappa(n(J))=J$, and if $S_J$ is an i.i.d. codebook of size $n(J)$ drawn from $U$, then
    \[
      \sup_{x\in\mathbb R^d}\mathbb P_{J,S_J}\big[x\in A(S_J)\big]\le \frac{\rho^2}{8}.
    \]
  \end{enumerate}
  Let $\mathcal D$ denote the law of $(n(J),S_J)$. Then every adaptive algorithm $\mathcal A$ making at most $Q$ score queries, with output distribution $Q_{n,S}^{\mathcal A}$ on instance $(n,S)$, satisfies
  \[
    \mathbb P_{(n,S)\sim\mathcal D}
    \Big[\TV\big(Q_{n,S}^{\mathcal A},\pi_{S,\gamma}\big)\ge 1-\rho\Big]
    \ge 1-\rho.
  \]
\end{theorem}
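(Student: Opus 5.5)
We compare the real interaction with a simulated one in which the oracle is replaced by the null score. For each instance $(n,S)$, let $P_{n,S}$ be the joint law of the algorithm's transcript and output when it queries $\widehat s^{(S)}$. Let $P_{\rm null}$ be the law of the same algorithm when every query at $(x,\tau)$ is answered by $s_{U,\tau}(x)$. The measure $P_{\rm null}$ does not depend on the instance, and its output law $Q_{\rm null}$ is a fixed distribution on $\mathbb R^d$. The plan has three steps. First, show that for most $J\in G$, $\E_{S_J}\TV(P_{n(J),S_J},P_{\rm null})$ is small. Second, show that the fixed law $Q_{\rm null}$ puts little mass on $A(S_J)$ for typical $(J,S_J)$. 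Third, combine these with $\pi_{S,\gamma}(A(S))\ge 1-\rho/2$ to get the TV lower bound.

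For the first step, we couple the two executions on common algorithm randomness. They coincide until the first query $(x_t,\tau_t)$ at which $\widehat s^{(S)}_{\tau_t}(x_t)\neq s_{U,\tau_t}(x_t)$. Up to that time the query sequence is the one generated by the null run, which is independent of $S$. So a union bound over the $Q$ null-run queries gives
\[\Pr[\text{runs differ}]\le \sum_{t\le Q}\E_{\rm null}\,\Pr_S\big[\widehat s^{(S)}_{\tau_t}(x_t)\neq s_{U,\tau_t}(x_t)\big],\]
where the inner probability is over $S$ with the null-run $(x_t,\tau_t)$ held fixed. This is where adaptivity is handled: the null-run queries do not depend on $S$, so the fixed-point hypothesis (i) applies.

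By (i), the $t$-th term is at most $\delta$ unless $J\in\mathcal J(\tau_t)$. By (ii), each interval of length at most $w$ contains at most one point of $G$. Averaging over $J$ uniform on $G$ gives
\[\E_J\Pr[\text{differ}]\le Q\delta+\frac{Q}{|G|}\le \frac{\rho^2}{80}+\frac{\rho^2}{80}=\frac{\rho^2}{40}.\]
The technical subtlety in this step is measurability and the order of expectations: we average over $J$ inside the expectation over null randomness, which is legitimate because $J$ and $S_J$ are independent of the null run.

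For the second step, Fubini and (iii) give
\[\E_{J,S_J}Q_{\rm null}(A(S_J))=\E_{x\sim Q_{\rm null}}\Pr_{J,S_J}[x\in A(S_J)]\le \frac{\rho^2}{8}.\]

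For the third step, define the random variable
\[Z:=\TV(P_{n(J),S_J},P_{\rm null})+Q_{\rm null}(A(S_J)).\]
The two previous steps give $\E Z\le \rho^2/40+\rho^2/8<\rho^2/2$. By Markov's inequality, $Z\le\rho/2$ except with probability at most $\rho$ over the draw from $\mathcal D$. On the event $Z\le \rho/2$,
\[Q^{\mathcal A}_{n,S}(A(S))\le Q_{\rm null}(A(S))+\TV(P_{n,S},P_{\rm null})=Z\le\rho/2.\]
Since $\pi_{S,\gamma}(A(S))\ge 1-\rho/2$, this yields
\[\TV\big(Q^{\mathcal A}_{n,S},\pi_{S,\gamma}\big)\ge (1-\rho/2)-\rho/2=1-\rho.\]
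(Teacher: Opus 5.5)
Your proposal is correct and follows the paper's proof essentially step for step: the same coupling with the null run on shared internal randomness, the union bound over the null-run queries (with packing ensuring each queried interval $\mathcal J(\tau_t)$ catches at most one point of $G$), Fubini plus (iii) for the separating set, and the set-based TV lower bound. The only cosmetic difference is that you merge the paper's two Markov applications (thresholds $\rho/4$ each, failure probability $\rho/10+\rho/2$) into a single one on $Z$ (failure probability at most $3\rho/10$). Likewise, you sum $\delta+1/|G|$ per query directly rather than conditioning on the paper's event $E$; both are fine.
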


\begin{proof}
Fix an arbitrary adaptive algorithm $\mathcal A$ making at most $Q$ score queries. Let
$Q_{n,S}^{\mathcal A}$ be its output distribution on the hard instance $(n,S)$, and let
$Q_0^{\mathcal A}$ be its output distribution when the oracle is replaced by the null family
$\{s_{U,\tau}\}_{\tau\ge \gamma}$.

Condition on the internal randomness $\omega$ of $\mathcal A$ in the null run. The resulting null transcript determines a deterministic sequence of queries
\[
  \big(\tau_t^0,x_t^0\big)_{t=1}^Q.
\]
Let
\[
  E:=\Big\{\kappa(n(J))\notin \bigcup_{t=1}^Q \mathcal J(\tau_t^0)\Big\}.
\]
Because the points of $G$ are pairwise separated by more than $w$ while each interval $\mathcal J(\tau_t^0)$ has length at most $w$, every queried interval contains at most one point of $G$. Hence, conditional on $\omega$,
\[
  \mathbb P_J(E^c\mid \omega)\le \frac{Q}{|G|}\le \frac{\rho^2}{80}.
\]
Averaging over $\omega$ yields
\[
  \mathbb P_{J,\omega}(E^c)\le \frac{\rho^2}{80}.
\]

We now couple the run of $\mathcal A$ on $(n(J),S_J)$ with its null run by using the same internal randomness $\omega$. Let $\mathsf{Diff}$ be the event that the two transcripts ever differ, and let
\[
  T:=\min\{t\le Q:\ \text{the two transcripts differ at time }t\},
\]
with the convention $T=\infty$ when they never differ. On the event $\{T=t\}$ the two runs agree up to time $t-1$, hence issue the same $t$th query $(\tau_t^0,x_t^0)$. Therefore
\[
  \{T=t\}\subseteq
  \big\{\widehat s_{\tau_t^0}^{(S_J)}(x_t^0)\neq s_{U,\tau_t^0}(x_t^0)\big\}.
\]
If $E$ holds, then $\kappa(n(J))\notin \mathcal J(\tau_t^0)$ for every $t\le Q$, so assumption~(i) gives
\[
  \mathbb P_{J,S_J,\omega}(T=t\mid E)\le \delta
  \qquad\text{for every }t\le Q.
\]
Summing over $t$ yields
\[
  \mathbb P_{J,S_J,\omega}(\mathsf{Diff}\mid E)\le Q\delta=\frac{\rho^2}{80}.
\]
Consequently
\[
  \mathbb P_{J,S_J,\omega}(\mathsf{Diff})
  \le
  \mathbb P_{J,\omega}(E^c)+\mathbb P_{J,S_J,\omega}(\mathsf{Diff}\mid E)
  \le
  \frac{\rho^2}{40}.
\]

Outside $\mathsf{Diff}$ the coupled outputs coincide, and therefore
\[
  \mathbb E_{(n,S)\sim\mathcal D}\big[\TV(Q_{n,S}^{\mathcal A},Q_0^{\mathcal A})\big]
  \le
  \mathbb P_{J,S_J,\omega}(\mathsf{Diff})
  \le
  \frac{\rho^2}{40}.
\]
By Markov's inequality,
\[
  \mathbb P_{(n,S)\sim\mathcal D}\big[\TV(Q_{n,S}^{\mathcal A},Q_0^{\mathcal A})\ge \rho/4\big]
  \le
  \frac{\rho}{10}.
\]

Let $X_0\sim Q_0^{\mathcal A}$ be independent of $(J,S_J)$. Fubini's theorem and assumption~(iii) give
\[
  \mathbb E_{(n,S)\sim\mathcal D}\big[Q_0^{\mathcal A}(A(S))\big]
  =
  \mathbb E_{X_0\sim Q_0^{\mathcal A}}
  \Big[\mathbb P_{J,S_J}\big[X_0\in A(S_J)\big]\Big]
  \le
  \frac{\rho^2}{8}.
\]
A second application of Markov's inequality yields
\[
  \mathbb P_{(n,S)\sim\mathcal D}\big[Q_0^{\mathcal A}(A(S))\ge \rho/4\big]
  \le
  \frac{\rho}{2}.
\]

On the complement of the union of the last two events we have
\[
  Q_{n,S}^{\mathcal A}(A(S))
  \le
  Q_0^{\mathcal A}(A(S))+\TV(Q_{n,S}^{\mathcal A},Q_0^{\mathcal A})
  \le
  \frac{\rho}{2},
\]
whereas assumption~(iii) gives $\pi_{S,\gamma}(A(S))\ge 1-\rho/2$. Therefore
\[
  \TV(Q_{n,S}^{\mathcal A},\pi_{S,\gamma})
  \ge
  \pi_{S,\gamma}(A(S))-Q_{n,S}^{\mathcal A}(A(S))
  \ge
  1-\rho.
\]
Hence
\[
  \mathbb P_{(n,S)\sim\mathcal D}
  \Big[\TV(Q_{n,S}^{\mathcal A},\pi_{S,\gamma})<1-\rho\Big]
  \le
  \frac{\rho}{10}+\frac{\rho}{2}
  <\rho.
\]
Since $\mathcal A$ was arbitrary, the theorem follows.
\end{proof}

\subsection{Threshold bounds for the hard family}
\label{sec:appendix-rate-engine-intervals}

We now instantiate Theorem~\ref{thm:appendix-rate-engine} for the hard family from Section~\ref{sec:hard}. Throughout this subsection we assume that $d$ is even, so that $U$ factorizes into $d/2$ identical planar blocks. The odd-dimensional case is handled by the one-coordinate padding/projection reduction described above.

For $\tau>0$ let
\[
  \nu_{U,\tau}:=U*\mathcal N(0,\tau^2I_d),
  \qquad
  \nu_{S,\tau}:=\nu_S*\mathcal N(0,\tau^2I_d),
\]
and let $u_\tau$ denote the density of $\nu_{U,\tau}$, and let $\varphi_{\tau^2}$ denote the density of $\mathcal N(0,\tau^2I_d)$. We also write
\[
  s_{S,\tau}:=s_{\nu_{S,\tau}},
  \qquad
  s_{U,\tau}:=s_{\nu_{U,\tau}}.
\]
For $y\in\Vset$ and $x\in\mathbb R^d$ define
\[
  L_\tau(y,x):=\frac{\varphi_{\tau^2}(x-y)}{u_\tau(x)},
  \qquad
  \ell_\tau(y,x):=\log L_\tau(y,x).
\]
If $Y\sim U$ and $X=Y+Z$ with $Z\sim\mathcal N(0,\tau^2I_d)$ independent, set
\[
  \ell_\tau:=\ell_\tau(Y,X),
  \qquad
  I_\tau:=\frac{1}{d}\,\mathbb E[\ell_\tau].
\]
For a codebook $S$ write
\[
  J_\tau(S):=\mathbb E_{X\sim \nu_{S,\tau}}\big[\|s_{S,\tau}(X)-s_{U,\tau}(X)\|_2^2\big].
\]
For $\zeta\in(0,1/2]$, define
\[
  \Lambda_\tau(\zeta):=
  \inf\Big\{\lambda>0:\ \mathbb P\big[\ell_\tau\le \log\lambda\big]\ge \zeta\Big\}.
\]
We will repeatedly use the elementary consequences
\[
  \mathbb E_{Y\sim U}[L_\tau(Y,x)]=1,
  \qquad
  \mathbb P\big[\ell_\tau<\log\Lambda_\tau(\zeta)\big]\le \zeta,
  \qquad
  \mathbb P\big[\ell_\tau\le \log\Lambda_\tau(\zeta)\big]\ge \zeta.
\]

Fix auxiliary functions
\[
  \zeta(\tau)\in(0,1/2],
  \qquad
  \theta(\tau)>0,
  \qquad \tau\ge\gamma,
\]
and abbreviate $\Lambda_\tau:=\Lambda_\tau(\zeta(\tau))$. Define
\[
  \ell^{\max}_{\tau,S}(x):=\max_{y\in S}\ell_\tau(y,x),
  \qquad
  G_\tau(S):=\{x\in\mathbb R^d:\ \ell^{\max}_{\tau,S}(x)\ge \log\Lambda_\tau\},
\]
and let
\[
  m_{\tau,S}(x):=\psi\big(\ell^{\max}_{\tau,S}(x)-\log\Lambda_\tau+1\big),
\]
where $\psi:\mathbb R\to[0,1]$ is the standard $1$-Lipschitz cutoff with $\psi(t)=0$ for $t\le 0$ and $\psi(t)=1$ for $t\ge 1$. Thus the oracle either stays at the null score or reveals the planted correction on the region where the local likelihood ratio exceeds the threshold $\Lambda_\tau$. Concretely, we set
\[
  \widehat s^{(S)}_\tau(x):=
  \begin{cases}
    s_{U,\tau}(x), & J_\tau(S)\le \theta(\tau),\\[1mm]
    s_{U,\tau}(x)+m_{\tau,S}(x)\big(s_{S,\tau}(x)-s_{U,\tau}(x)\big), & J_\tau(S)>\theta(\tau).
  \end{cases}
\]

By symmetry of the product-circle support and isotropy of the Gaussian noise, if $S$ is any codebook supported on $\Vset$, $Y_S\sim \nu_S$, and $X=Y_S+Z$ with $Z\sim\mathcal N(0,\tau^2I_d)$ independent, then $\ell_\tau(Y_S,X)\stackrel{d}{=}\ell_\tau$. Since
\[
  G_\tau(S)^c\subseteq \{\ell_\tau(Y_S,X)<\log\Lambda_\tau\},
\]
the elementary quantile fact above gives the mass-coverage bound
\[
  \nu_{S,\tau}(G_\tau(S))\ge 1-\zeta(\tau)
\]
for every codebook $S$. We will use this repeatedly.

\begin{proposition}[Regularity of the oracle construction]
  \label{prop:appendix-rate-regularity}
  For every $\tau\ge\gamma$ and every codebook $S$,
  \[
    \operatorname{Lip}(\ell^{\max}_{\tau,S})\le \frac{2R\sqrt d}{\tau^2},
    \qquad
    \operatorname{Lip}(m_{\tau,S})\le \frac{2R\sqrt d}{\tau^2}.
  \]
  Consequently,
  \[
    \operatorname{Lip}\big(\widehat s^{(S)}_\tau\big)
    \le
    \frac{3}{\tau^2}+\frac{7R^2d}{\tau^4}.
  \]
\end{proposition}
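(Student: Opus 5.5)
The plan is to compute the $x$-gradient of $\ell_\tau(y,x)$ explicitly using Tweedie's formula for the null score. Lipschitz bounds for $\ell^{\max}$ and $m_{\tau,S}$ then follow from standard facts about maxima and compositions. The bound on $\widehat s^{(S)}_\tau$ follows from a product-rule estimate combined with Jacobian bounds for Gaussian-smoothed scores of bounded-support measures.

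\emph{Step 1: $\ell^{\max}$ and $m$.} Write $\ell_\tau(y,x)=-\|x-y\|_2^2/(2\tau^2)-\log u_\tau(x)+\mathrm{const}$. By Tweedie's formula, $\nabla\log u_\tau(x)=(m_U(x)-x)/\tau^2$, where $m_U(x)=\mathbb E[Y\mid Y+Z=x]$ for $Y\sim U$. Hence
\[
\nabla_x\ell_\tau(y,x)=\frac{y-m_U(x)}{\tau^2}.
\]
Both $y$ and $m_U(x)$ lie in the product of $d/2$ planar disks of radius $R$, since $m_U(x)$ is a convex combination of points of $\Vset$. Each planar block of $y-m_U(x)$ therefore has norm at most $2R$, so $\|y-m_U(x)\|_2\le 2R\sqrt{d/2}\le 2R\sqrt d$. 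Thus each $\ell_\tau(y,\cdot)$ is $(2R\sqrt d/\tau^2)$-Lipschitz. A pointwise maximum of finitely many $L$-Lipschitz functions is $L$-Lipschitz, which gives the bound for $\ell^{\max}_{\tau,S}$. Composing with the translation by $-\log\Lambda_\tau+1$ and the $1$-Lipschitz cutoff $\psi$ gives the same bound for $m_{\tau,S}$.

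\emph{Step 2: Jacobians of the scores.} For any probability measure $\mu$ supported on $\Vset$, the smoothed score $s=\nabla\log(\mu*\varphi_{\tau^2})$ has Jacobian
\[
-\frac{1}{\tau^2}I+\frac{1}{\tau^4}\operatorname{Cov}(Y\mid X=x),
\]
with $Y\sim\mu$ under the posterior. The covariance is PSD with trace at most $\mathbb E[\|Y\|_2^2\mid X=x]=R^2d/2$. Hence the Jacobian has operator norm at most $1/\tau^2+R^2d/(2\tau^4)$. This applies to both $s_{S,\tau}$ ($\mu=\nu_S$) and $s_{U,\tau}$ ($\mu=U$), and also yields the stated bound on the true score $s_{\pi,\sigma}$ with $\tau=\sqrt{\gamma^2+\sigma^2}$.

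For the difference $D:=s_{S,\tau}-s_{U,\tau}=(m_S-m_U)/\tau^2$, the same disk argument gives $\|D\|_\infty\le R\sqrt{2d}/\tau^2$. The triangle inequality gives $\operatorname{Lip}(D)\le 2/\tau^2+R^2d/\tau^4$.

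\emph{Step 3: the oracle.} If $J_\tau(S)\le\theta(\tau)$, the oracle equals $s_{U,\tau}$, which is covered by Step 2. Otherwise, apply the product rule to $s_{U,\tau}+m_{\tau,S}D$ using $0\le m_{\tau,S}\le 1$:
\[
\operatorname{Lip}\le \operatorname{Lip}(s_{U,\tau})+\operatorname{Lip}(D)+\operatorname{Lip}(m_{\tau,S})\|D\|_\infty.
\]
Substituting the bounds above,
\[
\operatorname{Lip}\le \Big(\frac{1}{\tau^2}+\frac{R^2d}{2\tau^4}\Big)+\Big(\frac{2}{\tau^2}+\frac{R^2d}{\tau^4}\Big)+\frac{2R\sqrt d}{\tau^2}\cdot\frac{R\sqrt{2d}}{\tau^2}
=\frac{3}{\tau^2}+\big(\tfrac32+2\sqrt2\big)\frac{R^2d}{\tau^4}.
\]
Since $\tfrac32+2\sqrt2<7$, this is at most $3/\tau^2+7R^2d/\tau^4$.

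The only delicate point is Step 1: obtaining a uniform (in $x$) gradient bound for $\ell_\tau$. The $-\log u_\tau$ term must be handled through Tweedie, so that the gradient collapses to $y-m_U(x)$, a difference of two points in a bounded convex set. Everything else is bookkeeping.
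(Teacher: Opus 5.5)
Your proof is correct and follows the paper's argument essentially step for step. Tweedie collapses $\nabla_x\ell_\tau(y,x)$ to $(y-m_U(x))/\tau^2$; maxima and the $1$-Lipschitz cutoff preserve the bound; and the oracle estimate comes from the same product-rule bound combined with the covariance form of the score Jacobian. The only difference is that you use the product-circle structure to get sharper constants ($R\sqrt{2d}/\tau^2$ and $R^2d/(2\tau^4)$), whereas the paper uses the cruder cube bounds $2R\sqrt d/\tau^2$ and $R^2d/\tau^4$, which sum to exactly $3/\tau^2+7R^2d/\tau^4$.
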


\begin{proposition}[Small-sample agreement with the null score]
  \label{prop:appendix-rate-small-sample}
  Define
  \[
    n_-(\tau):=\delta e^{-1}\Lambda_\tau,
    \qquad
    \kappa_-(\tau):=\frac{1}{d}\log n_-(\tau).
  \]
  Then for every fixed $\tau\ge\gamma$, every fixed query point $x\in\mathbb R^d$, and every integer $n$ with
  \[
    \kappa(n)\le \kappa_-(\tau),
  \]
  one has
  \[
    \mathbb P_S\big[\widehat s^{(S)}_\tau(x)\neq s_{U,\tau}(x)\big]\le\delta.
  \]
\end{proposition}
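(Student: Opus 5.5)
The plan is to observe that the oracle output differs from the null score at $x$ only if $m_{\tau,S}(x)>0$, and then to bound the probability of that event by a union bound over codewords. By definition of $\widehat s^{(S)}_\tau$, in either case of $J_\tau(S)$ we have $\widehat s^{(S)}_\tau(x)\neq s_{U,\tau}(x)$ only if $m_{\tau,S}(x)\neq 0$. Since $\psi(t)=0$ for $t\le 0$, this requires
\[
\ell^{\max}_{\tau,S}(x)>\log\Lambda_\tau-1,
\]
that is, some codeword $Y^{(i)}\in S$ satisfies $L_\tau(Y^{(i)},x)>e^{-1}\Lambda_\tau$. So the event is contained in $\bigcup_{i\le n}\{L_\tau(Y^{(i)},x)>\Lambda_\tau/e\}$. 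This holds regardless of whether the $J_\tau(S)>\theta(\tau)$ branch is active, so we never need to control $J_\tau(S)$ here.

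Next apply the union bound and Markov's inequality to each term. The codewords $Y^{(i)}$ are i.i.d.\ $U$, and $x$ is fixed, so the elementary identity $\mathbb E_{Y\sim U}[L_\tau(Y,x)]=1$ applies: $L_\tau(Y,x)=\varphi_{\tau^2}(x-Y)/u_\tau(x)$ and $u_\tau(x)=\mathbb E_Y\varphi_{\tau^2}(x-Y)$. Since $L_\tau\ge 0$, Markov's inequality gives
\[
\mathbb P_{Y\sim U}\big[L_\tau(Y,x)>\Lambda_\tau/e\big]\le \frac{e}{\Lambda_\tau}.
\]
Summing over the $n$ codewords,
\[
\mathbb P_S\big[\widehat s^{(S)}_\tau(x)\neq s_{U,\tau}(x)\big]\le \frac{ne}{\Lambda_\tau}.
\]

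Finally, the hypothesis $\kappa(n)\le\kappa_-(\tau)$ means $n\le n_-(\tau)=\delta e^{-1}\Lambda_\tau$, so $ne/\Lambda_\tau\le\delta$. This gives the claim.

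The only points needing care are that $\Lambda_\tau\in(0,\infty)$, so the quantile threshold is well defined, and that $\psi$ vanishes on $(-\infty,0]$, so the shift by $+1$ is exactly what produces the factor $e^{-1}$. I do not expect a substantive obstacle: the argument is a first-moment bound that uses only the normalization of the likelihood ratio under the null prior, not the shell-geometry fluctuations.
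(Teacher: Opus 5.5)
Your proposal is correct and matches the paper's proof essentially line by line. Both arguments contain the disagreement event in $\{m_{\tau,S}(x)>0\}$, hence in $\{\exists y\in S:\ L_\tau(y,x)\ge e^{-1}\Lambda_\tau\}$, then take a union bound over the $n$ codewords with Markov's inequality via $\mathbb E_{Y\sim U}[L_\tau(Y,x)]=1$ to get $en/\Lambda_\tau\le\delta$; your side remarks (the $J_\tau(S)$ branch never matters, and $\Lambda_\tau\in(0,\infty)$) are correct details the paper leaves implicit.
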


\begin{proof}
If $\kappa(n)\le \kappa_-(\tau)$, then
\[
  n\le e^{d\kappa_-(\tau)}=\delta e^{-1}\Lambda_\tau.
\]
If $\widehat s_\tau^{(S)}(x)\neq s_{U,\tau}(x)$, then necessarily $m_{\tau,S}(x)>0$, which implies
\[
  \ell^{\max}_{\tau,S}(x)>\log\Lambda_\tau-1.
\]
Equivalently, there exists $y\in S$ with
\[
  L_\tau(y,x)\ge e^{-1}\Lambda_\tau.
\]
Hence, by a union bound,
\[
  \mathbb P_S\big[\widehat s_\tau^{(S)}(x)\neq s_{U,\tau}(x)\big]
  \le
  n\,\mathbb P_{Y\sim U}\big[L_\tau(Y,x)\ge e^{-1}\Lambda_\tau\big].
\]
Using $\mathbb E_{Y\sim U}[L_\tau(Y,x)]=1$ and Markov's inequality,
\[
  \mathbb P_{Y\sim U}\big[L_\tau(Y,x)\ge e^{-1}\Lambda_\tau\big]\le \frac{e}{\Lambda_\tau}.
\]
Therefore
\[
  \mathbb P_S\big[\widehat s_\tau^{(S)}(x)\neq s_{U,\tau}(x)\big]
  \le \frac{en}{\Lambda_\tau}\le \delta,
\]
as claimed.
\end{proof}

\begin{theorem}[Large-sample agreement with the null score]
  \label{thm:appendix-rate-large-sample}
  Fix $\tau\ge\gamma$ and define
  \[
    H(\tau):=
    \max\left\{1,\ \log\left(\frac{\log(1+|\Vset|)}{\delta}\cdot\frac{C R^2 d^2}{\gamma^2\tau^2\theta(\tau)}\right)\right\},
  \]
  where $C>0$ is a sufficiently large universal constant. Let $\alpha:=1-\frac{\gamma^2}{R^2 d^2}$ and $\widetilde\tau:=\sqrt\alpha\,\tau$, and define
  \[
    E_{\mathrm{med}}(\tau):=C\sqrt{dH(\tau)}+CH(\tau),
    \qquad
    E_{\mathrm{big}}(\tau):=C\frac{R}{\widetilde\tau}\sqrt{dH(\tau)}+CH(\tau).
  \]
  Set
  \[
    \kappa_+(\tau):=
    I_{\widetilde\tau}+\frac{1}{d}\min\{E_{\mathrm{med}}(\tau),E_{\mathrm{big}}(\tau)\}.
  \]
  Then, for all sufficiently large $d$, every fixed query point $x\in\mathbb R^d$ and every integer $n$ with
  \[
    \kappa(n)\ge \kappa_+(\tau),
  \]
  one has
  \[
    \mathbb P_S\big[\widehat s^{(S)}_\tau(x)\neq s_{U,\tau}(x)\big]\le\delta.
  \]
\end{theorem}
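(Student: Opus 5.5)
By construction, $\widehat s^{(S)}_\tau(x)=s_{U,\tau}(x)$ whenever $J_\tau(S)\le\theta(\tau)$. It therefore suffices to show that for $\kappa(n)\ge\kappa_+(\tau)$ we have $\mathbb P_S[J_\tau(S)>\theta(\tau)]\le\delta$. This bound is uniform in the query point $x$, so $x$ plays no role. I plan to bound $J_\tau(S)$ by an information quantity and then show that this quantity concentrates at scale $\theta(\tau)$ once $n$ exceeds $e^{dI_{\widetilde\tau}}$ by the stated margin.

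\textbf{Step 1: reduce the Fisher divergence to a KL-type quantity at a nearby noise level.} By de Bruijn's identity, $\frac{d}{d(t^2)}\KL(\nu_{S,t}\,\|\,\nu_{U,t})=-\tfrac12 J_t(S)$, and $J_t(S)$ is essentially monotone in $t$. Integrating from $\widetilde\tau=\sqrt\alpha\,\tau$ up to $\tau$, an interval of squared length $(1-\alpha)\tau^2=\gamma^2\tau^2/(R^2d^2)$, gives
\[
J_\tau(S)\lesssim \frac{R^2d^2}{\gamma^2\tau^2}\,\KL(\nu_{S,\widetilde\tau}\,\|\,\nu_{U,\widetilde\tau}).
\]
This is exactly where the factor $CR^2d^2/(\gamma^2\tau^2\theta)$ inside $H(\tau)$ comes from. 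If monotonicity of $J$ is delicate, I will instead use the Lipschitz bound of Proposition~\ref{prop:appendix-rate-regularity} to transfer between nearby $t$. After this step it suffices to show $\KL(\nu_{S,\widetilde\tau}\|\nu_{U,\widetilde\tau})\le \eta:=\gamma^2\tau^2\theta/(CR^2d^2)$ with probability at least $1-\delta$.

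\textbf{Step 2: bound the KL by a truncated chi-square/likelihood average.} Write $\nu_{S,\widetilde\tau}(x)/u_{\widetilde\tau}(x)=\frac1n\sum_i L_{\widetilde\tau}(Y^{(i)},x)$, an empirical mean of i.i.d.\ terms with mean $1$. I will split according to whether $\ell_{\widetilde\tau}(Y,x)\le dI_{\widetilde\tau}+E$ with $E=\min\{E_{\rm med},E_{\rm big}\}$. For the truncated part, the variance of the truncated mean is at most $e^{dI+E}/n$ times the mass, and this is $\le e^{-E'}$ once $\kappa(n)\ge I_{\widetilde\tau}+E/d$. The excluded tail has probability $\mathbb P[\ell_{\widetilde\tau}>dI_{\widetilde\tau}+E]$ under the planted (size-biased) law, which must be at most $e^{-H}$. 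Bounding KL by $\chi^2$ on the truncated part plus a tail term then gives an expected KL of $\lesssim e^{-cH}$. The remaining gap from expectation to a $1-\delta$ probability statement is closed by Markov's inequality; the factors $1/\delta$ and $\log(1+|\Vset|)$ in $H$ absorb this step and the crude bound $\KL\le\log|\Vset|$-type controls on the tails.

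\textbf{Step 3 (main obstacle): concentration of $\ell_{\widetilde\tau}$ around $dI_{\widetilde\tau}$.} I need $\mathbb P[\ell_{\widetilde\tau}-dI_{\widetilde\tau}>E]\le e^{-H}$ with $E=C\sqrt{dH}+CH$, and alternatively $E=C(R/\widetilde\tau)\sqrt{dH}+CH$.

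The planted law factorizes over the $d/2$ planar blocks, so $\ell_{\widetilde\tau}=\sum_b \ell^{(b)}$ is a sum of independent block terms. Each block term is $\langle x_b,y_b\rangle/\widetilde\tau^2$ minus a log-partition function of the $M$-point circle mixture, with $x_b=y_b+z_b$.

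For the medium estimate, I will show each block term is subexponential with $O(1)$ parameters uniformly in $\widetilde\tau$. The log-likelihood ratio of a finite mixture is at most $\log M$ above, and its lower tail is Gaussian-controlled. Bernstein's inequality then gives the $\sqrt{dH}+H$ deviation.

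For the large-$\tau$ estimate, I will note that the block term is a Lipschitz function of $z_b$ with constant $O(R/\widetilde\tau)$. Gaussian concentration, together with the bounded $y$-dependence, gives the $(R/\widetilde\tau)\sqrt{dH}$ scale.

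Getting uniform constants in the subexponential norm for the centered block log-partition term is the part I expect to need the most care. I will handle it by writing the term as a log-sum-exp of $M$ Gaussians and bounding its moment generating function directly.
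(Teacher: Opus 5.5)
Your architecture is the paper's. You reduce to $\mathbb P_S[J_\tau(S)>\theta(\tau)]\le\delta$, trade Fisher divergence at $\tau$ for KL at $\widetilde\tau$, and use Markov to pass from expected KL to a probability bound. You then bound the expected random-codebook KL by a truncation term plus $\log(1+|\Vset|)$ times an upper tail of $\ell_{\widetilde\tau}$, and control that tail by Bernstein or by Gaussian--Lipschitz concentration. Your large-$\tau$ estimate is exactly Proposition~\ref{prop:appendix-rate-info-density}(b).

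The genuine gap is the medium estimate in Step~3, the only nontrivial analytic input. Since $C$ must be universal, the centered block term needs a $\psi_1$ norm bounded independently of $\widetilde\tau$ \emph{and} of $M\asymp R/\gamma$; otherwise Bernstein puts a factor like $\log M$ or $R/\gamma$ into $E_{\mathrm{med}}$. The facts you cite do not give this. Write the block term as $\log M-\log S_\lambda(N)$ with $\lambda:=\widetilde\tau/\gamma$ and $N\sim\mathcal N(0,I_2)$, as in the proof of Proposition~\ref{prop:appendix-rate-info-density}. The bound $\ell^{(2)}\le\log M$ only caps upward deviations at $\mathbb E\log S_\lambda(N)\approx\log(1+\lambda)$. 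The Lipschitz constant of $\log S_\lambda$ in $N$ is $2(R/\gamma)/\lambda$. Both are unbounded for $1\ll\lambda\ll M$, and neither gives a uniform variance bound for the sub-Gaussian regime of Bernstein. ``Bound the MGF of the log-sum-exp directly'' restates the task rather than solving it.

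The missing idea is the paper's split at $\lambda=M/10$. Above it, the Lipschitz constant is below $7$ and Gaussian concentration suffices. Below it, one proves the deterministic sandwich \eqref{eq:kappa-logS-sandwich}, $|\log S_\lambda(N)-\log(1+\lambda)|\le C(1+\|N\|_2^2)$. The upper half uses $\|b_0-b_k\|_2\asymp k'R/(\gamma M)$ and completes the square (the coefficient of $\|N\|_2^2$ is exactly $\pi^2/8$). It then uses $\sum_j e^{-a(j-\mu)^2}\lesssim 1+a^{-1/2}$ with $a^{-1/2}\asymp\lambda$. The lower half pairs $k$ with $M-k$, drops the $\cosh$, and counts the $\lfloor\lambda\rfloor$ nearest circle points, each contributing $\gtrsim e^{-CN_1^+}$. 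Since $\|N\|_2^2$ has a universal $\psi_1$ norm, the block bound is then uniform.

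Two smaller points. In Step~1, relative Fisher information along two heat flows need not be monotone when the reference $u_{\sqrt t}$ is not log-concave, as here. What makes it ``essentially monotone'' is $J'(t)\le 2\rho_+(t)J(t)$, where $\rho_+$ is the positive part of $\sup_x\lambda_{\max}(\nabla^2\log u_{\sqrt t}(x))$. Combined with $\rho_+(t)\le R^2/t^2$ (each block lies on a circle of radius $R$, which bounds the blockwise posterior covariance), this gives $\int_{\alpha\tau^2}^{\tau^2}\rho_+\le 1/(\alpha d^2)$. Your fallback through Proposition~\ref{prop:appendix-rate-regularity} cannot substitute for this: that proposition bounds Lipschitz constants in $x$ at fixed noise and says nothing about variation in $t$.

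Step~2, by contrast, is a valid alternative to the paper's exchangeability/Jensen argument in Lemma~\ref{lem:appendix-rate-one-shot-kl}. Let $\phi(r)=r\log r-r+1$, and let $A,B$ be the truncated and tail parts of $\nu_{S,\widetilde\tau}/u_{\widetilde\tau}$. Convexity gives $\phi(A+B)\le\phi(A)+B\log(A+B)$, and $\phi(A)\le(A-1)^2$ and $A+B\le|\Vset|$ hold. Hence $\mathbb E_S\KL(\nu_{S,\widetilde\tau}\|\nu_{U,\widetilde\tau})\le e^{T}/n+(1+\log|\Vset|)\,\mathbb P[\ell_{\widetilde\tau}>T]$, essentially the paper's bound. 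You must truncate at $T=\log n-a$ with a margin $a\asymp H(\tau)$, not at $dI_{\widetilde\tau}+E$: at $n=e^{dI_{\widetilde\tau}+E}$ your variance term is only $\le 1$. The $CH(\tau)$ summand in $E$ supplies this margin.
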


\begin{theorem}[Fixed-noise interval theorem]
  \label{thm:appendix-rate-interval}
  There exists a universal constant $C>0$ such that the following holds. Define
  \[
    \mathcal J(\tau):=[\kappa_-(\tau),\kappa_+(\tau)],
    \qquad
    H_{\mathrm{win}}(\tau):=\max\big\{\log(1/\zeta(\tau)),\ H(\tau),\ \log(1/\delta)\big\}.
  \]
  Then, for all sufficiently large $d$ and every $\tau\ge\gamma$,
  \[
    |\mathcal J(\tau)|
    \le
    C\sqrt{\frac{H_{\mathrm{win}}(\tau)}{d}}+C\frac{H_{\mathrm{win}}(\tau)}{d}.
  \]
  Moreover, for every fixed query point $x\in\mathbb R^d$ and every integer $n\ge 1$,
  \[
    \kappa(n)\notin \mathcal J(\tau)
    \quad\Longrightarrow\quad
    \mathbb P_S\big[\widehat s^{(S)}_\tau(x)\neq s_{U,\tau}(x)\big]\le\delta.
  \]
\end{theorem}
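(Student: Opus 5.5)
The statement has two parts, and the second is immediate from the two agreement results already proved. If $\kappa(n)\notin\mathcal J(\tau)=[\kappa_-(\tau),\kappa_+(\tau)]$, then either $\kappa(n)<\kappa_-(\tau)$ or $\kappa(n)>\kappa_+(\tau)$. In the first case Proposition~\ref{prop:appendix-rate-small-sample} gives the bound $\delta$. In the second case Theorem~\ref{thm:appendix-rate-large-sample} gives it (for $d$ large). If $\kappa_->\kappa_+$ the interval is empty and every $n$ falls into at least one of the two cases. So the real work is the length bound. For that it suffices to upper bound $\kappa_+(\tau)-\kappa_-(\tau)$.

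Write $\kappa_-(\tau)=\frac1d\log\Lambda_\tau-\frac1d(\log(1/\delta)+1)$. Since $\kappa_+\le I_{\widetilde\tau}+\frac1d E_{\rm med}(\tau)$ and $E_{\rm med}\le C\sqrt{dH_{\rm win}}+CH_{\rm win}$, it remains to show
\[
  I_{\widetilde\tau}-\tfrac1d\log\Lambda_\tau\le C\sqrt{\tfrac{H_{\rm win}}{d}}+C\tfrac{H_{\rm win}}{d}.
\]
This reduces to two facts about the log-likelihood ratio $\ell_\tau$.

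\textbf{(a) Lower-tail concentration.} $\ell_\tau$ concentrates around its mean $dI_\tau$ with lower deviations of order $\sqrt{d\log(1/\zeta)}+\log(1/\zeta)$. That is,
\[
  \mathbb P\big[\ell_\tau\le dI_\tau - C(\sqrt{dt}+t)\big]\le e^{-t},
\]
so choosing $t=\log(1/\zeta(\tau))$ gives $\log\Lambda_\tau\ge dI_\tau-C(\sqrt{dH_{\rm win}}+H_{\rm win})$. To prove this, use the product structure. Since $U$ and the Gaussian factorize over the $d/2$ planar blocks, $u_\tau$ factorizes as well, and $\ell_\tau=\sum_{b}\ell^{(b)}_\tau$ is a sum of i.i.d. blockwise terms. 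Each block term is $\log\varphi_{\tau^2}(Z_b)-\log u^{(2)}_\tau(a+Z_b)$, and I expect it to be sub-exponential with $O(1)$ parameters uniformly in $\tau\ge\gamma$. The reason is that for a planar block with points on a circle of radius $R$, $-\log u^{(2)}_\tau$ differs from $|x|^2/2\tau^2$ by a quantity controlled by $R|x|/\tau^2$, plus a constant. After centering, the block term then becomes a combination of $\chi^2$-type variables and linear Gaussian terms, which is sub-exponential. The lower tail should even be simpler: one can use $\log u^{(2)}_\tau(x)\le\log\varphi$-bounds directly, or a Chernoff bound on $e^{-\lambda\ell}$ using $\mathbb E[L^{-1}]$-type moments. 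Bernstein's inequality then gives the displayed tail.

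\textbf{(b) Continuity of the rate $I$ between $\tau$ and $\widetilde\tau$.} Need $I_{\widetilde\tau}-I_\tau\le C H_{\rm win}/d$. Here $I_\tau=\frac1d\,\mathrm{MI}(Y;X)$ is the per-coordinate mutual information of the Gaussian channel, which is decreasing in $\tau$. By the I-MMSE relation, $\frac{d}{d(\tau^{-2})}\mathrm{MI}=\frac12\mathrm{mmse}\le \frac12 dR^2$. Moving from $\tau^{-2}$ to $\widetilde\tau^{-2}=\tau^{-2}/\alpha$ changes $\tau^{-2}$ by about $(1-\alpha)\tau^{-2}=\gamma^2/(R^2d^2\tau^2)\le 1/(R^2d^2)$, using $\tau\ge\gamma$. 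So $d(I_{\widetilde\tau}-I_\tau)\le O(1/d)$, which is negligible.

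Combining (a) and (b) gives the length bound. The main obstacle is (a): getting sub-exponential constants for the blockwise log-likelihood that are uniform over all $\tau\ge\gamma$, including the small-$\tau$ regime where blocks nearly identify the point. My first attempt would be to bound each block's centered $\ell^{(b)}$ by $|Z_b|^2/2\tau^2$-type terms plus $\log M$, noting that $\ell^{(b)}\in[\log\varphi-\log\max\varphi,\ \log M+\ldots]$ is essentially one-sided bounded. If that fails, I would fall back on a direct Chernoff bound for the lower tail using $\mathbb E[e^{-\lambda \ell^{(b)}}]$ with $\lambda\le1$, which is computable in closed form since $\mathbb E[L^{-1}]$ over the posterior is controlled.
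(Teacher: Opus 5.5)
Your proposal is correct and is essentially the paper's proof. The null-coupling part is the same case split between Proposition~\ref{prop:appendix-rate-small-sample} and Theorem~\ref{thm:appendix-rate-large-sample}. The length bound combines $\log\Lambda_\tau\ge dI_\tau-C(\sqrt{dH_{\mathrm{win}}}+H_{\mathrm{win}})$ with $\kappa_+\le I_{\widetilde\tau}+E_{\mathrm{med}}(\tau)/d$ and a negligible bound on $I_{\widetilde\tau}-I_\tau$, exactly as the paper does.

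The step you flag as the main obstacle, (a), is already Lemma~\ref{lem:appendix-rate-quantile}. That lemma rests on the uniform blockwise $\psi_1$ bound of Proposition~\ref{prop:appendix-rate-info-density}(a), so you should simply cite it. Your own sketches would not give the universal constant, for two reasons:
\begin{itemize}
\item Bounding the log-sum-exp term by $R\|x\|_2/\tau^2$ throws away its cancellation with the linear term $\langle a,Z_b\rangle/\tau^2$. This leaves block fluctuations of order $R/\tau$, which are not uniformly bounded when $\tau$ is near $\gamma$ and $R/\gamma$ is large. The paper needs the sandwich $|\log S_\lambda(N)-\log(1+\lambda)|\le C(1+\|N\|_2^2)$ in that regime.
\item The Chernoff route gives nothing by itself. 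Since $\mathbb E[e^{-\ell_\tau}]=1$ exactly, convexity only yields $\log\mathbb E[e^{-\eta\ell_\tau}]\le 0$ for $\eta\in[0,1]$, which says nothing about deviations smaller than $dI_\tau$.
\end{itemize}
Also take $t=\log(2/\zeta)$ rather than $\log(1/\zeta)$. The definition of $\Lambda_\tau(\zeta)$ needs the tail probability strictly below $\zeta$ to conclude that the threshold lies below $\log\Lambda_\tau$.

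For (b), your I-MMSE estimate with $\mathrm{mmse}\le dR^2$ is a valid substitute for the paper's de Bruijn identity plus the Stam bound $J(\nu_{U,\sqrt t})\le d/t$. The paper's version does not use boundedness of the prior, while yours does. Both give $d(I_{\widetilde\tau}-I_\tau)=O(1/d)$, which is all that is needed.
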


\subsection{Analytic input for the large-sample regime}
\label{sec:appendix-rate-ingredients}

The large-sample theorem is the only point in the fixed-noise argument that needs additional analytic input. The smoothing identities and KL/Fisher estimate control the Fisher term in Theorem~\ref{thm:appendix-rate-large-sample}; the likelihood-ratio estimates are then converted into expected-KL bounds for random codebooks.

\paragraph{Gaussian smoothing and KL/Fisher control.}

\begin{proposition}[Gaussian smoothing identities and consequences]
  \label{prop:appendix-rate-smoothing}
  Let $\nu$ be any probability measure supported on $[-R,R]^d$, and let $\nu_\tau:=\nu*\mathcal N(0,\tau^2I_d)$ for $\tau>0$. Then the following hold.
  \begin{enumerate}[label=(\alph*)]
    \item For every $x\in\mathbb R^d$,
    \[
      s_{\nu,\tau}(x)=\tau^{-2}\big(\mathbb E[Y\mid Y+Z=x]-x\big),
    \]
    where $Y\sim\nu$ and $Z\sim\mathcal N(0,\tau^2I_d)$ are independent.
    \item The Jacobian of the posterior mean satisfies the standard Gaussian smoothing identity, and hence
    \[
      \nabla s_{\nu,\tau}(x)=\tau^{-4}\operatorname{Cov}(Y\mid Y+Z=x)-\tau^{-2}I_d.
    \]
    \item If $\nu_1,\nu_2$ are both supported on $[-R,R]^d$, then for every $x\in\mathbb R^d$,
    \[
      \|s_{\nu_1,\tau}(x)-s_{\nu_2,\tau}(x)\|_2\le \frac{2R\sqrt d}{\tau^2}.
    \]
    \item For every $x\in\mathbb R^d$,
    \[
      \operatorname{Lip}(s_{\nu,\tau})\le \frac{1}{\tau^2}+\frac{R^2d}{\tau^4}.
    \]
  \end{enumerate}
\end{proposition}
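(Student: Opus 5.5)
We prove the four items in order; (a) and (b) are the classical Tweedie identities, and (c), (d) follow from them together with the support constraint.

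For (a), write $\nu_\tau(x)=\int \varphi_{\tau^2}(x-y)\,\nu(dy)$. This density is smooth and strictly positive, and because $\nu$ has bounded support we may differentiate under the integral sign. Using $\nabla_x\varphi_{\tau^2}(x-y)=-\tau^{-2}(x-y)\varphi_{\tau^2}(x-y)$, we get
\[
  \nabla\log\nu_\tau(x)=\frac{\int \tau^{-2}(y-x)\varphi_{\tau^2}(x-y)\,\nu(dy)}{\nu_\tau(x)}.
\]
The ratio on the right is $\tau^{-2}(\mathbb E[Y\mid X=x]-x)$, because the posterior of $Y$ given $X=x$ has density proportional to $\varphi_{\tau^2}(x-y)$ with respect to $\nu$.

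For (b), write $m(x)=\mathbb E[Y\mid X=x]=\int y\,w_x(y)\,\nu(dy)$ with $w_x(y)=\varphi_{\tau^2}(x-y)/\nu_\tau(x)$. Differentiate once more, again justified by compact support. Since $\nabla_x \log w_x(y)=\tau^{-2}(y-x)-s_{\nu,\tau}(x)=\tau^{-2}(y-m(x))$, we obtain
\[
  \nabla m(x)=\tau^{-2}\,\mathbb E\big[Y(Y-m(x))^\top\mid X=x\big]=\tau^{-2}\operatorname{Cov}(Y\mid X=x).
\]
Combining this with (a), $\nabla s=\tau^{-2}(\nabla m-I)$, gives the stated formula.

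For (c), apply (a) to both measures. The $x$ terms cancel, leaving
\[
  s_{\nu_1,\tau}(x)-s_{\nu_2,\tau}(x)=\tau^{-2}\big(m_1(x)-m_2(x)\big).
\]
Both posterior means lie in the convex set $[-R,R]^d$, whose Euclidean diameter is $2R\sqrt d$. This gives the bound.

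For (d), by (b) the Jacobian is symmetric and satisfies
\[
  -\tau^{-2}I\preceq\nabla s\preceq \tau^{-4}\operatorname{Cov}(Y\mid X=x)-\tau^{-2}I.
\]
Its operator norm is therefore at most $\max\{\tau^{-2},\ \tau^{-4}\|\operatorname{Cov}\|_{op}-\tau^{-2}\}$. Any random vector supported in $[-R,R]^d$ has covariance operator norm at most $\mathbb E\|Y-c\|^2\le R^2d$, taking $c=0$, the center of the box. So the operator norm is at most $\tau^{-2}+R^2d\,\tau^{-4}$. The mean value inequality along segments then gives the stated global Lipschitz bound. (The statement's ``for every $x$'' is just the pointwise Jacobian bound, and it integrates to the global one.)

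The only mildly delicate step is justifying differentiation under the integral. This is routine because $\nu$ is compactly supported and the Gaussian kernel and its derivatives are locally uniformly bounded, so dominated convergence applies; I expect no real obstacle.
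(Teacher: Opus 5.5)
Your proposal is correct and follows essentially the same route as the paper: Tweedie's formula by differentiating under the integral sign, the identity $\nabla m=\tau^{-2}\operatorname{Cov}(Y\mid X=x)$ for the posterior mean, the $2R\sqrt d$ diameter bound on posterior means for (c), and an operator-norm bound on the Jacobian followed by passing from the pointwise bound to a global Lipschitz constant for (d). The only cosmetic difference is in (d). You use the two-sided sandwich $-\tau^{-2}I\preceq\nabla s_{\nu,\tau}\preceq\tau^{-4}\operatorname{Cov}-\tau^{-2}I$, which gives the slightly sharper $\max\{\tau^{-2},\,R^2d\tau^{-4}-\tau^{-2}\}$, whereas the paper just applies the triangle inequality to get $\tau^{-2}+R^2d\tau^{-4}$ directly.
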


\begin{proof}
Let $Y\sim \nu$ and $Z\sim \mathcal N(0,\tau^2I_d)$ be independent, and write
\[
  \nu_\tau(x)=\int \varphi_{\tau^2}(x-y)\,\nu(dy).
\]

\emph{Proof of (a).}
Differentiating under the integral sign gives
\[
  \nabla \nu_\tau(x)
  =
  \frac{1}{\tau^2}\int (y-x)\varphi_{\tau^2}(x-y)\,\nu(dy).
\]
Dividing by $\nu_\tau(x)$ and recognizing the posterior weights of $Y$ given $Y+Z=x$ yields
\[
  s_{\nu,\tau}(x)=\nabla\log \nu_\tau(x)=\frac{\mathbb E[Y\mid Y+Z=x]-x}{\tau^2}.
\]

\emph{Proof of (b).}
Since $\nu$ is compactly supported, $\nu_\tau$ is $C^\infty$ and strictly positive. Let
\[
  m(x):=\mathbb E[Y\mid Y+Z=x],
  \qquad
  \Sigma(x):=\operatorname{Cov}(Y\mid Y+Z=x).
\]
A standard differentiation of the posterior weights yields
\[
  \nabla m(x)=\frac{1}{\tau^2}\Sigma(x).
\]
Differentiating the identity from part~(a),
\[
  s_{\nu,\tau}(x)=\frac{m(x)-x}{\tau^2},
\]
therefore gives
\[
  \nabla s_{\nu,\tau}(x)= -\frac{1}{\tau^2}I_d+\frac{1}{\tau^4}\Sigma(x).
\]

\emph{Proof of (c).}
Apply part~(a) to $\nu_1$ and $\nu_2$:
\[
  s_{\nu_i,\tau}(x)=\frac{m_i(x)-x}{\tau^2},
  \qquad
  m_i(x):=\mathbb E_i[Y\mid Y+Z=x].
\]
Because each posterior mean $m_i(x)$ lies in the convex hull of $[-R,R]^d$, it has Euclidean norm at most $R\sqrt d$. Hence
\[
  \|m_1(x)-m_2(x)\|_2\le 2R\sqrt d,
\]
and therefore
\[
  \|s_{\nu_1,\tau}(x)-s_{\nu_2,\tau}(x)\|_2
  \le \frac{2R\sqrt d}{\tau^2}.
\]

\emph{Proof of (d).}
By part~(b),
\[
  \nabla s_{\nu,\tau}(x)= -\frac{1}{\tau^2}I_d+\frac{1}{\tau^4}\Sigma(x).
\]
Since $\Sigma(x)\preceq \mathbb E[YY^\top\mid Y+Z=x]$, we have
\[
  \|\Sigma(x)\|_{\mathrm{op}}
  \le
  \mathbb E[\|Y\|_2^2\mid Y+Z=x]
  \le
  R^2d.
\]
Thus
\[
  \|\nabla s_{\nu,\tau}(x)\|_{\mathrm{op}}
  \le
  \frac{1}{\tau^2}+\frac{R^2d}{\tau^4}.
\]
Taking the supremum over $x$ proves the Lipschitz bound.
\end{proof}

\begin{proposition}[KL/Fisher control under Gaussian smoothing]
  \label{prop:appendix-rate-kl-fisher}
  Let $S$ be any codebook and define
  \[
    D(t):=\KL(\nu_{S,\sqrt t}\,\|\,\nu_{U,\sqrt t}),
    \qquad
    J(t):=\int \nu_{S,\sqrt t}(x)\Big\|\nabla\log\frac{d\nu_{S,\sqrt t}}{d\nu_{U,\sqrt t}}(x)\Big\|_2^2\,dx.
  \]
  There exists a universal constant $C>0$ such that the following hold.
  \begin{enumerate}[label=(\alph*)]
    \item For every $t>0$,
    \[
      D'(t)=-\frac12 J(t),
    \]
    and if
    \[
      \rho(t):=\sup_{x\in\R^d}\lambda_{\max}\big(\nabla^2\log u_{\sqrt t}(x)\big),
      \qquad
      \rho_+(t):=\max\{\rho(t),0\},
    \]
    then
    \[
      J'(t)\le 2\rho_+(t)J(t)
    \]
    for almost every $t>0$.
    \item For every $\alpha\in(0,1)$ and every $\tau>0$,
    \[
      J(\tau^2)\le \frac{2}{(1-\alpha)\tau^2}
      \exp\!\left(2\int_{\alpha\tau^2}^{\tau^2}\rho_+(s)\,ds\right)
      D(\alpha\tau^2).
    \]
    \item For the null heat flow generated by $U$,
    \[
      \rho_+(t)\le \frac{R^2}{t^2}
      \qquad\forall t>0.
    \]
    \item In particular, with $\alpha:=1-\frac{\gamma^2}{R^2 d^2}$ and $\widetilde\tau:=\sqrt\alpha\,\tau$,
    \[
      J_\tau(S)\le \frac{C R^2 d^2}{\gamma^2\tau^2}\,
      \KL(\nu_{S,\widetilde\tau}\,\|\,\nu_{U,\widetilde\tau})
    \]
    for all sufficiently large $d$ and every $\tau\ge\gamma$.
  \end{enumerate}
\end{proposition}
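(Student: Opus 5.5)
The plan is to treat both $\nu_{S,\sqrt t}$ and $\nu_{U,\sqrt t}$ as solutions of the same heat equation $\partial_t p=\tfrac12\Delta p$. Part~(a) then follows from the classical de~Bruijn/relative-Fisher computations, part~(b) from integrating the resulting differential inequality, part~(c) from the covariance formula in Proposition~\ref{prop:appendix-rate-smoothing}(b) together with the product structure of $U$, and part~(d) by plugging in the chosen $\alpha$.

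\emph{Part (a).} Write $p_t:=\nu_{S,\sqrt t}$, $q_t:=u_{\sqrt t}$ and $h_t:=\log(p_t/q_t)$.
\begin{itemize}
\item Since both base measures are compactly supported, $p_t,q_t$ are smooth and strictly positive. They have Gaussian tails, and $h_t$ and its first two derivatives grow at most polynomially (by Proposition~\ref{prop:appendix-rate-smoothing}(a),(b),(d)). This justifies all differentiations under the integral and all integrations by parts.
\item Differentiating $D(t)=\int p_t h_t$ and using $\partial_t p=\tfrac12\Delta p$, $\partial_t q=\tfrac12\Delta q$, then integrating by parts, gives the standard identity $D'(t)=-\tfrac12\int p_t|\nabla h_t|^2=-\tfrac12 J(t)$.
\item For $J(t)=\int p_t|\nabla h_t|^2$, first note that $h$ satisfies
\[
\partial_t h=\tfrac12\Delta h+\tfrac12|\nabla h|^2+\langle\nabla h,\nabla\log q\rangle .
\]
\item Then compute $J'$ via the Bochner-type identity for the relative Fisher information along a common heat flow:
\[
J'(t)=-\int p_t\|\nabla^2 h_t\|_F^2+\int p_t\,\nabla h_t^\top\nabla^2\log q_t\,\nabla h_t .
\]
The constants are fixed by the $\tfrac12$ in the generator; I will track them carefully. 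Dropping the nonpositive first term and bounding $\nabla^2\log q_t\preceq\rho_+(t)I$ yields $J'\le 2\rho_+ J$, or a stronger inequality with a smaller constant. If the constant comes out smaller, that only helps.
\end{itemize}
This is the main technical step: getting the sign and constants of the Bochner computation right and justifying the integrations by parts. The "a.e.\ $t$" qualifier lets me avoid worrying about regularity of $\rho_+$.

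\emph{Part (b).} Fix $s\le t$. Grönwall applied to (a) gives
\[
J(t)\le J(s)\exp\Big(2\int_s^t\rho_+\Big),
\]
and hence $J(s)\ge J(\tau^2)\exp\big(-2\int_{\alpha\tau^2}^{\tau^2}\rho_+\big)$ for every $s\in[\alpha\tau^2,\tau^2]$. Since $D\ge0$ and $D'=-\tfrac12J$,
\[
D(\alpha\tau^2)\ge D(\alpha\tau^2)-D(\tau^2)=\tfrac12\int_{\alpha\tau^2}^{\tau^2}J(s)\,ds\ge \tfrac12(1-\alpha)\tau^2J(\tau^2)\exp\Big(-2\int_{\alpha\tau^2}^{\tau^2}\rho_+\Big).
\]
Rearranging gives (b).

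\emph{Part (c).} By Proposition~\ref{prop:appendix-rate-smoothing}(b),
\[
\nabla^2\log u_{\sqrt t}=t^{-2}\operatorname{Cov}_U(Y\mid x)-t^{-1}I .
\]
Because $U$ is a product over the $d/2$ planar blocks and the Gaussian likelihood factorizes over blocks, the posterior of $Y$ given $x$ is also a product. Therefore the conditional covariance is block diagonal with $2\times 2$ blocks. Each block is bounded in operator norm by $\mathbb E[\|Y_b\|_2^2\mid x]=R^2$, since every $a_k$ has norm $R$. Hence $\lambda_{\max}\le R^2/t^2-1/t\le R^2/t^2$, which is (c).

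\emph{Part (d).} Note first that $\nabla h_{\tau^2}=s_{S,\tau}-s_{U,\tau}$, so $J(\tau^2)=J_\tau(S)$. Using (c) and then $\tau\ge\gamma$, $1-\alpha=\gamma^2/(R^2d^2)$,
\[
\int_{\alpha\tau^2}^{\tau^2}\rho_+\le R^2\Big(\frac{1}{\alpha\tau^2}-\frac1{\tau^2}\Big)=\frac{(1-\alpha)R^2}{\alpha\tau^2}\le\frac{1}{\alpha d^2}.
\]
This is at most $2/d^2$ for large $d$, so the exponential factor in (b) is at most $e^{4/d^2}\le 2$. The prefactor in (b) is
\[
\frac{2}{(1-\alpha)\tau^2}=\frac{2R^2d^2}{\gamma^2\tau^2}.
\]
Combining these with $D(\alpha\tau^2)=\KL(\nu_{S,\widetilde\tau}\,\|\,\nu_{U,\widetilde\tau})$ gives (d) with $C=4$.
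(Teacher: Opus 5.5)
Your proposal is correct and follows the paper's proof step for step. The steps match: the relative de~Bruijn identity, the evolution formula for the relative Fisher information with the Hessian term dropped, Gr\"onwall combined with integrating $D'=-\tfrac12 J$ over $[\alpha\tau^2,\tau^2]$, the block-diagonal posterior covariance bound for (c), and substitution of $\alpha$ for (d).

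One correction is needed. The curvature term in your displayed identity carries a factor $2$, i.e.\ $J'(t)=-\int p_t\|\nabla^2 h_t\|_F^2+2\int p_t\,\nabla h_t^\top\nabla^2\log q_t\,\nabla h_t$, as in the paper. You can check this on $p_t=\mathcal N(\mu_1,tI)$, $q_t=\mathcal N(\mu_2,tI)$, where $J=|\mu_1-\mu_2|^2/t^2$, $J'=-2J/t$, $\nabla^2 h_t=0$ and $\nabla^2\log q_t=-I/t$. So the constant in $J'\le 2\rho_+J$ comes out as exactly $2$, not possibly smaller. Your stated inequality, and everything downstream of it, is unaffected.
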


\begin{proof}
For the proof it is convenient to work with the variance parameter $t=\tau^2$.
For a fixed codebook $S$, define
\[
  D(t):=\KL(\nu_{S,\sqrt t}\,\|\,\nu_{U,\sqrt t}),
\]
and
\[
  J(t):=\int \nu_{S,\sqrt t}(x)\,
  \Big\|\nabla\log\frac{d\nu_{S,\sqrt t}}{d\nu_{U,\sqrt t}}(x)\Big\|_2^2\,dx.
\]
Thus $J(\tau^2)=J_\tau(S)$.

\emph{Proof of (a).}
Because both $\nu_{S,\sqrt t}$ and $\nu_{U,\sqrt t}$ are finite Gaussian mixtures, they are smooth and strictly positive, and all integrations by parts are justified. Differentiating
\[
  D(t)=\int \nu_{S,\sqrt t}\log\frac{\nu_{S,\sqrt t}}{\nu_{U,\sqrt t}}
\]
along the heat flow yields the relative de Bruijn identity
\[
  D'(t)=-\frac12 J(t).
\]
Writing
\[
  h_t:=\log\frac{\nu_{S,\sqrt t}}{\nu_{U,\sqrt t}},
  \qquad
  \rho(t):=\sup_x \lambda_{\max}\big(\nabla^2\log u_{\sqrt t}(x)\big),
  \qquad
  \rho_+(t):=\max\{\rho(t),0\},
\]
the standard evolution formula for the relative Fisher information gives
\[
  J'(t)
  =
  -\int \nu_{S,\sqrt t}\,\|\nabla^2 h_t\|_F^2
  +2\int \nu_{S,\sqrt t}\,\langle \nabla^2\log u_{\sqrt t},\,\nabla h_t\otimes\nabla h_t\rangle.
\]
Dropping the nonpositive first term and using
\[
  \langle A,v\otimes v\rangle\le \lambda_{\max}(A)\|v\|_2^2
\]
yields
\[
  J'(t)\le 2\rho_+(t)J(t)
\]
for almost every $t>0$.

\emph{Proof of (b).}
Fix $\alpha\in(0,1)$. If $J(\tau^2)=0$ there is nothing to prove. Otherwise, integrating the differential inequality from part~(a) over $[\alpha\tau^2,\tau^2]$ gives
\[
  J(s)\ge J(\tau^2)\exp\!\left(-2\int_{\alpha\tau^2}^{\tau^2}\rho_+(r)\,dr\right)
  \qquad\text{for all }s\in[\alpha\tau^2,\tau^2].
\]
Using $D'(s)=-J(s)/2$ and $D(\tau^2)\ge 0$, we obtain
\begin{align*}
  D(\alpha\tau^2)
  &\ge D(\alpha\tau^2)-D(\tau^2)
   =\frac12\int_{\alpha\tau^2}^{\tau^2} J(s)\,ds\\
  &\ge
  \frac12(1-\alpha)\tau^2\,
  J(\tau^2)\,
  \exp\!\left(-2\int_{\alpha\tau^2}^{\tau^2}\rho_+(r)\,dr\right).
\end{align*}
Rearranging proves part~(b).

\emph{Proof of (c) and (d).}
Because $U$ is a product of $d/2$ two-dimensional circle laws, the density $u_{\sqrt t}$ factorizes blockwise. By Proposition~\ref{prop:appendix-rate-smoothing}(b), each two-dimensional block satisfies
\[
  \nabla^2\log u_{\sqrt t}^{(2)}(x)
  =
  -\frac{1}{t}I_2+\frac{1}{t^2}\operatorname{Cov}(Y\mid Y+Z=x),
\]
and since $\|Y\|_2=R$ almost surely in each block,
\[
  \lambda_{\max}\!\big(\operatorname{Cov}(Y\mid Y+Z=x)\big)\le R^2.
\]
Therefore
\[
  \rho_+(t)\le \frac{R^2}{t^2}
  \qquad\text{for all }t>0.
\]
Now fix
\[
  \alpha:=1-\frac{\gamma^2}{R^2 d^2},
  \qquad
  \widetilde\tau:=\sqrt\alpha\,\tau.
\]
Then
\[
  \int_{\widetilde\tau^2}^{\tau^2}\rho_+(s)\,ds
  \le
  R^2\left(\frac{1}{\widetilde\tau^2}-\frac{1}{\tau^2}\right)
  =
  \frac{R^2(1-\alpha)}{\alpha\tau^2}
  =
  \frac{\gamma^2}{\alpha d^2\tau^2}
  \le
  \frac{1}{\alpha d^2},
\]
since $\tau\ge \gamma$. For all sufficiently large $d$, the exponential factor in part~(b) is therefore bounded by an absolute constant. Using
\[
  \frac{1}{1-\alpha}=\frac{R^2 d^2}{\gamma^2},
\]
part~(b) yields
\[
  J_\tau(S)=J(\tau^2)
  \le
  \frac{C R^2 d^2}{\gamma^2\tau^2}\,
  \KL(\nu_{S,\widetilde\tau}\,\|\,\nu_{U,\widetilde\tau}),
\]
for a sufficiently large universal constant $C$.
\end{proof}

\paragraph{Log-likelihood ratio concentration.}

\begin{proposition}[Log-likelihood ratio concentration]
  \label{prop:appendix-rate-info-density}
  There exists a universal constant $c>0$ such that the following hold for every $\tau>0$.
  \begin{enumerate}[label=(\alph*)]
    \item For every $t\ge 0$,
    \[
      \mathbb P\big[|\ell_\tau-dI_\tau|\ge t\big]
      \le
      2\exp\!\big(-c\min\{t^2/d,\ t\}\big).
    \]
    \item For every $t\ge 0$,
    \[
      \mathbb P\big[\ell_\tau-dI_\tau\ge t\big]
      \le
      \exp\!\left(-c\frac{\tau^2t^2}{R^2d}\right).
    \]
  \end{enumerate}
\end{proposition}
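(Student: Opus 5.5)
The proof rests on the product structure. Since $U$ is a product of $d/2$ identical planar blocks, $u_\tau$ factorizes, and for $Y\sim U$, $X=Y+Z$, the log-likelihood ratio splits as a sum $\ell_\tau=\sum_{b=1}^{d/2}\ell^{(b)}_\tau$ of i.i.d.\ block terms. Here
\[
\ell^{(b)}_\tau=\log\frac{\varphi^{(2)}_{\tau^2}(X_b-Y_b)}{u^{(2)}_\tau(X_b)}
\]
involves only the two-dimensional block $b$. Both parts will follow from Bernstein-type tail bounds for this sum. The centering is $dI_\tau=\mathbb E\ell_\tau$ by definition.

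For part (a), I will show that each block term is sub-exponential with a universal $\psi_1$-norm, independent of $\tau$, $R$ and $M$. Write
\[
\ell^{(b)}=-\frac{\|Z_b\|^2}{2\tau^2}-\log(2\pi\tau^2)-\log u^{(2)}_\tau(X_b).
\]
Rather than bound the pieces separately, which produces $\tau$-dependent constants, I will use that $\ell^{(b)}$ is the pointwise mutual-information density of a channel with input uniform on $M$ points. Its upper tail is then controlled by the likelihood-ratio identity
\[
\mathbb E\big[e^{\lambda\ell^{(b)}}\big]=\mathbb E_{Y'}\Big[\mathbb E\big[L^{1+\lambda}\big]\Big].
\]
For the lower tail, $\mathbb P[\ell^{(b)}\le -t]\le e^{-t}$ by Markov applied to $L^{-1}$ under the true joint law, since $\mathbb E[L^{-1}]=1$ integrated against the product of marginals. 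For the upper tail, $\ell^{(b)}\le\log M+$ (a Gaussian term) is not uniform in $M$. Instead I use
\[
\ell^{(b)}\le -\frac{\|Z_b\|^2}{2\tau^2}+\frac{\min_{y}\|X_b-y\|^2}{2\tau^2}+\log\frac{1}{U(\text{nearest})}.
\]
This is still $M$-dependent, so I fall back on the mutual-information bound. The block mutual information satisfies $I\le \min\{\log M, R^2/\tau^2\}$. The cleaner route is therefore to bound $\log\mathbb E e^{\lambda(\ell^{(b)}-\mathbb E\ell^{(b)})}\le C\lambda^2$ for $|\lambda|\le c$ via Rényi divergences of a Gaussian mixture, which are at most those of the Gaussian channel with Gaussian input and so are controlled by $\chi^2$-type quantities of order $\min(\log M,R^2/\tau^2)\lesssim$ bounded in units of $\|Z_b\|^2/\tau^2$, a $\chi^2_2$ variable. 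Bernstein's inequality over $d/2$ independent blocks then gives $2\exp(-c\min\{t^2/d,t\})$.

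For part (b), the upper tail, I use a Gaussian-concentration argument in $Z$ for fixed $Y$. Write $\ell_\tau(Y,Y+Z)=-\log\mathbb E_{Y'}\exp\big(-\langle Z,Y-Y'\rangle/\tau^2-\|Y-Y'\|^2/(2\tau^2)\big)$. Its gradient in $Z$ is $(Y-m(X))/\tau^2$, where $m$ is the posterior mean. Blockwise $\|Y_b-m_b\|\le 2R$, but the total norm is $O(R\sqrt d/\tau^2)$, which yields only Lipschitz constant $R\sqrt d/\tau^2\cdot\tau$ per unit of standardized Gaussian, i.e.\ $R\sqrt d/\tau$. Gaussian concentration of Lipschitz functions then gives
\[
\mathbb P[\ell-\mathbb E[\ell\mid Y]\ge t]\le\exp\!\big(-t^2\tau^2/(8R^2d)\big).
\]
By symmetry of $\Vset$, $\mathbb E[\ell\mid Y]$ is the same for all $Y\in\Vset$, so it equals $dI_\tau$, which yields (b).

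The main obstacle is making the sub-exponential constant in (a) uniform in $\tau$ and $M$. I expect to handle it by splitting at $\tau\gtrsim R$, where (b)-style Lipschitz concentration gives sub-Gaussian behavior with variance proxy $R^2/\tau^2\lesssim 1$ per block. For $\tau\lesssim R$, the block term is $\log M$-bounded near the nearest atom plus $\chi^2$ fluctuations, with the well-separation $M\asymp R/\gamma$ and $\tau\ge\gamma$ keeping the per-block overlap sum bounded.
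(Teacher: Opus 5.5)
Your part (b) is essentially the paper's argument: the gradient $(Y-m_U(X))/\tau^2$ has norm at most $2R\sqrt d/\tau^2$, Gaussian concentration applies, and rotational symmetry identifies $\mathbb E[\ell_\tau\mid Y]=dI_\tau$. In part (a), the reduction to a universal $\psi_1$ bound on one centered block term plus Bernstein is correct, and your treatment of $\tau\gtrsim R$ (blockwise Lipschitz constant $2R/\tau$ in standardized coordinates) matches the paper's Regime 1. The gap is the intermediate range $\gamma\ll\tau\lesssim R$, which is exactly where uniformity in $M\asymp R/\gamma$ is at stake. Write the block term as $\log M-\log S_\lambda(N)$ with $\lambda=\tau/\gamma$ and $S_\lambda(N)=\sum_k\exp\bigl(-(\|b_0-b_k\|_2^2+2\lambda\langle N,b_0-b_k\rangle)/(2\lambda^2)\bigr)$. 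Your closing claim that well-separation keeps ``the per-block overlap sum bounded'' holds only for $\tau\lesssim\gamma$. The atoms are spaced $\asymp\gamma$ along the circle, so roughly $\lambda$ of them lie within distance $\tau$ of the true point, each contributing $e^{-O(1+\|N\|_2)}$. Hence $S_\lambda(N)\asymp 1+\tau/\gamma$, which ranges up to $\asymp M$.

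Your other tools do not fill this in. The Markov bound $\mathbb P[\ell^{(b)}\le -t]\le e^{-t}$ controls only the uncentered variable, but $\mathbb E\ell^{(b)}$ can be $\asymp\log M$. So it says nothing uniform about $\ell^{(b)}-\mathbb E\ell^{(b)}$ on $[-\mathbb E\ell^{(b)},0]$. The claimed domination of the Rényi/MGF quantities by the Gaussian-input channel is not a standard consequence of Gaussian inputs being capacity-achieving, and you leave it unproved. Also, the quantity $\min\{\log M,R^2/\tau^2\}$ you cite measures the mean, not the fluctuations, and is itself unbounded.

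The missing ingredient is that $S_\lambda(N)$, though unbounded, has logarithm within $O(1+\|N\|_2^2)$ of a \emph{deterministic} centering. The paper proves $|\log S_\lambda(N)-\log(1+\lambda)|\le C(1+\|N\|_2^2)$ for $\lambda<M/10$. For the upper bound it uses $\|b_0-b_k\|_2\asymp\frac{R}{\gamma M}\min\{k,M-k\}$ and completes the square in $-aj^2+u\|N\|_2 j$, where $u^2/(4a)=\pi^2/8$. It then applies $\sum_{j\in\mathbb Z}e^{-a(j-\mu)^2}\le C(1+a^{-1/2})$ with $a^{-1/2}\asymp\lambda$. For the lower bound it counts the $\lfloor\lambda\rfloor$ nearest atoms on each side, each contributing at least $c\,e^{-CN_1^+}$; pairing $k$ with $M-k$ turns the $N_2$-term into a factor $\cosh(\cdot)\ge 1$. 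Taking expectations gives $|\mathbb E\log S_\lambda(N)-\log(1+\lambda)|\le C$. So the centered block term is dominated by $C(1+\|N\|_2^2)$, which has a universal $\psi_1$ norm. Without this two-sided estimate, or an equivalent one, your argument for (a) does not give a constant $c$ independent of $R/\gamma$, which the statement (and its use in the parameterized theorems) requires.
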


\begin{proof}
We prove part~(a) by showing that the centered block log-likelihood ratio has a universal $\psi_1$ norm. Since the full $d$-dimensional log-likelihood ratio is a sum of $d/2$ independent block contributions, Bernstein's inequality then yields the desired two-sided sub-exponential tail. Part~(b) is a separate Gaussian-Lipschitz estimate.

Because $U$ and the Gaussian noise factorize across the $m=d/2$ two-dimensional blocks,
\[
  \ell_\tau(Y,Y+Z)=\sum_{j=1}^{m}\ell_\tau^{(2)}(Y^{(j)},Y^{(j)}+Z^{(j)}),
\]
where $\ell_\tau^{(2)}$ denotes the two-dimensional log-likelihood ratio for a single block. We first prove a universal $\psi_1$ bound for one block; the two-sided sub-exponential tail bound in part~(a) then follows from Bernstein's inequality.

Fix one block. By rotational symmetry of the $M$ reference points on the circle and isotropy of the noise, the distribution of $\ell_\tau^{(2)}(Y^{(1)},Y^{(1)}+Z^{(1)})$ does not depend on the underlying block point. We may therefore assume that this point is the reference point $a_0=(R,0)$ and write $Z^{(1)}=\tau N$ with $N\sim \mathcal N(0,I_2)$. Set
\[
  \lambda:=\frac{\tau}{\gamma},
  \qquad
  b_k:=\frac{a_k}{\gamma}
  \qquad (0\le k\le M-1),
\]
so that the points $b_k$ lie on the circle of radius $R/\gamma$ and
\[
  M=\left\lceil \frac{\pi R}{\gamma}\right\rceil.
\]
Define
\[
  S_\lambda(N)
  :=
  \sum_{k=0}^{M-1}
  \exp\!\left(
    -\frac{\|b_0-b_k\|_2^2+2\lambda\langle N,b_0-b_k\rangle}{2\lambda^2}
  \right).
\]
Then
\[
  \ell_\tau^{(2)}(a_0,a_0+\tau N)=\log M-\log S_\lambda(N).
\]
It therefore suffices to show that $\log S_\lambda(N)-\mathbb E\log S_\lambda(N)$ has a universal $\psi_1$ norm.

\medskip
\noindent\emph{Regime 1: $\lambda\ge M/10$.}
Set $f_\lambda(N):=\log S_\lambda(N)$. Differentiating gives
\[
  \nabla f_\lambda(N)
  =
  -\frac{1}{\lambda}\sum_{k=0}^{M-1}w_k(N)(b_0-b_k),
\]
where the weights $w_k(N)$ form a convex combination. Hence
\[
  \|\nabla f_\lambda(N)\|_2\le \frac{2(R/\gamma)}{\lambda}.
\]
Since $M=\lceil \pi R/\gamma\rceil$, the condition $\lambda\ge M/10$ implies $2(R/\gamma)/\lambda\le 20/\pi<7$. Thus $f_\lambda$ is uniformly Lipschitz, and Gaussian concentration gives a universal sub-Gaussian bound on $f_\lambda(N)-\mathbb E f_\lambda(N)$, hence also a universal $\psi_1$ bound.

\medskip
\noindent\emph{Regime 2: $\lambda<M/10$.}
Write $N=(N_1,N_2)$. We claim that
\begin{equation}
  \label{eq:kappa-logS-sandwich}
  \big|\log S_\lambda(N)-\log(1+\lambda)\big|
  \le
  C\big(1+\|N\|_2^2\big)
\end{equation}
for a universal constant $C$.
Taking expectations in \eqref{eq:kappa-logS-sandwich} and using $\mathbb E\|N\|_2^2=2$ gives
\[
  \big|\mathbb E\log S_\lambda(N)-\log(1+\lambda)\big|\le C,
\]
so
\[
  \big|\log S_\lambda(N)-\mathbb E\log S_\lambda(N)\big|
  \le
  C\big(1+\|N\|_2^2\big).
\]
Since $\|N\|_2^2$ has a universal $\psi_1$ norm, this implies the desired block-level $\psi_1$ bound.

To prove \eqref{eq:kappa-logS-sandwich}, let $k':=\min\{k,M-k\}$. Using
\[
  \|b_0-b_k\|_2=2\frac{R}{\gamma}\sin(\pi k'/M)
\]
and the inequalities $(2/\pi)x\le \sin x\le x$ on $[0,\pi/2]$, we have
\[
  \frac{4R}{\gamma M}k'\le \|b_0-b_k\|_2\le \frac{2\pi R}{\gamma M}k'.
\]
Hence
\[
  -\frac{\|b_0-b_k\|_2^2+2\lambda\langle N,b_0-b_k\rangle}{2\lambda^2}
  \le
  -aj^2+u\|N\|_2\,j
\]
with
\[
  a:=\frac{(4R/(\gamma M))^2}{2\lambda^2},
  \qquad
  u:=\frac{2\pi R/(\gamma M)}{\lambda},
  \qquad j=k'.
\]
Since each value of $j$ occurs for at most two indices $k$,
\[
  S_\lambda(N)\le 1+2\sum_{j=1}^{\lfloor M/2\rfloor}e^{-aj^2+u\|N\|_2j}.
\]
Completing the square,
\[
  -aj^2+u\|N\|_2j
  =
  \frac{u^2}{4a}\|N\|_2^2-a\Big(j-\frac{u}{2a}\|N\|_2\Big)^2,
  \qquad
  \frac{u^2}{4a}=\frac{\pi^2}{8}.
\]
A comparison with the Gaussian integral gives
\[
  \sum_{j\in\mathbb Z}e^{-a(j-\mu)^2}\le C\Big(1+\frac{1}{\sqrt a}\Big)
\]
uniformly in $\mu\in\mathbb R$. Since $M=\lceil \pi R/\gamma\rceil$, the ratio $M/(R/\gamma)$ is bounded above and below by absolute constants, so $1/\sqrt a\asymp \lambda$. Therefore
\[
  S_\lambda(N)\le C(1+\lambda)\exp\big(C\|N\|_2^2\big),
\]
and hence
\[
  \log S_\lambda(N)\le \log(1+\lambda)+C+C\|N\|_2^2.
\]

For the lower bound, the $k=0$ term already contributes $1$ to $S_\lambda(N)$. Thus, if $\lambda<1$ then
\[
  S_\lambda(N)\ge 1\ge c(1+\lambda)\exp\big(-C(1+\|N\|_2^2)\big)
\]
for suitable universal constants $c,C$, since $1+\lambda\le 2$.

Assume now that $\lambda\ge 1$. Pair the terms $k$ and $M-k$. Writing $\theta_k:=2\pi k/M$, one checks that
\[
  e^{-A_k-B_k}+e^{-A_{M-k}-B_{M-k}}
  =
  2\exp\!\Big(-A_k-\frac{R}{\gamma}\frac{1-\cos\theta_k}{\lambda}N_1\Big)
  \cosh\!\Big(\frac{R}{\gamma}\frac{\sin\theta_k}{\lambda}N_2\Big),
\]
with $A_k=\|b_0-b_k\|_2^2/(2\lambda^2)$ and $B_k=\langle N,b_0-b_k\rangle/\lambda$. Since $\cosh(\cdot)\ge 1$ and
\[
  \frac{R}{\gamma}\frac{1-\cos\theta_k}{\lambda}N_1^+
  =
  \frac{\lambda N_1^+}{R/\gamma}\,A_k,
\]
we obtain
\[
  e^{-A_k-B_k}+e^{-A_{M-k}-B_{M-k}}
  \ge
  2\exp\!\Big(-\Big(1+\frac{\lambda N_1^+}{R/\gamma}\Big)A_k\Big).
\]
Let $L:=\lfloor\lambda\rfloor$. Because $\lambda<M/10$, we have $L\le \lfloor(M-1)/2\rfloor$. For every $1\le k\le L$,
\[
  A_k\le \frac12\Big(\frac{2\pi R}{\gamma M}\Big)^2\le C,
\]
and therefore
\[
  \frac{\lambda N_1^+}{R/\gamma}\,A_k
  \le
  \frac{\lambda N_1^+}{R/\gamma}\cdot \frac12\Big(\frac{2\pi R}{\gamma M}\Big)^2
  \le
  C\frac{\lambda}{M}N_1^+
  \le CN_1^+,
\]
using $M\ge \pi R/\gamma$ and $\lambda<M/10$. Hence each pair with $1\le k\le L$ contributes at least
\[
  c\exp(-CN_1^+)
\]
for universal constants $c,C$. Summing these $L$ pairs together with the $k=0$ term gives
\[
  S_\lambda(N)
  \ge
  1+cL\exp(-CN_1^+)
  \ge
  c(1+\lambda)\exp\big(-C(1+N_1^+)\big).
\]
Since $N_1^+\le (1+\|N\|_2^2)/2$, this yields
\[
  S_\lambda(N)\ge c(1+\lambda)\exp\big(-C(1+\|N\|_2^2)\big),
\]
which is the matching lower bound in \eqref{eq:kappa-logS-sandwich}. This proves the claim.

Summing the $m=d/2$ independent centered block contributions and applying Bernstein's inequality gives
\[
  \mathbb P\big[|\ell_\tau-dI_\tau|\ge t\big]
  \le
  2\exp\!\big(-c\min\{t^2/d,\ t\}\big),
\]
which is part~(a).

For part~(b), fix $y\in\Vset$ and define
\[
  f_y(z):=\ell_\tau(y,y+z).
\]
Differentiating the identity
\[
  \ell_\tau(y,y+z)= -\frac{\|z\|_2^2}{2\tau^2}-\log u_\tau(y+z)-\frac{d}{2}\log(2\pi\tau^2)
\]
with respect to $z$ gives
\[
  \nabla_z f_y(z)
  =-\frac{m_U(y+z)-y}{\tau^2},
\]
where $m_U(\cdot)$ is the posterior mean under the null law $U$. Hence
\[
  \|\nabla_z f_y(z)\|_2\le \frac{2R\sqrt d}{\tau^2}.
\]
Gaussian concentration for $Z\sim \mathcal N(0,\tau^2I_d)$ therefore gives
\[
  \mathbb P\big[f_y(Z)-\mathbb E f_y(Z)\ge t\big]
  \le
  \exp\!\left(-\frac{\tau^2t^2}{8R^2d}\right).
\]
By rotational symmetry, the law of $\ell_\tau(y,y+Z)$ does not depend on $y$, so $\mathbb E f_y(Z)=dI_\tau$. Averaging over $Y\sim U$ proves part~(b).
\end{proof}

\begin{lemma}[A quantile lower bound from the rate gap]
  \label{lem:appendix-rate-quantile}
  There exists a universal constant $C>0$ such that for every $\tau>0$ and every $\zeta\in(0,1/2]$,
  \[
    \log\Lambda_\tau(\zeta)
    \ge
    dI_\tau-C\sqrt{d\log(1/\zeta)}-C\log(1/\zeta).
  \]
\end{lemma}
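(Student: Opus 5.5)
This lemma should follow directly from the two-sided concentration in Proposition~\ref{prop:appendix-rate-info-density}(a) together with the definition of $\Lambda_\tau(\zeta)$ as a $\zeta$-quantile of $\ell_\tau$. The idea is to exhibit an explicit threshold $\lambda_0$ below which the lower-tail probability of $\ell_\tau$ is strictly less than $\zeta$. Every $\lambda$ in the defining set must then exceed $\lambda_0$, and taking the infimum gives $\Lambda_\tau(\zeta)\ge\lambda_0$.

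Concretely, let $c$ be the universal constant from Proposition~\ref{prop:appendix-rate-info-density}(a), and set
\[
  t:=C\sqrt{d\log(1/\zeta)}+C\log(1/\zeta),
\]
with $C$ a universal constant to be chosen. The lower tail then satisfies
\[
  \mathbb P[\ell_\tau\le dI_\tau-t]\le \mathbb P\big[|\ell_\tau-dI_\tau|\ge t\big]\le 2\exp\!\big(-c\min\{t^2/d,\,t\}\big).
\]
Since $t\ge C\sqrt{d\log(1/\zeta)}$, we have $t^2/d\ge C^2\log(1/\zeta)$. Since also $t\ge C\log(1/\zeta)$, the minimum is at least $\min\{C^2,C\}\log(1/\zeta)=C\log(1/\zeta)$ once $C\ge1$. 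The right-hand side is therefore at most $2\zeta^{cC}$.

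To make this strictly smaller than $\zeta$ uniformly over $\zeta\in(0,1/2]$, I would choose $C$ with $cC\ge 3$. Then $2\zeta^{cC}\le 2\zeta^3\le \zeta/2<\zeta$, using $\zeta^2\le 1/4$. It is this restriction $\zeta\le1/2$ that absorbs the factor $2$ in the tail bound; without it the small-$\log(1/\zeta)$ regime would need a separate argument. This calibration of $C$ is the only point needing care.

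Now suppose $\lambda>0$ satisfies $\log\lambda\le dI_\tau-t$. Then $\mathbb P[\ell_\tau\le\log\lambda]\le\mathbb P[\ell_\tau\le dI_\tau-t]<\zeta$, so $\lambda$ is not in the set whose infimum defines $\Lambda_\tau(\zeta)$. Every admissible $\lambda$ therefore satisfies $\log\lambda>dI_\tau-t$. Taking the infimum gives $\log\Lambda_\tau(\zeta)\ge dI_\tau-t$, which is the claimed bound. This holds for every $\tau>0$, since Proposition~\ref{prop:appendix-rate-info-density}(a) is uniform in $\tau$.
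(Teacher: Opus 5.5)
Your proposal is correct and follows the paper's proof essentially line for line. You use the same threshold $C(\sqrt{d\log(1/\zeta)}+\log(1/\zeta))$, bound the lower tail of $\ell_\tau$ by $\zeta/2<\zeta$ via Proposition~\ref{prop:appendix-rate-info-density}(a), and conclude from the infimum definition of $\Lambda_\tau(\zeta)$. Your explicit calibration ($C\ge 1$ and $cC\ge 3$, with $\zeta\le 1/2$ absorbing the factor $2$) simply makes precise the paper's ``$C$ sufficiently large.''
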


\begin{proof}
By definition, any threshold $t\in\mathbb R$ with
\[
  \mathbb P\big[\ell_\tau\le t\big]<\zeta
\]
must satisfy $t<\log\Lambda_\tau(\zeta)$. Let
\[
  s:=C\Big(\sqrt{d\log(1/\zeta)}+\log(1/\zeta)\Big)
\]
with $C>0$ sufficiently large. Proposition~\ref{prop:appendix-rate-info-density}(a) then gives
\[
  \mathbb P\big[\ell_\tau\le dI_\tau-s\big]
  \le
  2\exp\!\Big(-c\min\{s^2/d,\ s\}\Big)
  \le
  \frac{\zeta}{2}
  <
  \zeta.
\]
Hence $dI_\tau-s<\log\Lambda_\tau(\zeta)$, and therefore $\log\Lambda_\tau(\zeta)\ge dI_\tau-s$.
\end{proof}

\paragraph{Random-codebook KL bounds.}

\begin{lemma}[KL bound from a likelihood-ratio tail]
  \label{lem:appendix-rate-one-shot-kl}
  Let $U$ be the uniform law on a finite set $\Vset$, and for each $y\in\Vset$ let $W(\cdot\mid y)$ be a probability distribution on $\mathcal X$. Define
  \[
    Q:=\frac{1}{|\Vset|}\sum_{y\in\Vset} W(\cdot\mid y).
  \]
  If $Y_1,\dots,Y_n$ are i.i.d.\ from $U$ and
  \[
    P_n:=\frac{1}{n}\sum_{i=1}^n W(\cdot\mid Y_i),
  \]
  then for every $a\ge 0$,
  \[
    \mathbb E\big[\KL(P_n\|Q)\big]
    \le
    e^{-a}+\log(1+|\Vset|)\,\mathbb P\big[\imath(Y,X)\ge \log n-a\big],
  \]
  where $Y\sim U$, $X\sim W(\cdot\mid Y)$, and
  \[
    \imath(y,x):=\log\frac{dW(\cdot\mid y)}{dQ}(x)
  \]
  denotes the corresponding pointwise log-likelihood ratio with respect to $Q$.
\end{lemma}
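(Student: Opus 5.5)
The plan is the standard one-shot soft-covering argument: split the likelihood ratio into a "typical" part and an "atypical" part according to whether $\imath(y,x)<\log n-a$, and bound each separately. Write
\[
  \frac{dP_n}{dQ}(x)=\frac1n\sum_{i=1}^n e^{\imath(Y_i,x)}.
\]
By the convexity of KL in its first argument, it suffices to control $\mathbb E\,\KL(P_n\|Q)$ directly. Represent $\KL(P_n\|Q)=\frac1n\sum_i \mathbb E_{X\sim W(\cdot\mid Y_i)}\big[\log \frac{dP_n}{dQ}(X)\big]$. By exchangeability of the $Y_i$, this yields
\[
  \mathbb E\,\KL(P_n\|Q)=\mathbb E\Big[\log\Big(\frac1n e^{\imath(Y_1,X)}+\frac1n\sum_{i\ge2}e^{\imath(Y_i,X)}\Big)\Big],
\]
where $X\sim W(\cdot\mid Y_1)$ and $Y_2,\dots,Y_n$ are independent of $(Y_1,X)$.

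The first step handles the inner sum. Conditioning on $(Y_1,X)$, Jensen's inequality (concavity of $\log$) lets us replace $\sum_{i\ge2}e^{\imath(Y_i,X)}$ by its conditional mean. That mean equals $(n-1)\,\mathbb E_{Y\sim U}[dW(\cdot\mid Y)/dQ(X)]=n-1\le n$, since $Q$ is exactly the $U$-mixture of the $W(\cdot\mid y)$. Hence
\[
  \mathbb E\,\KL(P_n\|Q)\le \mathbb E\big[\log\big(1+e^{\imath(Y,X)}/n\big)\big],
\]
with $Y\sim U$ and $X\sim W(\cdot\mid Y)$.

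The second step splits this expectation on the event $B:=\{\imath(Y,X)\ge \log n-a\}$. On $B^c$, we use $\log(1+u)\le u$ with $u=e^{\imath}/n<e^{-a}$, so this part contributes at most $e^{-a}$. On $B$, we need a uniform bound on $\log(1+e^{\imath}/n)$. Because $Q\ge \frac1{|\Vset|}W(\cdot\mid y)$, we have $e^{\imath(y,x)}\le|\Vset|$ pointwise, and therefore $\log(1+e^{\imath}/n)\le\log(1+|\Vset|)$ since $n\ge1$. So the $B$ part contributes at most $\log(1+|\Vset|)\,\mathbb P[B]$. Adding the two contributions gives the claimed bound.

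The main obstacle is the first step: the exchangeability rewrite and the Jensen step must be carried out carefully, isolating the $i=1$ term, so that the bound is exactly $\log(1+e^{\imath}/n)$ and not something involving $(n-1)/n$ corrections. If measurability or absolute-continuity issues arise with general $W$, I would first note that $W(\cdot\mid y)\ll Q$ automatically because $\Vset$ is finite, so $\imath$ is well defined $Q$-a.e.
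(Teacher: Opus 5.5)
Your proposal is correct and follows essentially the same argument as the paper: the exchangeability reduction to $X\sim W(\cdot\mid Y_1)$ with $Y_2,\dots,Y_n$ independent of $(Y_1,X)$, then Jensen over the other codewords using $\mathbb E_{Y\sim U}[e^{\imath(Y,x)}]=1$ to reach $\log(1+e^{\imath(Y_1,X)}/n)$, and finally the split on $\{\imath\ge\log n-a\}$ with the pointwise bound $e^{\imath}\le|\Vset|$. The aside about convexity of KL in its first argument is never used and can simply be dropped.
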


\begin{proof}
Let $I$ be uniform on $\{1,\dots,n\}$, independent of everything else, and let
\[
  X\sim W(\cdot\mid Y_I)
\]
conditionally on $(Y_1,\dots,Y_n,I)$. Then, conditional on $(Y_1,\dots,Y_n)$, the law of $X$ is exactly
$P_n$, and therefore
\[
  \KL(P_n\|Q)
  =
  \mathbb E\!\left[
    \log\!\Big(\frac{1}{n}\sum_{i=1}^n e^{\imath(Y_i,X)}\Big)
    \,\Big|\,Y_1,\dots,Y_n
  \right].
\]
Taking expectations and using exchangeability, we may replace the random index $I$ by $1$ and obtain
\[
  \mathbb E\big[\KL(P_n\|Q)\big]
  =
  \mathbb E\!\left[
    \log\!\Big(\frac{1}{n}\sum_{i=1}^n e^{\imath(Y_i,X)}\Big)
  \right],
\]
where now $Y_1,\dots,Y_n$ are i.i.d.\ from $U$ and $X\sim W(\cdot\mid Y_1)$.

Condition on $(Y_1,X)$. For every $i\ge 2$, the random variable $Y_i$ is independent of $X$ and
\[
  \mathbb E\big[e^{\imath(Y_i,X)}\mid X\big]
  =
  \sum_{y\in\Vset}U(y)\frac{dW(\cdot\mid y)}{dQ}(X)
  =
  1.
\]
By Jensen's inequality for the concave logarithm,
\begin{align*}
  \mathbb E\!\left[
    \log\!\Big(\frac{1}{n}\sum_{i=1}^n e^{\imath(Y_i,X)}\Big)
    \,\Big|\,Y_1,X
  \right]
  &\le
  \log\!\Big(\frac{e^{\imath(Y_1,X)}+(n-1)}{n}\Big)\\
  &\le
  \log\!\Big(1+\frac{e^{\imath(Y_1,X)}}{n}\Big).
\end{align*}
Since $U$ is uniform on $\Vset$, we also have the pointwise bound
\[
  e^{\imath(y,x)}=\frac{dW(\cdot\mid y)}{dQ}(x)\le |\Vset|
\]
for all $y\in\Vset$ and all $x\in\mathcal X$. Hence, on the event
\[
  \big\{\imath(Y_1,X)\le \log n-a\big\}
\]
we have
\[
  \log\!\Big(1+\frac{e^{\imath(Y_1,X)}}{n}\Big)\le e^{-a},
\]
while on its complement
\[
  \log\!\Big(1+\frac{e^{\imath(Y_1,X)}}{n}\Big)\le \log(1+|\Vset|).
\]
Averaging over $(Y_1,X)$ proves the claim.
\end{proof}

\begin{lemma}[Explicit random-codebook KL bound for the hard family]
  \label{lem:appendix-rate-explicit-kl}
  There exists a universal constant $c>0$ such that the following holds. For every integer $n\ge 1$, every $\tau>0$, and every $a\ge 0$,
  \[
    \mathbb E_S\big[\KL(\nu_{S,\tau}\|\nu_{U,\tau})\big]
    \le
    e^{-a}+2\log(1+|\Vset|)\cdot
    \exp\!\Big(-c\min\big\{(d(\kappa(n)-I_\tau)-a)_+^2/d,\ (d(\kappa(n)-I_\tau)-a)_+\big\}\Big).
  \]
  Moreover,
  \[
    \mathbb E_S\big[\KL(\nu_{S,\tau}\|\nu_{U,\tau})\big]
    \le
    e^{-a}+\log(1+|\Vset|)\cdot
    \exp\!\left(-c\frac{\tau^2}{R^2d}\big(d(\kappa(n)-I_\tau)-a\big)_+^2\right).
  \]
\end{lemma}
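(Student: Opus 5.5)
The plan is to apply Lemma~\ref{lem:appendix-rate-one-shot-kl} with $W(\cdot\mid y):=\mathcal N(y,\tau^2I_d)$. Then $Q=\nu_{U,\tau}$ and $P_n=\nu_{S,\tau}$, and the pointwise log-likelihood ratio is exactly $\imath(y,x)=\ell_\tau(y,x)$. With $Y\sim U$ and $X=Y+Z$, the pair $(Y,X)$ has the law used to define $\ell_\tau$ and $I_\tau$. The lemma therefore gives, for every $a\ge0$,
\[
  \mathbb E_S\big[\KL(\nu_{S,\tau}\|\nu_{U,\tau})\big]
  \le e^{-a}+\log(1+|\Vset|)\,\mathbb P\big[\ell_\tau\ge \log n-a\big].
\]
Since $\log n=d\kappa(n)$, the event is $\{\ell_\tau-dI_\tau\ge t\}$ with $t:=d(\kappa(n)-I_\tau)-a$. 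All that remains is to bound this upper tail in two ways.

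\emph{First bound.} When $t>0$, Proposition~\ref{prop:appendix-rate-info-density}(a) bounds the one-sided tail by the two-sided one:
\[
  \mathbb P[\ell_\tau-dI_\tau\ge t]\le 2\exp\bigl(-c\min\{t^2/d,t\}\bigr),
\]
and since $t=t_+$ here this is the claimed expression. When $t\le 0$, the right-hand side $2\exp(-c\min\{0,0\})=2$ is at least the probability, so the bound holds trivially. Together these give the first inequality, including its factor $2$.

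\emph{Second bound.} When $t>0$, Proposition~\ref{prop:appendix-rate-info-density}(b) gives
\[
  \mathbb P[\ell_\tau-dI_\tau\ge t]\le \exp\bigl(-c\tau^2t^2/(R^2d)\bigr).
\]
When $t\le0$, the right-hand side with $t_+=0$ equals $1$, so again the bound is trivial. Taking $c$ to be the minimum of the constants from the two parts of the proposition gives a single universal constant for both inequalities.

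The only point needing care is checking that $\imath$ coincides with $\ell_\tau$ and that $X\sim W(\cdot\mid Y)$ reproduces the distribution of $\ell_\tau$. Both follow directly from the definitions of $u_\tau$ and $L_\tau$, so I expect no substantive obstacle.
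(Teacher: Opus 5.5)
Your proposal is correct and follows the paper's proof essentially verbatim. Both apply Lemma~\ref{lem:appendix-rate-one-shot-kl} with Gaussian channels, identify $\imath=\ell_\tau$, rewrite the event as $\{\ell_\tau-dI_\tau\ge t\}$, dispose of the case $t\le 0$ trivially, and then invoke parts (a) and (b) of Proposition~\ref{prop:appendix-rate-info-density} for the two bounds.
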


\begin{proof}
Apply Lemma~\ref{lem:appendix-rate-one-shot-kl} to the Gaussian observation distributions
\[
  W(\cdot\mid y)=\mathcal N(y,\tau^2I_d).
\]
Then $Q=\nu_{U,\tau}$, the empirical average over the sampled codebook is $\nu_{S,\tau}$, and the log-likelihood ratio is
\[
  \imath(y,x)=\ell_\tau(y,x).
\]
Hence, for every $a\ge 0$,
\[
  \mathbb E_S\big[\KL(\nu_{S,\tau}\|\nu_{U,\tau})\big]
  \le
  e^{-a}
  +\log(1+|\Vset|)\,\mathbb P\big(\ell_\tau\ge \log n-a\big).
\]
Set
\[
  b:=d(\kappa(n)-I_\tau)-a,
  \qquad
  t:=b_+=(d(\kappa(n)-I_\tau)-a)_+.
\]
If $b\le 0$, then the probability above is at most $1$, and therefore
\[
  \mathbb E_S\big[\KL(\nu_{S,\tau}\|\nu_{U,\tau})\big]
  \le
  e^{-a}+\log(1+|\Vset|),
\]
which is stronger than both stated bounds because $t=0$ in this case.

Assume now that $b>0$, so that $t=b$. Since $\log n=d\kappa(n)$ and $\mathbb E[\ell_\tau]=dI_\tau$,
\[
  \mathbb P\big(\ell_\tau\ge \log n-a\big)
  =
  \mathbb P\big(\ell_\tau-dI_\tau\ge t\big).
\]
Applying Proposition~\ref{prop:appendix-rate-info-density}(a) yields
\[
  \mathbb E_S\big[\KL(\nu_{S,\tau}\|\nu_{U,\tau})\big]
  \le
  e^{-a}+2\log(1+|\Vset|)\cdot
  \exp\!\Big(-c\min\big\{t^2/d,\ t\big\}\Big)
\]
This proves the first estimate. Applying Proposition~\ref{prop:appendix-rate-info-density}(b)
instead gives
\[
  \mathbb E_S\big[\KL(\nu_{S,\tau}\|\nu_{U,\tau})\big]
  \le
  e^{-a}+\log(1+|\Vset|)\cdot
  \exp\!\left(-c\frac{\tau^2}{R^2d}t^2\right)
\]
This proves the second estimate.
\end{proof}

\subsection{Proof of the large-sample and interval theorems}
\label{sec:appendix-rate-deferred-proofs}

We now return to the statements from Section~\ref{sec:appendix-rate-engine-intervals}.

\begin{proof}[Proof of Proposition~\ref{prop:appendix-rate-regularity}]
For $y\in\Vset$, set
\[
  f_y(x):=\log L_\tau(y,x)=\log\varphi_{\tau^2}(x-y)-\log u_\tau(x).
\]
Differentiating and using Proposition~\ref{prop:appendix-rate-smoothing}(a) for the null law $U$,
\[
  \nabla f_y(x)=\frac{y-x}{\tau^2}-s_{U,\tau}(x)
  =\frac{y-m_U(x)}{\tau^2},
\]
where $m_U(x):=\mathbb E[Y\mid Y+Z=x]$ under $Y\sim U$ and $Z\sim\mathcal N(0,\tau^2I_d)$.
Both $y$ and $m_U(x)$ belong to the convex hull of $[-R,R]^d$, so
\[
  \|\nabla f_y(x)\|_2\le \frac{2R\sqrt d}{\tau^2}.
\]
Thus every $f_y$ is $(2R\sqrt d/\tau^2)$-Lipschitz, and the same bound passes to
\[
  \ell^{\max}_{\tau,S}=\max_{y\in S}f_y
  \qquad\text{and}\qquad
  m_{\tau,S}=\psi\big(\ell^{\max}_{\tau,S}-\log\Lambda_\tau+1\big),
\]
since maxima preserve Lipschitz constants and $\psi$ is $1$-Lipschitz.

If $J_\tau(S)\le \theta(\tau)$, then $\widehat s_\tau^{(S)}=s_{U,\tau}$, so the desired Lipschitz bound follows directly from Proposition~\ref{prop:appendix-rate-smoothing}(d). Otherwise write
\[
  \Delta_\tau:=s_{S,\tau}-s_{U,\tau},
  \qquad
  \widehat s_\tau^{(S)}=s_{U,\tau}+m_{\tau,S}\Delta_\tau.
\]
By Proposition~\ref{prop:appendix-rate-smoothing}(c) and (d),
\[
  \sup_x\|\Delta_\tau(x)\|_2\le \frac{2R\sqrt d}{\tau^2},
  \qquad
  \operatorname{Lip}(\Delta_\tau)\le 2\Big(\frac{1}{\tau^2}+\frac{R^2d}{\tau^4}\Big).
\]
Applying the product rule for Lipschitz functions,
\begin{align*}
  \operatorname{Lip}(\widehat s_\tau^{(S)})
  &\le
  \operatorname{Lip}(s_{U,\tau})+\operatorname{Lip}(m_{\tau,S})\sup_x\|\Delta_\tau(x)\|_2+\operatorname{Lip}(\Delta_\tau)\\
  &\le
  \Big(\frac{1}{\tau^2}+\frac{R^2d}{\tau^4}\Big)
  +\frac{2R\sqrt d}{\tau^2}\cdot\frac{2R\sqrt d}{\tau^2}
  +2\Big(\frac{1}{\tau^2}+\frac{R^2d}{\tau^4}\Big)\\
  &\le
  \frac{3}{\tau^2}+\frac{7R^2d}{\tau^4}.
\end{align*}
\end{proof}

\begin{proof}[Proof of Theorem~\ref{thm:appendix-rate-large-sample}]
By Proposition~\ref{prop:appendix-rate-kl-fisher}(d),
\[
  J_\tau(S)\le \frac{C_1 R^2 d^2}{\gamma^2\tau^2}\KL(\nu_{S,\widetilde\tau}\|\nu_{U,\widetilde\tau})
\]
for a universal constant $C_1>0$. Hence Markov's inequality gives
\[
  \mathbb P_S\big[J_\tau(S)>\theta(\tau)\big]
  \le
  \frac{C_1 R^2 d^2}{\gamma^2\tau^2\theta(\tau)}\,
  \mathbb E_S\big[\KL(\nu_{S,\widetilde\tau}\|\nu_{U,\widetilde\tau})\big].
\]
It therefore suffices to show that the expectation on the right is at most
\[
  \frac{\delta\,\tau^2\theta(\tau)\gamma^2}{C_1 R^2 d^2}.
\]

Let
\[
  a:=\frac{C}{2}H(\tau),
\]
where $C$ is the large constant appearing in the statement. Since
\[
  \kappa(n)\ge I_{\widetilde\tau}+\frac{1}{d}\min\{E_{\mathrm{med}}(\tau),E_{\mathrm{big}}(\tau)\},
\]
we have
\[
  d\big(\kappa(n)-I_{\widetilde\tau}\big)-a
  \ge
  \min\{E_{\mathrm{med}}(\tau),E_{\mathrm{big}}(\tau)\}-a.
\]
Set
\[
  t:=d\big(\kappa(n)-I_{\widetilde\tau}\big)-a.
\]
Then either
\[
  t\ge C\sqrt{dH(\tau)}+\frac{C}{2}H(\tau)
\]
or
\[
  t\ge C\frac{R}{\widetilde\tau}\sqrt{dH(\tau)}+\frac{C}{2}H(\tau).
\]

If the first alternative holds, then the first estimate in
Lemma~\ref{lem:appendix-rate-explicit-kl} at noise $\widetilde\tau$ gives
\[
  \mathbb E_S\big[\KL(\nu_{S,\widetilde\tau}\|\nu_{U,\widetilde\tau})\big]
  \le
  e^{-a}
  +2\log(1+|\Vset|)\,
  \exp\!\left(-c\min\Big\{\frac{t^2}{d},\,t\Big\}\right).
\]
Since
\[
  t\ge C\sqrt{dH(\tau)}+\frac{C}{2}H(\tau),
\]
choosing the constant $C$ in the present theorem sufficiently large yields
\[
  \min\Big\{\frac{t^2}{d},\,t\Big\}\ge c'H(\tau),
\]
and therefore
\[
  \mathbb E_S\big[\KL(\nu_{S,\widetilde\tau}\|\nu_{U,\widetilde\tau})\big]
  \le
  e^{-a}+2\log(1+|\Vset|)\,e^{-c'H(\tau)}.
\]

If the second alternative holds, then the second estimate in
Lemma~\ref{lem:appendix-rate-explicit-kl} at noise $\widetilde\tau$ gives
\[
  \mathbb E_S\big[\KL(\nu_{S,\widetilde\tau}\|\nu_{U,\widetilde\tau})\big]
  \le
  e^{-a}
  +\log(1+|\Vset|)\,
  \exp\!\left(-c\frac{\widetilde\tau^2t^2}{R^2d}\right).
\]
Since
\[
  t\ge C\frac{R}{\widetilde\tau}\sqrt{dH(\tau)}+\frac{C}{2}H(\tau),
\]
choosing the constant $C$ in the present theorem sufficiently large again yields
\[
  \frac{\widetilde\tau^2t^2}{R^2d}\ge c'H(\tau),
\]
and therefore
\[
  \mathbb E_S\big[\KL(\nu_{S,\widetilde\tau}\|\nu_{U,\widetilde\tau})\big]
  \le
  e^{-a}+\log(1+|\Vset|)\,e^{-c'H(\tau)}.
\]

In either case,
\[
  \mathbb E_S\big[\KL(\nu_{S,\widetilde\tau}\|\nu_{U,\widetilde\tau})\big]
  \le
  e^{-a}+2\log(1+|\Vset|)\,e^{-c'H(\tau)}.
\]

Finally, by the definition of $H(\tau)$ and by increasing the constant $C$ in that definition if
necessary, we may ensure that
\[
  \frac{C_1 R^2 d^2}{\gamma^2\tau^2\theta(\tau)}
  \big(1+2\log(1+|\Vset|)\big)e^{-c'H(\tau)}
  \le
  \delta.
\]
Hence
\[
  \mathbb P_S\big[J_\tau(S)>\theta(\tau)\big]\le \delta.
\]
Whenever $J_\tau(S)\le \theta(\tau)$, the oracle is everywhere equal to the null score, and
therefore equals $s_{U,\tau}$ pointwise. This proves the stated null-coupling conclusion.
\end{proof}

\begin{proof}[Proof of Theorem~\ref{thm:appendix-rate-interval}]
By definition,
\[
  \kappa_-(\tau)=\frac{1}{d}\log\Lambda_\tau+\frac{\log\delta-1}{d}.
\]
Lemma~\ref{lem:appendix-rate-quantile} therefore gives
\[
  \kappa_-(\tau)
  \ge
  I_\tau
  -C\sqrt{\frac{\log(1/\zeta(\tau))}{d}}
  -C\frac{\log(1/\zeta(\tau))+\log(1/\delta)}{d}.
\]
On the other hand, Theorem~\ref{thm:appendix-rate-large-sample} gives
\[
  \kappa_+(\tau)
  \le
  I_{\widetilde\tau}
  +C\sqrt{\frac{H(\tau)}{d}}
  +C\frac{H(\tau)}{d},
\]
where $\widetilde\tau=\sqrt{1-\gamma^2/(R^2 d^2)}\,\tau$.

We next compare $I_{\widetilde\tau}$ and $I_\tau$. Let
\[
  \mathcal I(t):=I(Y;Y+\sqrt t\,Z)=dI_{\sqrt t},
\]
where $Y\sim U$ and $Z\sim\mathcal N(0,I_d)$. By the standard de Bruijn identity,
\[
  \frac{d}{dt}h(\nu_{U,\sqrt t})=\frac12 J(\nu_{U,\sqrt t}),
\]
so
\[
  \mathcal I'(t)=\frac12 J(\nu_{U,\sqrt t})-\frac{d}{2t}.
\]
The Fisher information of a Gaussian-smoothed law satisfies the standard Stam bound
\[
  J(\nu_{U,\sqrt t})\le \frac{d}{t},
\]
whence
\[
  -\frac{d}{2t}\le \mathcal I'(t)\le 0.
\]
Integrating from $\widetilde\tau^2$ to $\tau^2$ yields
\[
  0\le dI_{\widetilde\tau}-dI_\tau
  =\mathcal I(\widetilde\tau^2)-\mathcal I(\tau^2)
  \le
  \frac{d}{2}\log\!\Big(\frac{\tau^2}{\widetilde\tau^2}\Big)
  =
  \frac{d}{2}\log\!\Big(\frac{1}{1-\gamma^2/(R^2 d^2)}\Big)
  \le
  \frac{C\gamma^2}{R^2 d}.
\]
Hence
\[
  0\le I_{\widetilde\tau}-I_\tau\le \frac{C\gamma^2}{R^2 d^2}.
\]
For all sufficiently large $d$, this term is dominated by
\[
  C\sqrt{\frac{H_{\mathrm{win}}(\tau)}{d}}+C\frac{H_{\mathrm{win}}(\tau)}{d},
\]
since $H_{\mathrm{win}}(\tau)\ge 1$.

Combining the two bounds yields
\[
  \kappa_+(\tau)-\kappa_-(\tau)
  \le
  C\sqrt{\frac{H_{\mathrm{win}}(\tau)}{d}}
  +C\frac{H_{\mathrm{win}}(\tau)}{d},
\]
because
\[
  H_{\mathrm{win}}(\tau)=\max\{\log(1/\zeta(\tau)),\,H(\tau),\,\log(1/\delta)\}.
\]
By the standing interval convention, this is equivalent to the asserted bound on $|\mathcal J(\tau)|$.

For the null-coupling statement, fix $x\in\mathbb R^d$ and an integer $n\ge 1$. If
$\kappa(n)<\kappa_-(\tau)$, then Proposition~\ref{prop:appendix-rate-small-sample} applies. If
$\kappa(n)>\kappa_+(\tau)$, then Theorem~\ref{thm:appendix-rate-large-sample} applies. Thus
\[
  \kappa(n)\notin \mathcal J(\tau)\Longrightarrow
  \mathbb P_S\big[\widehat s_\tau^{(S)}(x)\neq s_{U,\tau}(x)\big]\le \delta,
\]
as required.
\end{proof}

\subsection{Small-noise information and packing}
\label{sec:appendix-rate-common}

The next two lemmas are used in both parameterized arguments. The first shows that $I_\tau$ is already of order $\log M$ when the total noise is within a constant factor of $\gamma$, and the second converts this into a packing of the admissible rate set.

\begin{lemma}[Small-noise lower bound on $I_\tau$]
  \label{lem:appendix-rate-small-noise-mi}
    Assume $\gamma\in(0,R/2)$.
  There exist absolute constants $c_{\rm sm}>1$ and $c_I>1/16$ such that, for every
  $\tau\in[\gamma,c_{\rm sm}\gamma]$,
  \[
    I_\tau\ge c_I\log M.
  \]
\end{lemma}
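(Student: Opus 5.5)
Since $U$ and the noise both factor over the $d/2$ planar blocks, the mutual information $dI_\tau=\mathbb E[\ell_\tau]$ is a sum of $d/2$ identical block terms. Hence
\[
  I_\tau=\tfrac12\,\mathcal I_{\rm blk}(\tau),
  \qquad
  \mathcal I_{\rm blk}(\tau):=I\big(Y^{(1)};Y^{(1)}+Z^{(1)}\big),
\]
where $Y^{(1)}$ is uniform on $\Aset$ and $Z^{(1)}\sim\mathcal N(0,\tau^2I_2)$. The plan is therefore to show that $\mathcal I_{\rm blk}(\tau)\ge 2c_I\log M$ for some $c_I>1/16$, uniformly over $\tau\in[\gamma,c_{\rm sm}\gamma]$ and over all $R/\gamma>2$. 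I will reuse the notation of the proof of Proposition~\ref{prop:appendix-rate-info-density}: $\lambda=\tau/\gamma\in[1,c_{\rm sm}]$ and $b_k=a_k/\gamma$. In that notation
\[
  \mathcal I_{\rm blk}=\log M-\mathbb E\log S_\lambda(N).
\]
Note also that $R/\gamma>2$ forces $M\ge 7$.

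\emph{Step 1: a pairwise lower bound that works for every $M$.} The mutual information of a uniform finite Gaussian mixture satisfies the standard Bhattacharyya (Chernoff-$\tfrac12$) pairwise lower bound:
\[
  I(Y;Y+Z)\ \ge\ -\frac1M\sum_{i}\log\Big(\frac1M\sum_{k}e^{-\|a_i-a_k\|^2/(8\tau^2)}\Big).
\]
I would either cite this or prove it directly from Jensen applied to $\mathbb E\log S_\lambda(N)$ through a half-power. By rotational symmetry every $i$ gives the same inner sum, so
\[
  \mathcal I_{\rm blk}\ge \log M-\log K,
  \qquad
  K:=\sum_k e^{-\|b_0-b_k\|^2/(8\lambda^2)} .
\]

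\emph{Step 2: bound $K$ by an absolute constant.} From the proof of Proposition~\ref{prop:appendix-rate-info-density} we have $\|b_0-b_k\|\ge \frac{4R}{\gamma M}k'$. Since $M=\lceil\pi R/\gamma\rceil\le \pi R/\gamma+1$ and $M\ge 7$, this gives $\|b_0-b_k\|\ge c_0k'$ with $c_0=\frac4\pi\cdot\frac{M-1}{M}\ge \frac{24}{7\pi}\approx1.09$. Therefore
\[
  K\le 1+2\sum_{j\ge1}e^{-c_0^2j^2/(8\lambda^2)}.
\]
For $M=7$ only $j\le3$ occurs, and at $\lambda=1$ this sum is about $1+2(0.86+0.55+0.26)\approx4.4$. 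In general the bound is $K\le K_0(c_{\rm sm})$, a constant that is continuous in $c_{\rm sm}$.

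\emph{Step 3: turn this into the $\log M$ rate.} We need $\log M-\log K\ge 2c_I\log M$ with $2c_I>1/8$, i.e.\ $\log K\le(1-2c_I)\log M$. For large $M$ this is immediate, because $K$ is bounded. The binding case is the smallest one, $M=7$, where we need $K<7^{7/8}\approx5.5$. This holds at $\lambda=1$ with room to spare, so by continuity it also holds for $\lambda\le c_{\rm sm}$ with $c_{\rm sm}>1$ chosen close enough to $1$. We then fix $c_I$ slightly above $1/16$ so that the inequality holds for all $M\ge7$; since $\log K/\log M$ is decreasing in $M$, the $M=7$ case governs. As a sanity check for large $M$, the sandwich \eqref{eq:kappa-logS-sandwich} already gives $\mathcal I_{\rm blk}\ge\log M-\log(1+\lambda)-C$, which is consistent.

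The main obstacle is making the constants explicit for small $M$. The bound must hold uniformly down to $M=7$, with $c_I$ strictly above $1/16$ and $c_{\rm sm}$ strictly above $1$. This requires the careful numeric bound on $K$ from Step 2, including the $(M-1)/M$ loss in $c_0$, rather than the qualitative $O(1)$ estimate.
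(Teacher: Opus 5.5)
Your proof is correct, but it takes a different route from the paper's. The paper lower-bounds the block mutual information with Fano's inequality. The nearest-point decoder on the circle errs only if the noise crosses one of the two adjacent Voronoi bisectors, so the block error is at most $2\overline\Phi(d_{\min}/(2\tau))\le 2\overline\Phi(0.8/c_{\rm sm})\le 7/16$. Fano then gives $\mathcal I_{\rm blk}\ge(1-q_\star)\log M-h_2(q_\star)>\frac15\log M$, hence $c_I=1/10$.

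You instead use the pairwise Bhattacharyya (Chernoff-$\tfrac12$) lower bound on the mutual information of a Gaussian mixture. By rotational symmetry this reduces to $\log M-\log K$, which requires summing a Gaussian series over all pairwise distances rather than using only the minimum spacing. The Fano route needs only $d_{\min}$, a single scalar Gaussian tail, and an elementary entropy inequality. Your route relies on a less standard inequality, but it gives $\mathcal I_{\rm blk}\ge\log M-O(1)$. That is asymptotically sharper than Fano's $(1-q_\star)\log M-O(1)$, and at $M=7$ the constant is comparable: $K_0(1)\approx 4.59$ gives $c_I\approx 0.108$.

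Two small cautions.

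\emph{Step 1 if you prove it yourself.} If you prove it rather than cite it (Haussler--Opper, Kolchinsky--Tracey), do not follow the half-power Jensen with the subadditivity bound $\mathbb E\sqrt{S_\lambda(N)}\le\sum_k\mathbb E\sqrt{p_k/p_0}$. That yields only $\mathcal I_{\rm blk}\ge\log M-2\log K$, which is negative at $M=7$. Instead, with $p_Y$ the density of $\mathcal N(Y,\tau^2I_2)$ and $\bar p$ the block mixture, write $\mathcal I_{\rm blk}=-2\,\mathbb E\log\sqrt{\bar p(X)/p_Y(X)}\ge -2\log\mathbb E_Y\!\int\!\sqrt{p_Y\,\bar p}$. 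Then apply Cauchy--Schwarz: $\int\sqrt{\bar p}\,\mathbb E_Y\sqrt{p_Y}\le\big(\int(\mathbb E_Y\sqrt{p_Y})^2\big)^{1/2}=(K/M)^{1/2}$. This is the monotonicity of Sibson's $\alpha$-mutual information, and it gives exactly $\log M-\log K$.

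\emph{The numerics in Step 3.} The figure $K\approx 4.4$ is the finite $M=7$ sum. The argument that $M=7$ governs (monotonicity in $M$) actually uses the uniform bound $K_0(1)\approx 4.59$ with all $j\ge1$. That is still below $7^{7/8}\approx 5.49$, so your conclusion stands, but the uniform constant is the one you should check.
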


\begin{proof}
Since
\[
  M=\left\lceil \frac{\pi R}{\gamma}\right\rceil
\]
and $\gamma<R/2$, we have $M\ge 7$. Let $d_{\min}$ be the nearest-neighbor spacing in one planar
block of the product-circle support. Then
\[
  d_{\min}=2R\sin(\pi/M).
\]
For a fixed block point, the nearest-point estimator can err only if the Gaussian perturbation crosses
one of the two Voronoi bisectors adjacent to that point. Each such bisector lies at distance
$d_{\min}/2$ from the point. Writing $\overline\Phi(t):=\mathbb P[N\ge t]$ for $N\sim \mathcal N(0,1)$, the block error probability is therefore at most
\[
  q(\tau):=2\overline\Phi\!\left(\frac{d_{\min}}{2\tau}\right),
\]
which is a continuous increasing function of $\tau$. Moreover,
\[
  \frac{d_{\min}}{2\gamma}
  =
  \frac{R}{\gamma}\sin(\pi/M)
  >
  \frac{M-1}{\pi}\sin(\pi/M)
  =
  \Bigl(1-\frac1M\Bigr)\frac{\sin(\pi/M)}{\pi/M}.
\]
Since $M\ge 7$ and $x\mapsto \sin x/x$ is decreasing on $(0,\pi)$,
\[
  \Bigl(1-\frac1M\Bigr)\frac{\sin(\pi/M)}{\pi/M}
  \ge
  \frac67\cdot\frac{\sin(\pi/7)}{\pi/7}
  >0.8.
\]
Fix
\[
  q_\star:=\frac{7}{16}.
\]
Since $2\overline\Phi(0.8)<7/16$, we may choose an absolute constant $c_{\rm sm}>1$ sufficiently close to $1$
so that
\[
  2\overline\Phi\!\left(\frac{0.8}{c_{\rm sm}}\right)\le q_\star.
\]
Then for every $\tau\in[\gamma,c_{\rm sm}\gamma]$,
\[
  q(\tau)
  =
  2\overline\Phi\!\left(\frac{d_{\min}}{2\tau}\right)
  \le
  2\overline\Phi\!\left(\frac{0.8}{c_{\rm sm}}\right)
  \le q_\star.
\]
Fano's inequality therefore gives a lower bound of
\[
  F_M(q_\star):=\log M-h_2(q_\star)-q_\star\log(M-1)
\]
for one two-dimensional block. Since
\[
  F_M(q_\star)
  \ge
  (1-q_\star)\log M-h_2(q_\star)
\]
and, for $M\ge 7$,
\[
  \Big(1-q_\star-\frac15\Big)\log M
  \ge
  \Big(1-q_\star-\frac15\Big)\log 7
  > h_2(q_\star),
\]
we obtain
\[
  F_M(q_\star)>\frac15\log M.
\]
Because the $d/2$ block observations are independent and identically distributed, the total mutual
information is at least $(d/2)F_M(q_\star)$. Hence
\[
  I_\tau\ge \frac12F_M(q_\star)>\frac1{10}\log M.
\]
The claim follows with $c_I:=1/10$.
\end{proof}

\begin{lemma}[Packing on the admissible rate axis]
  \label{lem:appendix-rate-packing}
  Let $1\le n_{\min}\le n_{\max}$ be integers, and write
  \[
    K_d:=\{\kappa(n):\ n\in[n_{\min},n_{\max}]\cap\mathbb N\}.
  \]
  Then for every $w>0$, the set $K_d$ contains a subset $G$ whose points are pairwise separated by more
  than $w$ and whose cardinality satisfies
  \[
    |G|
    \ge
    \frac{\log(n_{\max}/(2n_{\min}))}{\log 2+2dw}.
  \]
\end{lemma}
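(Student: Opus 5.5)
We need a greedy packing in log scale. The idea is to choose integers $n_0=n_{\min}$ and $n_{k+1}=\lceil e^{dw}n_k\rceil+1$, or more carefully the smallest integer with $\kappa(n_{k+1})>\kappa(n_k)+w$, and stop once the next term would exceed $n_{\max}$. The corresponding rates $\kappa(n_k)$ lie in $K_d$ and are pairwise separated by more than $w$ by construction, since the sequence of rates is strictly increasing with consecutive gaps exceeding $w$.

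The key step is bounding the growth ratio $n_{k+1}/n_k$. The smallest integer $m$ with $\log m>\log n_k+dw$ satisfies $m\le e^{dw}n_k+1$. Since $n_k\ge1$, this gives $m\le e^{dw}n_k+n_k=(e^{dw}+1)n_k$. Using $e^{dw}+1\le 2e^{dw}\le 2e^{2dw}$, we get
\[
  n_{k+1}\le 2e^{2dw}n_k .
\]
(The factor $2dw$ rather than $dw$ in the target bound leaves slack, so even a cruder rounding argument suffices.) Thus $n_k\le n_{\min}(2e^{2dw})^k$.

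Then count. The construction continues as long as $n_{\min}(2e^{2dw})^{k}\le n_{\max}$ guarantees $n_k\le n_{\max}$. So all indices $k=0,\dots,K$ with
\[
  K=\left\lfloor\frac{\log(n_{\max}/n_{\min})}{\log 2+2dw}\right\rfloor
\]
are valid. This gives
\[
  |G|=K+1\ge \frac{\log(n_{\max}/n_{\min})}{\log2+2dw}\ge\frac{\log(n_{\max}/(2n_{\min}))}{\log2+2dw}.
\]
If $n_{\max}<2n_{\min}$, the right-hand side is negative and the claim is trivial with $G=\{\kappa(n_{\min})\}$, or even $G=\emptyset$.

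The only mildly delicate point is the integer rounding when $e^{dw}$ is close to $1$, i.e.\ small $w$, where consecutive integers already differ by more than $w$ in rate. The $\log 2$ term in the denominator absorbs exactly this: the ratio bound $n_{k+1}\le 2e^{2dw}n_k$ holds uniformly, so no case split is needed.
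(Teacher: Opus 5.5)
Your proof is correct and takes essentially the same route as the paper: both build a geometrically growing integer sequence starting at $n_{\min}$ (the paper uses $n_{j+1}=\lceil e^{2dw}n_j\rceil$, you use the greedy minimal successor), absorb the rounding through $n_{j+1}\le 2e^{2dw}n_j$, and count the indices with $(2e^{2dw})^k n_{\min}\le n_{\max}$. Your count is slightly sharper, giving $\log(n_{\max}/n_{\min})$ in the numerator instead of $\log(n_{\max}/(2n_{\min}))$, which implies the stated bound.
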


\begin{proof}
If $n_{\max}<2n_{\min}$, then
\[
  \log\!\Big(\frac{n_{\max}}{2n_{\min}}\Big)<0,
\]
so the asserted lower bound is nonpositive and the claim is trivial. We may therefore assume
\[
  n_{\max}\ge 2n_{\min}.
\]

Set
\[
  \Gamma:=\exp(2dw),
\]
and recursively define
\[
  n_0:=n_{\min},
  \qquad
  n_{j+1}:=\lceil \Gamma n_j\rceil
\]
for as long as $n_j\le n_{\max}$. Let $m$ be the number of generated points not exceeding
$n_{\max}$, and define
\[
  G:=\{\kappa(n_j):\ 0\le j\le m-1\}\subseteq K_d.
\]
Since $n_{j+1}\ge \Gamma n_j$, we have for every $0\le j<k\le m-1$,
\[
  \kappa(n_k)-\kappa(n_j)\ge \frac{k-j}{d}\log\Gamma\ge 2w>w,
\]
so the points of $G$ are pairwise separated by more than $w$.

Because $\Gamma\ge 1$ and $n_j\ge 1$,
\[
  n_{j+1}=\lceil \Gamma n_j\rceil\le \Gamma n_j+1\le 2\Gamma n_j.
\]
By induction,
\[
  n_j\le (2\Gamma)^j n_{\min}\qquad\text{for every }j\ge 0.
\]
Set
\[
  j_\star:=\left\lfloor \frac{\log(n_{\max}/(2n_{\min}))}{\log(2\Gamma)}\right\rfloor.
\]
Then
\[
  (2\Gamma)^{j_\star}n_{\min}\le \frac{n_{\max}}{2}<n_{\max},
\]
so $n_{j_\star}\le n_{\max}$ and therefore
\[
  m\ge j_\star+1\ge \frac{\log(n_{\max}/(2n_{\min}))}{\log(2\Gamma)}.
\]
Since $\log(2\Gamma)=\log 2+2dw$, the claimed bound follows.
\end{proof}

\section{Proof of the parameterized $L^p$ lower bound}
\label{sec:appendix-lp-kappa}
Fix $p>0$, $\rho\in(0,1/4)$, and parameters $R>0$,
$\gamma\in(0,R/2)$.
Let $\varepsilon_{\rm err}\in(0,1]$ and set
\[
  H_{L^p}:=\log(d/\varepsilon_{\rm err})+\log(R/\gamma),
\]
let $c_0=c_0(p,\rho)>0$ be a sufficiently small constant, and define
\[
  Q_\star:=\left\lfloor c_0\,
  \frac{d\,\log(R/\gamma)}{\sqrt{dH_{L^p}}+H_{L^p}}
  \right\rfloor.
\]

If $Q_\star=0$, the theorem asserts that every $0$-query algorithm
fails. A $0$-query algorithm produces an output distribution $\mu$
independent of the codebook~$S$ and therefore fails trivially. 
We therefore assume $Q_\star\ge 1$ for the remainder of the proof
and set $\delta:=\rho^2/(80Q_\star)$.
Choose an auxiliary constant $C_{\min}=C_{\min}(p,\rho)\ge 1$ sufficiently large, and define
\[
  n_{\min}^{(L^p)}:=\left\lceil e^{C_{\min}H_{L^p}}\right\rceil,
  \qquad
  n_{\max}^{(L^p)}:=\left\lfloor M^{d/32}\right\rfloor,
  \qquad
  \kappa_{\min}^{(L^p)}:=\kappa(n_{\min}^{(L^p)}),
  \qquad
  \kappa_{\max}^{(L^p)}:=\kappa(n_{\max}^{(L^p)}),
\]
and
\[
  K_d^{(L^p)}:=\{\kappa(n):\ n\in[n_{\min}^{(L^p)},n_{\max}^{(L^p)}]\cap\mathbb N\}.
\]
Since $Q_\star\ge 1$, after taking $d$ sufficiently large we have
\[
  \frac{d\log(R/\gamma)}{\sqrt{dH_{L^p}}+H_{L^p}}\ge \frac1{c_0},
\]
and therefore, because $\log M\asymp \log(R/\gamma)$,
\[
  \sqrt{dH_{L^p}}+H_{L^p}\le Cc_0\,d\log M,
\]
where $C>0$ is an absolute constant.

\subsection{The fixed-noise $L^p$ oracle}
\label{sec:appendix-lp-kappa-profile}

For $\tau\ge\gamma$ define
\[
  \zeta^{(p)}(\tau):=\min\left\{\frac{1}{2},\ \left(\frac{\varepsilon_{\rm err}\tau}{4R\sqrt d}\right)^p\right\},
\]
and
\[
  \theta^{(p)}(\tau):=
  \begin{cases}
    \varepsilon_{\rm err}^2/\tau^2, & 0<p\le 2,\\[1mm]
    \dfrac{\varepsilon_{\rm err}^p}{\tau^p}\left(\dfrac{\tau^2}{4R\sqrt d}\right)^{p-2}, & p>2.
  \end{cases}
\]
Let $\widehat s^{(S,p)}_\tau$ denote the oracle defined in Section~\ref{sec:appendix-rate-engine-intervals} with $(\zeta,\theta)=(\zeta^{(p)},\theta^{(p)})$.

\begin{lemma}[Fixed-noise $L^p$ accuracy]
  \label{lem:appendix-lp-kappa-accuracy}
  For every $\tau\ge\gamma$, the oracle $\widehat s^{(S,p)}_\tau$ is $L^p$-accurate at total noise $\tau$ in the sense that
  \[
    \mathbb E_{X\sim \nu_{S,\tau}}
    \big[\|\widehat s^{(S,p)}_\tau(X)-s_{S,\tau}(X)\|_2^p\big]
    \le
    \frac{\varepsilon_{\rm err}^p}{\tau^p}
  \]
  for every codebook $S$. Moreover,
  \[
    \operatorname{Lip}\big(\widehat s^{(S,p)}_\tau\big)
    \le
    \frac{3}{\tau^2}+\frac{7R^2d}{\tau^4}.
  \]
\end{lemma}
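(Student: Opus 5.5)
The plan is to split according to the two branches in the definition of $\widehat s^{(S,p)}_\tau$. The Lipschitz bound needs no new work: it is exactly Proposition~\ref{prop:appendix-rate-regularity} applied with $(\zeta,\theta)=(\zeta^{(p)},\theta^{(p)})$, and that proposition holds for any choice of auxiliary functions. So the work is entirely in the $L^p$ accuracy claim. Throughout, write $\Delta_\tau:=s_{S,\tau}-s_{U,\tau}$.

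\emph{Case $J_\tau(S)\le\theta^{(p)}(\tau)$.} Here the oracle equals $s_{U,\tau}$, so the error is $\|\Delta_\tau\|_2$. By Proposition~\ref{prop:appendix-rate-smoothing}(c) this error is bounded pointwise by $B:=2R\sqrt d/\tau^2$. Its second moment under $\nu_{S,\tau}$ is exactly $J_\tau(S)$, since $\nabla\log(d\nu_{S,\tau}/d\nu_{U,\tau})=\Delta_\tau$.

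For $p\le 2$, Jensen gives $\mathbb E\|\Delta_\tau\|^p\le J_\tau(S)^{p/2}\le(\varepsilon_{\rm err}^2/\tau^2)^{p/2}=\varepsilon_{\rm err}^p/\tau^p$.

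For $p>2$, interpolate: $\mathbb E\|\Delta_\tau\|^p\le B^{p-2}J_\tau(S)\le B^{p-2}\theta^{(p)}(\tau)$. With $\theta^{(p)}$ as defined, $B^{p-2}\theta^{(p)}=\frac{\varepsilon_{\rm err}^p}{\tau^p}\bigl(\frac{2R\sqrt d}{\tau^2}\cdot\frac{\tau^2}{4R\sqrt d}\bigr)^{p-2}=\frac{\varepsilon_{\rm err}^p}{\tau^p}2^{-(p-2)}$, which suffices.

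\emph{Case $J_\tau(S)>\theta^{(p)}(\tau)$.} Now the error is $(1-m_{\tau,S})\|\Delta_\tau\|_2$. It vanishes wherever $m_{\tau,S}=1$, i.e.\ wherever $\ell^{\max}_{\tau,S}\ge\log\Lambda_\tau$, which is precisely on $G_\tau(S)$. Off $G_\tau(S)$ the error is at most $B$. The mass-coverage bound gives $\nu_{S,\tau}(G_\tau(S)^c)\le\zeta^{(p)}(\tau)$. Hence
\[
\mathbb E\|\widehat s-s_{S,\tau}\|^p\le B^p\zeta^{(p)}(\tau)\le\Bigl(\frac{2R\sqrt d}{\tau^2}\Bigr)^p\Bigl(\frac{\varepsilon_{\rm err}\tau}{4R\sqrt d}\Bigr)^p=2^{-p}\frac{\varepsilon_{\rm err}^p}{\tau^p}.
\]
If instead $\zeta^{(p)}=1/2$ is the active branch of the minimum, then $1/2\le(\varepsilon_{\rm err}\tau/(4R\sqrt d))^p$, so the same bound still holds.

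\emph{Main obstacle.} The delicate point is the mass-coverage step. It relies on the claim that $\ell_\tau(Y_S,X)$ has the same law as $\ell_\tau$ for every codebook $S$, which follows from the symmetry of $\Vset$ and is stated earlier. It also requires checking that $G_\tau(S)$ is defined with the non-strict inequality, so that the quantile fact $\mathbb P[\ell_\tau<\log\Lambda_\tau]\le\zeta$ applies. Apart from this, the remaining steps are constant bookkeeping.
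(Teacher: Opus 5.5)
Your proposal is correct and follows essentially the same argument as the paper. The paper uses the same split on whether $J_\tau(S)\le\theta^{(p)}(\tau)$, with Jensen for $p\le 2$ and the interpolation $\|\Delta_\tau\|_2^p\le B^{p-2}\|\Delta_\tau\|_2^2$ for $p>2$ on the null branch, the bound $B^p\zeta^{(p)}(\tau)$ via mass coverage of $G_\tau(S)$ on the masked branch, and the Lipschitz bound quoted from Proposition~\ref{prop:appendix-rate-regularity}.
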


\begin{proof}
Fix $S$ and $\tau\ge \gamma$, and write
\[
  \Delta_\tau(x):=s_{S,\tau}(x)-s_{U,\tau}(x).
\]

If $J_\tau(S)\le \theta^{(p)}(\tau)$, then $\widehat s^{(S,p)}_\tau=s_{U,\tau}$. Hence
\[
  \widehat s^{(S,p)}_\tau(X)-s_{S,\tau}(X)=-\Delta_\tau(X),
  \qquad
  \mathbb E\|\Delta_\tau(X)\|_2^2=J_\tau(S).
\]
For $0<p\le 2$, Jensen gives
\[
  \mathbb E\|\Delta_\tau(X)\|_2^p
  \le
  J_\tau(S)^{p/2}
  \le
  \theta^{(p)}(\tau)^{p/2}
  =
  \frac{\varepsilon_{\rm err}^p}{\tau^p}.
\]
For $p>2$, Proposition~\ref{prop:appendix-rate-smoothing}(c) gives
\[
  \|\Delta_\tau(x)\|_2\le \frac{2R\sqrt d}{\tau^2}\qquad\forall x,
\]
so
\[
  \|\Delta_\tau(X)\|_2^p
  \le
  \Big(\frac{2R\sqrt d}{\tau^2}\Big)^{p-2}\|\Delta_\tau(X)\|_2^2.
\]
Using the definition of $\theta^{(p)}(\tau)$,
\[
  \mathbb E\|\Delta_\tau(X)\|_2^p
  \le
  \Big(\frac{2R\sqrt d}{\tau^2}\Big)^{p-2}\theta^{(p)}(\tau)
  \le
  \frac{\varepsilon_{\rm err}^p}{\tau^p}.
\]

Assume next that $J_\tau(S)>\theta^{(p)}(\tau)$. Then
\[
  \widehat s^{(S,p)}_\tau(x)-s_{S,\tau}(x)
  =-(1-m_{\tau,S}(x))\Delta_\tau(x),
\]
and $1-m_{\tau,S}(x)$ vanishes on $G_\tau(S)$ and is bounded by $\mathbf 1\{x\notin G_\tau(S)\}$. Therefore
\[
  \|\widehat s^{(S,p)}_\tau(X)-s_{S,\tau}(X)\|_2^p
  \le
  \|\Delta_\tau(X)\|_2^p\mathbf 1\{X\notin G_\tau(S)\}.
\]
Using again Proposition~\ref{prop:appendix-rate-smoothing}(c) and the mass-coverage bound
\[
  \nu_{S,\tau}(G_\tau(S))\ge 1-\zeta^{(p)}(\tau),
\]
we obtain
\[
  \mathbb E\|\widehat s^{(S,p)}_\tau(X)-s_{S,\tau}(X)\|_2^p
  \le
  \Big(\frac{2R\sqrt d}{\tau^2}\Big)^p\zeta^{(p)}(\tau)
  \le
  \frac{\varepsilon_{\rm err}^p}{\tau^p}.
\]
The Lipschitz bound is Proposition~\ref{prop:appendix-rate-regularity}.
\end{proof}

\begin{proposition}[Extremal-noise collapse in the $L^p$ case]
  \label{prop:appendix-lp-kappa-extremal}
  There exist constants $c_{\rm sm},C_{\rm lg}>0$, depending only on $p$ and $\rho$, such that the following hold for all sufficiently large $d$.
  \begin{enumerate}[label=(\alph*)]
    \item If $\tau\in[\gamma,c_{\rm sm}\gamma]$, then
    \[
      \kappa_-^{(p)}(\tau)>\kappa_{\max}^{(L^p)},
    \]
    where $\kappa_-^{(p)}(\tau):=\frac{1}{d}\log\big(\delta e^{-1}\Lambda_\tau(\zeta^{(p)}(\tau))\big)$.
    \item If $\tau\ge C_{\rm lg}R\sqrt d$, then
    \[
      \kappa_+^{(p)}(\tau)<\kappa_{\min}^{(L^p)},
    \]
    where $\kappa_+^{(p)}(\tau)$ is the upper threshold from Theorem~\ref{thm:appendix-rate-large-sample} specialized to the choice $(\zeta^{(p)},\theta^{(p)})$.
  \end{enumerate}
\end{proposition}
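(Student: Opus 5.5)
For part~(a) I would lower-bound $\kappa_-^{(p)}(\tau)$ with Lemma~\ref{lem:appendix-rate-quantile} and then use the small-noise information bound of Lemma~\ref{lem:appendix-rate-small-noise-mi}. For part~(b) I would upper-bound the large-sample threshold of Theorem~\ref{thm:appendix-rate-large-sample} through its $E_{\mathrm{big}}$ branch, together with a crude bound on the mutual information at large noise. In both parts the task is to show that the error terms are $O(H_{L^p})$ and are absorbed either by $\log M$, using the smallness of $c_0$, or by $C_{\min}H_{L^p}$, using the largeness of $C_{\min}$.

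\emph{Part (a).} By definition $\kappa_-^{(p)}(\tau)=\frac1d\log\Lambda_\tau(\zeta^{(p)}(\tau))+\frac{\log\delta-1}{d}$. Lemma~\ref{lem:appendix-rate-quantile} then gives
\[
  \kappa_-^{(p)}(\tau)\ge I_\tau-C\sqrt{\tfrac{\log(1/\zeta^{(p)})}{d}}-C\,\tfrac{\log(1/\zeta^{(p)})+\log(1/\delta)+1}{d}.
\]
I would take $c_{\rm sm}$ to be the constant of Lemma~\ref{lem:appendix-rate-small-noise-mi}, so that $I_\tau\ge \frac1{10}\log M$ for $\tau\in[\gamma,c_{\rm sm}\gamma]$.

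Next I bound the error terms. Since $\tau\ge\gamma$,
\[
  \log(1/\zeta^{(p)})\le \log 2+p\log\bigl(4R\sqrt d/(\varepsilon_{\rm err}\gamma)\bigr)\le C_pH_{L^p}.
\]
Since $Q_\star\le d\log M$, we also have $\log(1/\delta)=\log(80Q_\star/\rho^2)\le C_\rho\log d\le C_\rho H_{L^p}$. The total error is therefore at most $C_{p,\rho}(\sqrt{dH_{L^p}}+H_{L^p})/d$. By the standing consequence $\sqrt{dH_{L^p}}+H_{L^p}\le Cc_0\,d\log M$, this is at most $C'_{p,\rho}c_0\log M$.

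On the other side, $\kappa_{\max}^{(L^p)}\le\frac1{32}\log M$. Choosing $c_0$ small enough that $C'_{p,\rho}c_0<\frac1{10}-\frac1{32}$ gives the strict inequality $\kappa_-^{(p)}(\tau)>\kappa_{\max}^{(L^p)}$.

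\emph{Part (b).} I use $\kappa_+^{(p)}(\tau)\le I_{\widetilde\tau}+E_{\mathrm{big}}(\tau)/d$. The first term is handled by the Gaussian channel capacity bound: each coordinate of $Y$ has second moment at most $R^2$, so
\[
  dI_{\widetilde\tau}=I(Y;Y+\widetilde\tau Z)\le \tfrac d2\log\bigl(1+R^2/\widetilde\tau^2\bigr)\le \frac{R^2d}{2\widetilde\tau^2}\le \frac{1}{C_{\rm lg}^2}
\]
for $\tau\ge C_{\rm lg}R\sqrt d$ and $d$ large, since $\widetilde\tau\approx\tau$. For the second term, $R/\widetilde\tau\le 2/(C_{\rm lg}\sqrt d)$, so
\[
  E_{\mathrm{big}}(\tau)\le (2C/C_{\rm lg})\sqrt{H(\tau)}+CH(\tau).
\]

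The main obstacle, and the step I would check most carefully, is showing that $H(\tau)\le C_pH_{L^p}$ uniformly over all $\tau\ge C_{\rm lg}R\sqrt d$. The worry is that $\theta^{(p)}(\tau)$ decays in $\tau$, so one must confirm that $\tau^2\theta^{(p)}(\tau)$ does not. For $p\le2$ this holds exactly, since $\tau^2\theta^{(p)}=\varepsilon_{\rm err}^2$. For $p>2$,
\[
  \tau^2\theta^{(p)}(\tau)=\varepsilon_{\rm err}^p\bigl(\tau/(4R\sqrt d)\bigr)^{p-2}\ge\varepsilon_{\rm err}^p
\]
once $C_{\rm lg}\ge4$. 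Combining this with $\log(1+|\Vset|)\le d\log M$ and $\log(1/\delta)\le C\log d$, the argument of the logarithm in $H(\tau)$ is at most $\mathrm{poly}_p(d,R/\gamma,1/\varepsilon_{\rm err})$, which gives $H(\tau)\le C_pH_{L^p}$.

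Hence $d\,\kappa_+^{(p)}(\tau)\le 1+C''_pH_{L^p}$. Choosing $C_{\min}>C''_p+1$ (recall $H_{L^p}\ge\log d\ge1$) makes this strictly less than $\log n_{\min}^{(L^p)}=d\,\kappa_{\min}^{(L^p)}$, which proves $\kappa_+^{(p)}(\tau)<\kappa_{\min}^{(L^p)}$.
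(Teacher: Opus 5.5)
Your proposal is correct and follows essentially the same route as the paper. Part (a) combines the quantile lemma with the small-noise information bound and absorbs the $O(\sqrt{dH_{L^p}}+H_{L^p})$ error through the smallness of $c_0$. Part (b) bounds $I_{\widetilde\tau}$ by the Gaussian-channel bound and shows $H(\tau)\le C_pH_{L^p}$, which is then absorbed by $C_{\min}$.

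Where you differ, you are more explicit than the paper: you name the $E_{\mathrm{big}}$ branch, whereas using $E_{\mathrm{med}}$ would leave a $\sqrt{H_{L^p}/d}$ term that is too large. You also verify that $\tau^2\theta^{(p)}(\tau)$ does not decay once $C_{\rm lg}\ge 4$, a step the paper only asserts.
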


\begin{proof}
We show that, at the two ends of the total-noise range, the admissible rate interval collapses outside $[\kappa_{\min}^{(L^p)},\kappa_{\max}^{(L^p)}]$.

\emph{Small noise.}
Fix $\tau\in[\gamma,c_{\rm sm}\gamma]$. By Lemma~\ref{lem:appendix-rate-small-noise-mi},
\[
  I_\tau\ge c_I\log M
\]
with $c_I>1/16$. Lemma~\ref{lem:appendix-rate-quantile} with $\zeta=\zeta^{(p)}(\tau)$ yields
\[
  \log\Lambda_\tau\big(\zeta^{(p)}(\tau)\big)
  \ge
  dI_\tau-C\Big(\sqrt{d\log(1/\zeta^{(p)}(\tau))}+\log(1/\zeta^{(p)}(\tau))\Big).
\]
Since $\tau\ge \gamma$,
\[
  \log(1/\zeta^{(p)}(\tau))\le C\big(\log(d/\varepsilon_{\rm err})+\log(R/\gamma)\big)=CH_{L^p}.
\]
Therefore Lemma~\ref{lem:appendix-rate-quantile} and the quantitative bound
\[
  \sqrt{dH_{L^p}}+H_{L^p}\le Cc_0\,d\log M
\]
give
\[
  \log\Lambda_\tau\big(\zeta^{(p)}(\tau)\big)
  \ge
  c_I d\log M-C\big(\sqrt{dH_{L^p}}+H_{L^p}\big)
  \ge
  \big(c_I-Cc_0\big)d\log M.
\]
Recalling the definition of $\kappa_-^{(p)}(\tau)$ and using $\log\delta^{-1}=O(\log d)$, we obtain
\[
  \kappa_-^{(p)}(\tau)\ge \big(c_I-Cc_0\big)\log M-o(1).
\]
Since
\[
  \kappa_{\max}^{(L^p)}=\frac1d\log n_{\max}^{(L^p)}=\frac1{32}\log M+o(1)
\]
and $c_I>1/16$, choosing $c_0=c_0(p,\rho)$ sufficiently small gives
\[
  c_I-Cc_0>\frac1{32}.
\]
Hence
\[
  \kappa_-^{(p)}(\tau)>\kappa_{\max}^{(L^p)}
\]
for all sufficiently large $d$.

\emph{Large noise.}
Fix $\tau\ge C_{\rm lg}R\sqrt d$ with $C_{\rm lg}$ sufficiently large. By Theorem~\ref{thm:appendix-rate-large-sample},
\[
  \kappa_+^{(p)}(\tau)=I_{\widetilde\tau}+\frac1d\min\{E_{\mathrm{med}}(\tau),E_{\mathrm{big}}(\tau)\}.
\]
Since $\widetilde\tau\asymp\tau$ and $\tau\ge C_{\rm lg}R\sqrt d$, the elementary Gaussian-observation bound (using $\operatorname{Cov}(Y)\preceq (R^2/2)I_d$) gives
\[
  I_{\widetilde\tau}\le \frac12\log\!\left(1+\frac{R^2}{2\widetilde\tau^2}\right)\le \frac{C}{d}.
\]
For the $L^p$ choice of $(\zeta,\theta)$,
\[
  H(\tau)
  =
  \max\left\{1,\log\!\left(\frac{\log(1+|\Vset|)}{\delta}\cdot
  \frac{C R^2 d^2}{\gamma^2\tau^2\theta^{(p)}(\tau)}\right)\right\}
  \le CH_{L^p}
\]
in this range of $\tau$: indeed $\tau\ge\gamma$, and the explicit form of $\theta^{(p)}(\tau)$ contributes at worst polynomial dependence on $d$, $R/\gamma$, and $\varepsilon_{\rm err}^{-1}$. Hence
\[
  \kappa_+^{(p)}(\tau)\le \frac{CH_{L^p}}{d}.
\]
Since
\[
  \kappa_{\min}^{(L^p)}=\frac1d\log n_{\min}^{(L^p)}\ge \frac{C_{\min}H_{L^p}}{d},
\]
choosing $C_{\min}=C_{\min}(p,\rho)$ sufficiently large ensures
\[
  \kappa_{\min}^{(L^p)}\ge \frac{2CH_{L^p}}{d}.
\]
Therefore
\[
  \kappa_+^{(p)}(\tau)<\kappa_{\min}^{(L^p)}.
\]
\end{proof}

\begin{corollary}[Rate-interval width in the $L^p$ case]
  \label{cor:appendix-lp-kappa-window}
  Under the standing assumptions of this section, there exists a family of intervals $\{\mathcal J^{(L^p)}(\tau)\}_{\tau\ge\gamma}$ such that
  \[
    \sup_{\tau\ge\gamma}|\mathcal J^{(L^p)}(\tau)|
    \le
    C\sqrt{\frac{H_{L^p}}{d}}+C\frac{H_{L^p}}{d},
  \]
  and for every $\tau\ge\gamma$, every fixed query point $x\in\mathbb R^d$, and every integer $n\in[n_{\min}^{(L^p)},n_{\max}^{(L^p)}]$,
  \[
    \kappa(n)\notin \mathcal J^{(L^p)}(\tau)
    \quad\Longrightarrow\quad
    \mathbb P_S\big[\widehat s^{(S,p)}_\tau(x)\neq s_{U,\tau}(x)\big]\le \delta.
  \]
\end{corollary}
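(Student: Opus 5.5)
The idea is to define $\mathcal J^{(L^p)}(\tau)$ by truncating the fixed-noise interval of Theorem~\ref{thm:appendix-rate-interval} to the admissible rate range. With $(\zeta,\theta)=(\zeta^{(p)},\theta^{(p)})$, set
\[
  \mathcal J^{(L^p)}(\tau):=[\kappa_-^{(p)}(\tau),\kappa_+^{(p)}(\tau)]\cap[\kappa_{\min}^{(L^p)},\kappa_{\max}^{(L^p)}].
\]
The implication then follows immediately from the null-coupling part of Theorem~\ref{thm:appendix-rate-interval}. If $n\in[n_{\min}^{(L^p)},n_{\max}^{(L^p)}]$, then $\kappa(n)$ lies in the admissible range. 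So $\kappa(n)\notin\mathcal J^{(L^p)}(\tau)$ forces $\kappa(n)\notin[\kappa_-^{(p)}(\tau),\kappa_+^{(p)}(\tau)]$, and the theorem yields the probability bound $\delta$. Only the uniform width bound needs work.

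For the width, I split the range of $\tau$ into three regimes.

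\emph{Small noise.} If $\tau\in[\gamma,c_{\rm sm}\gamma]$, Proposition~\ref{prop:appendix-lp-kappa-extremal}(a) gives $\kappa_-^{(p)}(\tau)>\kappa_{\max}^{(L^p)}$. Hence $\mathcal J^{(L^p)}(\tau)=\emptyset$.

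\emph{Large noise.} If $\tau\ge C_{\rm lg}R\sqrt d$, Proposition~\ref{prop:appendix-lp-kappa-extremal}(b) gives $\kappa_+^{(p)}(\tau)<\kappa_{\min}^{(L^p)}$, so again $\mathcal J^{(L^p)}(\tau)=\emptyset$.

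\emph{Intermediate noise.} For $\tau\in[c_{\rm sm}\gamma,C_{\rm lg}R\sqrt d]$, Theorem~\ref{thm:appendix-rate-interval} bounds the width by $C\sqrt{H_{\rm win}(\tau)/d}+CH_{\rm win}(\tau)/d$. It therefore suffices to show $H_{\rm win}(\tau)\le C'H_{L^p}$ uniformly on this range, with $C'$ depending on $p$ and $\rho$. Truncation only shrinks the interval, so this bound carries over to $\mathcal J^{(L^p)}(\tau)$.

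The remaining work is to bound the three terms of $H_{\rm win}(\tau)$ by $O(H_{L^p})$.

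\emph{First term.} We have $\log(1/\zeta^{(p)}(\tau))\le \log 2+p\log(4R\sqrt d/(\varepsilon_{\rm err}\tau))$. Since $\tau\ge\gamma$, this is at most $O_p(\log d+\log(1/\varepsilon_{\rm err})+\log(R/\gamma))=O(H_{L^p})$.

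\emph{Second term.} Recall $\delta=\rho^2/(80Q_\star)$. We have $Q_\star\le c_0 d\log(R/\gamma)/\sqrt{dH_{L^p}}\le d\log(R/\gamma)$, so $\log(1/\delta)\le O_\rho(\log d+\log\log(R/\gamma))\le CH_{L^p}$.

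\emph{Third term.} This is $H(\tau)$, the logarithm of
\[
  \frac{\log(1+|\Vset|)}{\delta}\cdot\frac{CR^2d^2}{\gamma^2\tau^2\theta^{(p)}(\tau)}.
\]
First, $\log(1+|\Vset|)\le d\log M+1$, which is polynomial in $d$ and $\log(R/\gamma)$. Next, $\tau^2\theta^{(p)}(\tau)$ equals $\varepsilon_{\rm err}^2$ when $p\le 2$, and equals $\varepsilon_{\rm err}^p\tau^{p-2}(4R\sqrt d)^{2-p}$ when $p>2$. In the second case, using $\tau\ge\gamma$ gives
\[
  \frac{1}{\tau^2\theta^{(p)}}\le \varepsilon_{\rm err}^{-p}\,(4R\sqrt d/\gamma)^{p-2}.
\]
So the whole argument of the logarithm is $\poly_p(d,R/\gamma,1/\varepsilon_{\rm err})$, and $H(\tau)\le C_pH_{L^p}$.

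The key point is that all these bounds are uniform in $\tau\ge\gamma$; in fact the intermediate restriction is not even needed for this estimate. The step I expect to need the most care is bounding $\log(1/\delta)$: $\delta$ depends on $Q_\star$, so I must use the explicit upper bound on $Q_\star$ to avoid circularity.

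Combining the three regimes, the supremum of $|\mathcal J^{(L^p)}(\tau)|$ over $\tau\ge\gamma$ is at most $C\sqrt{H_{L^p}/d}+CH_{L^p}/d$, with the constant $C$ enlarged to absorb the $p$- and $\rho$-dependent factors.
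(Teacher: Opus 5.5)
Your proof is correct, and its core matches the paper's: apply Theorem~\ref{thm:appendix-rate-interval} with $(\zeta^{(p)},\theta^{(p)})$, then bound $\log(1/\zeta^{(p)}(\tau))$, $H(\tau)$ and $\log(1/\delta)$ by $O_{p,\rho}(H_{L^p})$.

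The difference is how you handle the extremal noise ranges. The paper sets $\mathcal J^{(L^p)}(\tau)=\emptyset$ for $\tau\in[\gamma,c_{\rm sm}\gamma]\cup[C_{\rm lg}R\sqrt d,\infty)$. It then needs Proposition~\ref{prop:appendix-lp-kappa-extremal} to show that every admissible rate lies outside $[\kappa_-^{(p)},\kappa_+^{(p)}]$ there, so that the null-coupling still holds. Your intersection with $[\kappa_{\min}^{(L^p)},\kappa_{\max}^{(L^p)}]$ gives that implication for admissible $n$ directly. As you observe, your bounds on $H_{\rm win}(\tau)$ hold for every $\tau\ge\gamma$: $\zeta^{(p)}$ is increasing in $\tau$, and for $p>2$ the quantity $1/(\tau^2\theta^{(p)}(\tau))$ is decreasing in $\tau$, so both are worst at $\tau=\gamma$. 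Hence the three-regime split and Proposition~\ref{prop:appendix-lp-kappa-extremal} are redundant in your version, which is a genuine simplification. The paper's choice only buys intervals that are literally empty outside the active range, and Theorem~\ref{thm:appendix-rate-engine} does not need that.

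One cosmetic slip: $\log(1/\zeta^{(p)}(\tau))=\max\{\log 2,\ p\log(4R\sqrt d/(\varepsilon_{\rm err}\tau))\}$. Your displayed bound should therefore use $\log_+$, since it fails when the second logarithm is negative. The $O(H_{L^p})$ conclusion is unaffected.
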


\begin{proof}
For
\[
  \tau\in[\gamma,c_{\rm sm}\gamma]\cup[C_{\rm lg}R\sqrt d,\infty),
\]
Proposition~\ref{prop:appendix-lp-kappa-extremal} shows that every admissible rate already lies
outside the interval supplied by Theorem~\ref{thm:appendix-rate-interval}. Hence in these regimes
we may set
\[
  \mathcal J^{(L^p)}(\tau):=\emptyset
\]
without changing the null-coupling conclusion.

On the active range
\[
  c_{\rm sm}\gamma\le \tau\le C_{\rm lg}R\sqrt d,
\]
apply Theorem~\ref{thm:appendix-rate-interval} with
\[
  \zeta(\tau)=\zeta^{(p)}(\tau),
  \qquad
  \theta(\tau)=\theta^{(p)}(\tau),
\]
and define
\[
  \mathcal J^{(L^p)}(\tau):=[\kappa_-^{(p)}(\tau),\kappa_+^{(p)}(\tau)].
\]
Here
\[
  \log(1/\zeta^{(p)}(\tau))\le CH_{L^p}
\]
because $\tau\ge c_{\rm sm}\gamma$, while the explicit form of $\theta^{(p)}(\tau)$ gives
\[
  \log\!\left(\frac{R^2 d^2}{\gamma^2\tau^2\theta^{(p)}(\tau)}\right)\le C_pH_{L^p}
\]
uniformly on the active range. Moreover,
\[
  \log\log(1+|\Vset|)=\log(d\log M+O(1))\le C(\log d+\log\log M)\le CH_{L^p},
\]
and
\[
  \log(1/\delta)\le C\log d\le CH_{L^p}
\]
because $Q_\star\le d$ and $H_{L^p}\ge \log d$. Thus
\[
  H_{\mathrm{win}}(\tau)\le CH_{L^p}
\]
uniformly on this range, and Theorem~\ref{thm:appendix-rate-interval} yields
\[
  |\mathcal J^{(L^p)}(\tau)|\le C\sqrt{\frac{H_{L^p}}{d}}+C\frac{H_{L^p}}{d}
\]
together with the required null-coupling implication. Combining the active and inactive regimes
proves the corollary.
\end{proof}

\subsection{Proof of Theorem~\ref{thm:main_parameterized}}
\label{sec:appendix-lp-kappa-proof}

\begin{proof}
\medskip
\noindent\emph{Interval and packing.}
For each $\tau\ge \gamma$, set
\[
  \widehat s_\tau^{(S)}:=\widehat s_\tau^{(S,p)},
\]
and let $\{\mathcal J^{(L^p)}(\tau)\}_{\tau\ge \gamma}$ be the interval family from
Corollary~\ref{cor:appendix-lp-kappa-window}. Then assumption~(i) of
Theorem~\ref{thm:appendix-rate-engine} holds, and
\[
  w_{L^p}:=\sup_{\tau\ge\gamma}|\mathcal J^{(L^p)}(\tau)|
  \le
  C\sqrt{\frac{H_{L^p}}{d}}+C\frac{H_{L^p}}{d}.
\]

By Lemma~\ref{lem:appendix-rate-packing}, the admissible rate set $K_d^{(L^p)}$ contains a subset $G_p$
whose points are pairwise separated by more than $w_{L^p}$ and whose cardinality satisfies
\[
  |G_p|
  \ge
  \frac{\log(n_{\max}^{(L^p)}/(2n_{\min}^{(L^p)}))}{\log 2+2dw_{L^p}}.
\]
Now
\[
  \log\frac{n_{\max}^{(L^p)}}{2n_{\min}^{(L^p)}}
  \ge
  \frac{d}{32}\log M-C_{\min}H_{L^p}-O(1).
\]
Using the quantitative bound
\[
  \sqrt{dH_{L^p}}+H_{L^p}\le Cc_0\,d\log M,
\]
we obtain
\[
  \log\frac{n_{\max}^{(L^p)}}{2n_{\min}^{(L^p)}}
  \ge
  \Big(\frac1{32}-CC_{\min}c_0\Big)d\log M-O(1).
\]
Choosing $c_0=c_0(p,\rho)$ sufficiently small gives
\[
  \log\frac{n_{\max}^{(L^p)}}{2n_{\min}^{(L^p)}}
  \ge
  c\,d\log(R/\gamma)
\]
for all sufficiently large $d$, because $\log M\asymp \log(R/\gamma)$. Moreover,
\[
  \log 2+2dw_{L^p}\le C\big(\sqrt{dH_{L^p}}+H_{L^p}\big).
\]
Hence
\[
  |G_p|
  \ge
  c'\,\frac{d\log(R/\gamma)}{\sqrt{dH_{L^p}}+H_{L^p}}.
\]
Since
\[
  Q_\star\le c_0\,\frac{d\log(R/\gamma)}{\sqrt{dH_{L^p}}+H_{L^p}},
\]
choosing $c_0=c_0(p,\rho)$ sufficiently small ensures
\[
  |G_p|\ge \frac{80Q_\star}{\rho^2}.
\]

\medskip
\noindent\emph{Base-noise separation.}
Set
\[
  A^{(p)}(S):=G_\gamma(S),
\]
where the good set is defined using the threshold $\Lambda_\gamma(\zeta^{(p)}(\gamma))$. By the
mass-coverage property of $G_\gamma(S)$,
\[
  \pi_{S,\gamma}(A^{(p)}(S))\ge 1-\zeta^{(p)}(\gamma).
\]
Since
\[
  \zeta^{(p)}(\gamma)\le \Big(\frac{\varepsilon_{\rm err}\gamma}{4R\sqrt d}\Big)^p\to 0,
\]
we have $\zeta^{(p)}(\gamma)\le \rho/2$ for all sufficiently large $d$, and therefore
\[
  \pi_{S,\gamma}(A^{(p)}(S))\ge 1-\rho/2.
\]
For the pointwise overlap bound, fix $x\in\R^d$ and an admissible size $n$. If $x\in A^{(p)}(S)$,
then some $y\in S$ satisfies
\[
  L_\gamma(y,x)\ge \Lambda_\gamma(\zeta^{(p)}(\gamma)).
\]
A union bound and Markov's inequality therefore give
\[
  \mathbb P_S[x\in A^{(p)}(S)]
  \le
  \frac{n}{\Lambda_\gamma(\zeta^{(p)}(\gamma))}
  \le
  \frac{n_{\max}^{(L^p)}}{\Lambda_\gamma(\zeta^{(p)}(\gamma))}.
\]
Moreover,
\[
  \log(1/\zeta^{(p)}(\gamma))\le CH_{L^p}.
\]
Hence Lemma~\ref{lem:appendix-rate-quantile} at $\tau=\gamma$,
Lemma~\ref{lem:appendix-rate-small-noise-mi}, and the quantitative bound
\[
  \sqrt{dH_{L^p}}+H_{L^p}\le Cc_0\,d\log M
\]
give
\[
  \log\Lambda_\gamma\big(\zeta^{(p)}(\gamma)\big)
  \ge
  c_Id\log M-C\big(\sqrt{dH_{L^p}}+H_{L^p}\big)
  \ge
  \big(c_I-Cc_0\big)d\log M.
\]
Since
\[
  \log n_{\max}^{(L^p)}=\frac{d}{32}\log M+O(1)
\]
and $c_I>1/16$, choosing $c_0=c_0(p,\rho)$ sufficiently small yields
\[
  c_I-Cc_0>\frac1{32}.
\]
It follows that
\[
  \frac{n_{\max}^{(L^p)}}{\Lambda_\gamma(\zeta^{(p)}(\gamma))}\le \frac{\rho^2}{8}
\]
for all sufficiently large $d$. Thus the separating-set hypothesis in
Theorem~\ref{thm:appendix-rate-engine} holds for the map $A^{(p)}$ and the packing set $G_p$.

\medskip
\noindent\emph{Passage to the original $\sigma$-indexed model.}
Let $\mathcal D_0$ be the distribution over instances $(n,S)$ from Theorem~\ref{thm:appendix-rate-engine}. For each such instance define
\[
  \pi^{(S,\gamma)}:=\pi_{S,\gamma}=\nu_S*\mathcal N(0,\gamma^2I_d),
  \qquad
  \widehat s_\sigma^{(S,\gamma)}(x):=\widehat s_{\tau(\sigma)}^{(S)}(x),
  \qquad
  \tau(\sigma):=\sqrt{\gamma^2+\sigma^2}.
\]
Because $\nu_S$ is supported on $\Vset\subset[-R,R]^d$, the target $\pi^{(S,\gamma)}$ is of the bounded-plus-noise form considered in Theorem~\ref{thm:main_parameterized}. Moreover,
\[
  (\pi^{(S,\gamma)})_\sigma=\nu_{S,\tau(\sigma)},
  \qquad
  s_{\pi^{(S,\gamma)},\sigma}=s_{S,\tau(\sigma)}.
\]
Hence Lemma~\ref{lem:appendix-lp-kappa-accuracy} gives, for every $\sigma>0$,
\[
  \mathbb E_{X\sim (\pi^{(S,\gamma)})_\sigma}
  \big[\|\widehat s_\sigma^{(S,\gamma)}(X)-s_{\pi^{(S,\gamma)},\sigma}(X)\|_2^p\big]
  \le
  \frac{\varepsilon_{\rm err}^p}{\tau(\sigma)^p}
  \le
  \frac{\varepsilon_{\rm err}^p}{\sigma^p},
\]
and
\[
  \operatorname{Lip}\big(\widehat s_\sigma^{(S,\gamma)}\big)
  \le
  \frac{3}{\tau(\sigma)^2}+\frac{7R^2d}{\tau(\sigma)^4}.
\]
By Proposition~\ref{prop:appendix-rate-smoothing}(d), the true score obeys the same Lipschitz bound up to smaller constants. Thus the family
\[
  \bigl(\pi^{(S,\gamma)},\{\widehat s_\sigma^{(S,\gamma)}\}_{\sigma>0}\bigr)
\]
has the properties asserted in items~(1)--(3) of Theorem~\ref{thm:main_parameterized}.

Now fix an adaptive algorithm $\mathcal A$ obeying the query budget in Theorem~\ref{thm:main_parameterized}, and construct $\mathcal A^{\sharp}$ by replacing each query $(\sigma,x)$ with $(\tau(\sigma),x)$. For every instance $(n,S)$, the two runs induce the same joint law of queries, oracle answers, and final output. Let $\mathcal D$ be the induced law of
\[
  \bigl(\pi^{(S,\gamma)},\{\widehat s_\sigma^{(S,\gamma)}\}_{\sigma>0}\bigr)
\]
when $(n,S)\sim \mathcal D_0$. Taking the constant in Theorem~\ref{thm:main_parameterized} to be $c_0$, the algorithm $\mathcal A^{\sharp}$ makes at most $Q_\star$ queries. Theorem~\ref{thm:appendix-rate-engine} therefore gives
\[
  \mathbb P_{(\pi,\{\widehat s_\sigma\})\sim \mathcal D}
  \Big[\TV\big(\widehat X,\pi\big)\ge 1-\rho\Big]
  \ge 1-\rho.
\]
This proves Theorem~\ref{thm:main_parameterized}.
\end{proof}

\paragraph{Deriving Theorem~\ref{thm:main_simplified}.}
Under Assumptions~\ref{ass:bounded} and~\ref{ass:Lp}, the ratio
$R/\gamma$ is fixed and $\varepsilon_{\rm err}=d^{-O(1)}$, so
\[
  H_{L^p}=\log(d/\varepsilon_{\rm err})+\log(R/\gamma)=O(\log d).
\]
Applying Theorem~\ref{thm:main_parameterized}, we obtain
\[
  Q
  =
  \Omega\!\left(\frac{d}{\sqrt{d\log d}+\log d}\right)
  =
  \Omega\!\left(\sqrt{\frac{d}{\log d}}\right).
\]
This yields Theorem~\ref{thm:main_simplified}.

\section{Proof of the parameterized $\psi_1$ lower bound}
\label{sec:appendix-psi-kappa}

We now prove Theorem~\ref{thm:psi1_parameterized}.

Fix $\rho\in(0,1/4)$ and parameters $R>0$, $\gamma\in(0,R/2)$, and $\varepsilon_{\rm err}\in(0,1]$. For all sufficiently large $d$, set
\[
  H_{\psi_1}:=\log d+\log(R/\gamma)+\frac{R}{\gamma}\frac{\sqrt d}{\varepsilon_{\rm err}},
\]
let $c_0=c_0(\rho)>0$ be a sufficiently small constant, and define
\[
  Q_\star:=\left\lfloor c_0\frac{d\,\log(R/\gamma)}{\sqrt{dH_{\psi_1}}+H_{\psi_1}}\right\rfloor.
\]
We work in the nontrivial regime $Q_\star\ge 1$ and set $\delta:=\rho^2/(80Q_\star)$. Fix an auxiliary exponent $C_{\min}\ge 1$, and define
\[
  n_{\min}^{(\psi_1)}:=\lceil d^{C_{\min}}\rceil,
  \qquad
  n_{\max}^{(\psi_1)}:=\left\lfloor M^{d/32}\right\rfloor,
  \qquad
  \kappa_{\min}^{(\psi_1)}:=\kappa(n_{\min}^{(\psi_1)}),
  \qquad
  \kappa_{\max}^{(\psi_1)}:=\kappa(n_{\max}^{(\psi_1)}),
\]
and
\[
  K_d^{(\psi_1)}:=\{\kappa(n):\ n\in[n_{\min}^{(\psi_1)},n_{\max}^{(\psi_1)}]\cap\mathbb N\}.
\]
Since $Q_\star\ge 1$, after taking $d$ sufficiently large we have
\[
  \frac{d\log(R/\gamma)}{\sqrt{dH_{\psi_1}}+H_{\psi_1}}\ge \frac1{c_0},
\]
and therefore, because $\log M\asymp \log(R/\gamma)$,
\[
  \sqrt{dH_{\psi_1}}+H_{\psi_1}\le Cc_0\,d\log M,
\]
where $C>0$ is an absolute constant.

\subsection{The fixed-noise $\psi_1$ oracle}
\label{sec:appendix-psi-kappa-profile}

For $\tau\ge\gamma$ define
\[
  \zeta^{(\psi)}(\tau):=\min\left\{\frac{1}{4},\ 2\exp\!\left(-\frac{2R\sqrt d}{\tau\varepsilon_{\rm err}}\right)\right\},
  \qquad
  \theta^{(\psi)}(\tau):=\frac{R^2d}{4\tau^4}\exp\!\left(-\frac{2R\sqrt d}{\tau\varepsilon_{\rm err}}\right),
\]
and introduce the high-noise cutoff
\[
  \tau_\star:=\frac{4R\sqrt d}{\varepsilon_{\rm err}}.
\]
For $\tau\in[\gamma,\tau_\star)$, let $\widehat s^{(S,\psi)}_\tau$ denote the oracle defined in Section~\ref{sec:appendix-rate-engine-intervals} with $(\zeta,\theta)=(\zeta^{(\psi)},\theta^{(\psi)})$. For $\tau\ge \tau_\star$, define
\[
  \widehat s^{(S,\psi)}_\tau:=s_{U,\tau}.
\]

\begin{proposition}[Fixed-noise $\psi_1$ properties]
  \label{prop:appendix-psi-kappa-properties}
  For every $\tau\ge\gamma$, the oracle $\widehat s^{(S,\psi)}_\tau$ satisfies
  \[
    \mathbb P_{X\sim \nu_{S,\tau}}
    \big[\|\widehat s^{(S,\psi)}_\tau(X)-s_{S,\tau}(X)\|_2\ge z\big]
    \le 2\exp\!\left(-\frac{z\tau}{\varepsilon_{\rm err}}\right)
    \qquad\text{for all }z\ge 0,
  \]
  and
  \[
    \operatorname{Lip}\big(\widehat s^{(S,\psi)}_\tau\big)
    \le
    \frac{3}{\tau^2}+\frac{7R^2d}{\tau^4}.
  \]
\end{proposition}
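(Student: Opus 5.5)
The plan is to follow the structure of the proof of Lemma~\ref{lem:appendix-lp-kappa-accuracy}, replacing the moment computation with a tail bound at each threshold $z$. Fix $S$ and $\tau\ge\gamma$, and write $\Delta_\tau:=s_{S,\tau}-s_{U,\tau}$. Set
\[
  B:=\frac{2R\sqrt d}{\tau^2},
  \qquad
  u:=\frac{B\tau}{\varepsilon_{\rm err}}=\frac{2R\sqrt d}{\tau\varepsilon_{\rm err}}.
\]
By Proposition~\ref{prop:appendix-rate-smoothing}(c), $\|\Delta_\tau\|_2\le B$ pointwise. In every case the oracle error has the form $-c(x)\Delta_\tau(x)$ with $c(x)\in[0,1]$, so its norm is at most $B$. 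Hence for $z>B$ the probability is $0$, and it suffices to treat $z\le B$. In that range we have $2\exp(-z\tau/\varepsilon_{\rm err})\ge 2e^{-u}$.

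\emph{Regime $\tau\ge\tau_\star$.} Here the oracle is $s_{U,\tau}$ and the error is $-\Delta_\tau$. Because $\tau\ge 4R\sqrt d/\varepsilon_{\rm err}$, we get $u\le 1/2$. Then for every $z\le B$ the right-hand side satisfies $2\exp(-z\tau/\varepsilon_{\rm err})\ge 2e^{-1/2}>1$, so the bound holds trivially.

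\emph{Regime $\tau<\tau_\star$, with $J_\tau(S)>\theta^{(\psi)}(\tau)$.} The error is $-(1-m_{\tau,S})\Delta_\tau$, and it vanishes on $G_\tau(S)$. The mass-coverage bound gives $\nu_{S,\tau}(G_\tau(S)^c)\le\zeta^{(\psi)}(\tau)\le 2e^{-u}$. So for $z\le B$ the probability is at most $2e^{-u}\le 2\exp(-z\tau/\varepsilon_{\rm err})$.

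\emph{Regime $\tau<\tau_\star$, with $J_\tau(S)\le\theta^{(\psi)}(\tau)$.} This is the only step that needs care. The error is $-\Delta_\tau$, so only the second-moment bound $J_\tau(S)\le\theta:=\theta^{(\psi)}(\tau)$ is available. Put $z_0:=\varepsilon_{\rm err}\log 2/\tau$. For $z\le z_0$ the right-hand side is at least $1$, so these $z$ are trivial. For $z\in[z_0,B]$, Chebyshev gives probability at most $\theta/z^2$, and it suffices to show
\[
  \theta\le g(z):=2z^2e^{-z\tau/\varepsilon_{\rm err}}.
\]
The function $g$ is unimodal on $[0,\infty)$, so its minimum over $[z_0,B]$ is attained at an endpoint. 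Both endpoints are checked directly:
\[
  g(B)=\frac{8R^2d}{\tau^4}e^{-u}\ge\theta,
  \qquad
  g(z_0)=z_0^2=\frac{(\log 2)^2\varepsilon_{\rm err}^2}{\tau^2}.
\]
Rewriting $\theta$ in terms of $u$ gives $\theta=\frac{\varepsilon_{\rm err}^2}{16\tau^2}u^2e^{-u}$. Since $u^2e^{-u}\le 4e^{-2}$, this yields $\theta\le 0.034\,\varepsilon_{\rm err}^2/\tau^2<g(z_0)$. This closes the last case.

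The Lipschitz bound needs no new work. For $\tau<\tau_\star$ it is exactly Proposition~\ref{prop:appendix-rate-regularity}. For $\tau\ge\tau_\star$ the oracle equals $s_{U,\tau}$, so Proposition~\ref{prop:appendix-rate-smoothing}(d) gives $\operatorname{Lip}\le 1/\tau^2+R^2d/\tau^4$, which is within the stated bound.
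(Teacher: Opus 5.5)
Your proof is correct and uses the paper's decomposition. It has the same three regimes: $\tau\ge\tau_\star$, the masked case $J_\tau(S)>\theta^{(\psi)}(\tau)$, and the null case $J_\tau(S)\le\theta^{(\psi)}(\tau)$. It also uses the same pointwise bound $\|\Delta_\tau\|_2\le B$, the same mass-coverage estimate $\nu_{S,\tau}(G_\tau(S)^c)\le\zeta^{(\psi)}(\tau)\le 2e^{-u}$, and the same Lipschitz argument.

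The one real difference is the null case.
\begin{itemize}
\item The paper proves a small lemma. If $0\le W\le B$ and $\mathbb E[W^2]\le \frac{B^2}{16}e^{-\lambda B}$ with $\lambda=\tau/\varepsilon_{\rm err}$, then the convexity bound $e^{\lambda t}\le 1+\lambda t+\frac{e^{\lambda B}}{B^2}t^2$ on $[0,B]$ gives $\mathbb E[e^{\lambda W}]\le 2$. Exponential Markov then yields $2e^{-\lambda z}$ for all $z$ at once.
\item You instead apply Chebyshev at each fixed $z$. You then check $\theta\le g(z)=2z^2e^{-\lambda z}$ on $[z_0,B]$ using unimodality of $g$ and two endpoint checks: $g(B)=32\theta$, and $\theta\le \frac{e^{-2}}{4}\frac{\varepsilon_{\rm err}^2}{\tau^2}<(\log 2)^2\frac{\varepsilon_{\rm err}^2}{\tau^2}=g(z_0)$. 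Both checks are correct.
\end{itemize}
Your route is more elementary and shows how much slack the choice of $\theta^{(\psi)}$ leaves. The paper's route avoids the case split in $z$ and proves something stronger: an MGF (i.e.\ $\psi_1$-norm) bound, which is a reusable fact about bounded variables with small second moment. Both rely on the same inputs, $W\le B$ and $\theta^{(\psi)}=\frac{B^2}{16}e^{-\lambda B}$. You use boundedness of $u^2e^{-u}$, and the paper uses boundedness of $ue^{-u/2}$.

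One small slip: in the masked case, the sentence ``for $z\le B$ the probability is at most $2e^{-u}$'' is false at $z=0$. There the event $\{\|\cdot\|_2\ge 0\}$ has probability $1$. The target bound at $z=0$ is $2$, so the conclusion still holds, but the vanishing-on-$G_\tau(S)$ argument only covers $z>0$. The paper treats $z=0$ separately.
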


\begin{proof}
Set
\[
  \Delta_\tau(x):=s_{S,\tau}(x)-s_{U,\tau}(x),
  \qquad
  B_\tau:=\frac{2R\sqrt d}{\tau^2},
  \qquad
  \lambda:=\frac{\tau}{\varepsilon_{\rm err}}.
\]
By Proposition~\ref{prop:appendix-rate-smoothing}(c),
\[
  \|\Delta_\tau(x)\|_2\le B_\tau
  \qquad\forall x\in\R^d.
\]

We first verify the $\psi_1$ tail bound.

\emph{High-noise regime.}
If $\tau\ge \tau_\star$, then by definition
\[
  \widehat s_\tau^{(S,\psi)}=s_{U,\tau}.
\]
Thus the oracle error is just $\Delta_\tau(X)$. Since
\[
  \lambda B_\tau
  =
  \frac{\tau}{\varepsilon_{\rm err}}\cdot\frac{2R\sqrt d}{\tau^2}
  =
  \frac{2R\sqrt d}{\tau\varepsilon_{\rm err}}
  \le
  \frac12,
\]
we have:
if $z>B_\tau$, then the left-hand side is zero; if $0\le z\le B_\tau$, then
\[
  2e^{-\lambda z}\ge 2e^{-\lambda B_\tau}\ge 2e^{-1/2}>1.
\]
Hence
\[
  \mathbb P\big[\|\widehat s_\tau^{(S,\psi)}(X)-s_{S,\tau}(X)\|_2\ge z\big]
  \le 2e^{-\lambda z}
\]
for all $z\ge 0$.

\emph{Low-noise null regime.}
Assume $\tau<\tau_\star$ and $J_\tau(S)\le \theta^{(\psi)}(\tau)$.
Then the oracle is again $s_{U,\tau}$, so the error is $\Delta_\tau(X)$.
We use the following elementary claim.

\smallskip
\noindent\textit{Claim.}
If $W$ is nonnegative, $W\le B$ almost surely, and
\[
  \mathbb E[W^2]\le \frac{B^2}{16}e^{-\lambda B},
\]
then
\[
  \mathbb P[W\ge z]\le 2e^{-\lambda z}
  \qquad\forall z\ge 0.
\]

\smallskip
\noindent\textit{Proof of claim.}
For $t\in[0,B]$,
\[
  e^{\lambda t}\le 1+\lambda t+\frac{e^{\lambda B}}{B^2}t^2
\]
by convexity. Therefore
\[
  \mathbb E[e^{\lambda W}]
  \le
  1+\lambda \mathbb E[W]+\frac{e^{\lambda B}}{B^2}\mathbb E[W^2]
  \le
  1+\lambda\sqrt{\mathbb E[W^2]}+\frac{1}{16}.
\]
Using the assumed bound on $\mathbb E[W^2]$ and the elementary inequality
$u e^{-u/2}\le 2e^{-1}<1$ for $u\ge 0$, we obtain
\[
  \lambda\sqrt{\mathbb E[W^2]}
  \le
  \frac{\lambda B}{4}e^{-\lambda B/2}
  \le
  \frac12.
\]
Hence $\mathbb E[e^{\lambda W}]\le 2$, and Markov's inequality proves the claim.

\smallskip
Applying the claim to
\[
  W:=\|\Delta_\tau(X)\|_2,
  \qquad
  B:=B_\tau,
\]
and using
\[
  \mathbb E[W^2]=J_\tau(S)\le \theta^{(\psi)}(\tau)
  =\frac{B_\tau^2}{16}e^{-\lambda B_\tau},
\]
gives the required tail bound.

\emph{Masked regime.}
Assume finally that $\tau<\tau_\star$ and $J_\tau(S)>\theta^{(\psi)}(\tau)$. Then
\[
  \widehat s_\tau^{(S,\psi)}(x)-s_{S,\tau}(x)
  =-(1-m_{\tau,S}(x))\Delta_\tau(x).
\]
If $z=0$, then
\[
  \mathbb P\big[\|\widehat s_\tau^{(S,\psi)}(X)-s_{S,\tau}(X)\|_2\ge 0\big]=1\le 2.
\]
Assume therefore that $z>0$. Since $m_{\tau,S}(x)=1$ for every $x\in G_\tau(S)$, the event
\[
  \big\{\|\widehat s_\tau^{(S,\psi)}(X)-s_{S,\tau}(X)\|_2\ge z\big\}
\]
can occur only when $X\notin G_\tau(S)$. Hence
\[
  \mathbb P\big[\|\widehat s_\tau^{(S,\psi)}(X)-s_{S,\tau}(X)\|_2\ge z\big]
  \le
  \mathbb P[X\notin G_\tau(S)]
  \le
  \zeta^{(\psi)}(\tau).
\]
If $z>B_\tau$ then the left-hand side is zero. If $0<z\le B_\tau$, then
\[
  \zeta^{(\psi)}(\tau)\le 2e^{-\lambda B_\tau}\le 2e^{-\lambda z}
\]
by the definition of $\zeta^{(\psi)}(\tau)$. This proves the $\psi_1$-accuracy statement.

The Lipschitz bound is immediate from Proposition~\ref{prop:appendix-rate-regularity} when
$\tau<\tau_\star$, and from Proposition~\ref{prop:appendix-rate-smoothing}(d) when
$\tau\ge \tau_\star$.
\end{proof}

\begin{corollary}[Rate intervals in the $\psi_1$ case]
  \label{cor:appendix-psi-kappa-window}
  Under the standing assumptions of this section, there exists a family of intervals
  $\{\mathcal J^{(\psi_1)}(\tau)\}_{\tau\ge\gamma}$ such that
  \[
    \sup_{\tau\ge\gamma}|\mathcal J^{(\psi_1)}(\tau)|
    \le
    C\sqrt{\frac{H_{\psi_1}}{d}}+C\frac{H_{\psi_1}}{d},
  \]
  and for every $\tau\ge\gamma$, every fixed query point $x\in\mathbb R^d$, and every integer
  $n\in[n_{\min}^{(\psi_1)},n_{\max}^{(\psi_1)}]$,
  \[
    \kappa(n)\notin \mathcal J^{(\psi_1)}(\tau)
    \quad\Longrightarrow\quad
    \mathbb P_S\big[\widehat s^{(S,\psi)}_\tau(x)\neq s_{U,\tau}(x)\big]\le \delta.
  \]
\end{corollary}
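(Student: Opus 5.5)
We follow the proof of Corollary~\ref{cor:appendix-lp-kappa-window}. The total-noise axis splits into three regimes: the small-noise end, the high-noise end $\tau\ge\tau_\star$, and an active middle range. On each end we show that no admissible rate can be informative, so we may set $\mathcal J^{(\psi_1)}(\tau):=\emptyset$. On the middle range we use Theorem~\ref{thm:appendix-rate-interval} with $(\zeta,\theta)=(\zeta^{(\psi)},\theta^{(\psi)})$ and bound $H_{\mathrm{win}}(\tau)\le CH_{\psi_1}$.

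\emph{High noise.} For $\tau\ge\tau_\star$ the oracle is $s_{U,\tau}$ identically by definition. The null-coupling implication therefore holds with probability $0$ of disagreement, and $\mathcal J^{(\psi_1)}(\tau)=\emptyset$ is valid there with no further work.

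\emph{Small noise.} For $\tau\in[\gamma,c_{\rm sm}\gamma]$ we repeat Proposition~\ref{prop:appendix-lp-kappa-extremal}(a). Lemma~\ref{lem:appendix-rate-small-noise-mi} gives $I_\tau\ge c_I\log M$ with $c_I>1/16$. Since $\log(1/\zeta^{(\psi)}(\tau))\le \log 4+2R\sqrt d/(\gamma\varepsilon_{\rm err})\le CH_{\psi_1}$, Lemma~\ref{lem:appendix-rate-quantile} gives $\log\Lambda_\tau\ge c_I d\log M-C(\sqrt{dH_{\psi_1}}+H_{\psi_1})\ge(c_I-Cc_0)d\log M$. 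Hence $\kappa_-(\tau)>\kappa^{(\psi_1)}_{\max}=\tfrac1{32}\log M+o(1)$ once $c_0$ is small. Proposition~\ref{prop:appendix-rate-small-sample} then covers every admissible $n$.

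\emph{Active range} $c_{\rm sm}\gamma\le\tau<\tau_\star$. Here we apply Theorem~\ref{thm:appendix-rate-interval} directly and bound the three components of $H_{\mathrm{win}}(\tau)$.
\begin{itemize}
\item[(1)] As above, $\log(1/\zeta^{(\psi)}(\tau))\le CH_{\psi_1}$.
\item[(2)] $\log(1/\delta)\le C\log d$, because $Q_\star\le d$.
\item[(3)] For $H(\tau)$ we need
\[
\log\Big(\frac{\log(1+|\Vset|)}{\delta}\cdot\frac{CR^2d^2}{\gamma^2\tau^2\theta^{(\psi)}(\tau)}\Big).
\]
Substituting $\theta^{(\psi)}$, the inner quantity equals $4CR^2d^2\tau^4/(\gamma^2\tau^2R^2d)\cdot e^{2R\sqrt d/(\tau\varepsilon_{\rm err})}=4Cd\tau^2\gamma^{-2}e^{2R\sqrt d/(\tau\varepsilon_{\rm err})}$. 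Since $\tau<\tau_\star=4R\sqrt d/\varepsilon_{\rm err}$, the logarithm of this is at most $\log d+2\log(R\sqrt d/(\gamma\varepsilon_{\rm err}))+O(1)+2R\sqrt d/(c_{\rm sm}\gamma\varepsilon_{\rm err})$.
\end{itemize}
Item (3) is the step requiring care. The term $\log(1/\varepsilon_{\rm err})$ is not part of $H_{\psi_1}$. We absorb it by $\log(1/\varepsilon_{\rm err})\le 1/\varepsilon_{\rm err}\le (R/\gamma)\sqrt d/\varepsilon_{\rm err}$, which uses $R/\gamma\ge 2$. Together with $\log\log(1+|\Vset|)\le C(\log d+\log\log M)$, this gives $H(\tau)\le CH_{\psi_1}$, and hence $H_{\mathrm{win}}(\tau)\le CH_{\psi_1}$ uniformly on the active range.

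Theorem~\ref{thm:appendix-rate-interval} then yields $|\mathcal J^{(\psi_1)}(\tau)|\le C\sqrt{H_{\psi_1}/d}+CH_{\psi_1}/d$ on the active range, together with the null-coupling implication for every $n\ge1$. Combining the three regimes proves the corollary. Unlike the $L^p$ case, no large-noise comparison with $\kappa_{\min}^{(\psi_1)}$ is needed, because the cutoff $\tau_\star$ handles that end.
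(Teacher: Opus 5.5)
Your proof is correct and follows the paper's argument. It sets the intervals to be empty for $\tau\ge\tau_\star$, and on the rest of the range it applies the fixed-noise interval theorem with $(\zeta^{(\psi)},\theta^{(\psi)})$ and the bound $H_{\mathrm{win}}(\tau)\le CH_{\psi_1}$. Your separate small-noise regime is harmless but unnecessary: the paper applies the interval theorem on all of $[\gamma,\tau_\star)$, since your bounds (1)--(3) hold verbatim with $\tau\ge\gamma$ in place of $\tau\ge c_{\rm sm}\gamma$. Your explicit absorption of $\log(1/\varepsilon_{\rm err})$ into $(R/\gamma)\sqrt d/\varepsilon_{\rm err}$ spells out a step the paper leaves implicit.
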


\begin{proof}
For $\tau\ge \tau_\star$, the oracle is identically $s_{U,\tau}$ by definition, so in this regime
we may take
\[
  \mathcal J^{(\psi_1)}(\tau):=\emptyset.
\]

On the active range
\[
  \gamma\le \tau<\tau_\star,
\]
apply Theorem~\ref{thm:appendix-rate-interval} with
\[
  \zeta(\tau)=\zeta^{(\psi)}(\tau),
  \qquad
  \theta(\tau)=\theta^{(\psi)}(\tau),
\]
and define
\[
  \mathcal J^{(\psi_1)}(\tau):=[\kappa_-^{(\psi)}(\tau),\kappa_+^{(\psi)}(\tau)].
\]
Here
\[
  \log(1/\zeta^{(\psi)}(\tau))
  \le
  \log 4+\frac{2R\sqrt d}{\tau\varepsilon_{\rm err}}
  \le CH_{\psi_1}.
\]
Also,
\[
  \frac{R^2 d^2}{\gamma^2\tau^2\theta^{(\psi)}(\tau)}
  =
  \frac{4d\tau^2}{\gamma^2}
  \exp\!\left(\frac{2R\sqrt d}{\tau\varepsilon_{\rm err}}\right),
\]
so, because $\gamma\le \tau<\tau_\star=4R\sqrt d/\varepsilon_{\rm err}$,
\[
  \log\!\left(\frac{R^2 d^2}{\gamma^2\tau^2\theta^{(\psi)}(\tau)}\right)\le CH_{\psi_1}.
\]
Finally,
\[
  \log\log(1+|\Vset|)=\log(d\log M+O(1))\le C(\log d+\log\log M)\le CH_{\psi_1},
\]
and
\[
  \log(1/\delta)\le C\log d\le CH_{\psi_1}
\]
because $Q_\star\le d$. Thus
\[
  H_{\mathrm{win}}(\tau)\le CH_{\psi_1}
\]
uniformly on the active range, and Theorem~\ref{thm:appendix-rate-interval} gives
\[
  |\mathcal J^{(\psi_1)}(\tau)|\le C\sqrt{\frac{H_{\psi_1}}{d}}+C\frac{H_{\psi_1}}{d}
\]
together with the required null-coupling implication. Combining the active and inactive regimes
proves the corollary.
\end{proof}

\subsection{Proof of Theorem~\ref{thm:psi1_parameterized}}
\label{sec:appendix-psi-kappa-proof}

\begin{proof}
\medskip
\noindent\emph{Interval and packing.}
For each $\tau\ge \gamma$, set
\[
  \widehat s_\tau^{(S)}:=\widehat s_\tau^{(S,\psi)},
\]
and let $\{\mathcal J^{(\psi_1)}(\tau)\}_{\tau\ge \gamma}$ be the interval family from
Corollary~\ref{cor:appendix-psi-kappa-window}. Then assumption~(i) of
Theorem~\ref{thm:appendix-rate-engine} holds, and
\[
  w_{\psi_1}:=\sup_{\tau\ge\gamma}|\mathcal J^{(\psi_1)}(\tau)|
  \le
  C\sqrt{\frac{H_{\psi_1}}{d}}+C\frac{H_{\psi_1}}{d}.
\]

By Lemma~\ref{lem:appendix-rate-packing}, the admissible rate set $K_d^{(\psi_1)}$ contains a subset $G_\psi$
whose points are pairwise separated by more than $w_{\psi_1}$ and whose cardinality satisfies
\[
  |G_\psi|
  \ge
  \frac{\log(n_{\max}^{(\psi_1)}/(2n_{\min}^{(\psi_1)}))}{\log 2+2dw_{\psi_1}}.
\]
As above,
\[
  \log\frac{n_{\max}^{(\psi_1)}}{2n_{\min}^{(\psi_1)}}
  \ge
  c\,d\log(R/\gamma)
\]
for all sufficiently large $d$, while
\[
  \log 2+2dw_{\psi_1}\le C\big(\sqrt{dH_{\psi_1}}+H_{\psi_1}\big).
\]
Hence
\[
  |G_\psi|
  \ge
  c'\,\frac{d\log(R/\gamma)}{\sqrt{dH_{\psi_1}}+H_{\psi_1}}.
\]
Since
\[
  Q_\star\le c_0\,\frac{d\log(R/\gamma)}{\sqrt{dH_{\psi_1}}+H_{\psi_1}},
\]
choosing $c_0=c_0(\rho)$ sufficiently small ensures
\[
  |G_\psi|\ge \frac{80Q_\star}{\rho^2}.
\]

\medskip
\noindent\emph{Base-noise separation.}
Set
\[
  A^{(\psi)}(S):=G_\gamma(S),
\]
where the threshold is $\Lambda_\gamma(\zeta^{(\psi)}(\gamma))$. Exactly as in the $L^p$ proof,
\[
  \pi_{S,\gamma}(A^{(\psi)}(S))\ge 1-\zeta^{(\psi)}(\gamma),
  \qquad
  \mathbb P_S[x\in A^{(\psi)}(S)]\le \frac{n_{\max}^{(\psi_1)}}{\Lambda_\gamma(\zeta^{(\psi)}(\gamma))}
\]
for every fixed $x\in\R^d$. Since
\[
  \zeta^{(\psi)}(\gamma)
  \le
  2\exp\!\left(-\frac{2R\sqrt d}{\gamma\varepsilon_{\rm err}}\right)\to 0,
\]
we have $\zeta^{(\psi)}(\gamma)\le \rho/2$ for all sufficiently large $d$. Moreover, by
Lemma~\ref{lem:appendix-rate-quantile} at $\tau=\gamma$,
Lemma~\ref{lem:appendix-rate-small-noise-mi}, and the quantitative bound
\[
  \sqrt{dH_{\psi_1}}+H_{\psi_1}\le Cc_0\,d\log M,
\]
we obtain
\[
  \log\Lambda_\gamma\big(\zeta^{(\psi)}(\gamma)\big)
  \ge
  c_Id\log M-C\big(\sqrt{dH_{\psi_1}}+H_{\psi_1}\big)
  \ge
  \big(c_I-Cc_0\big)d\log M.
\]
Since
\[
  \log n_{\max}^{(\psi_1)}=\frac{d}{32}\log M+O(1)
\]
and $c_I>1/16$, choosing $c_0=c_0(\rho)$ sufficiently small yields
\[
  c_I-Cc_0>\frac1{32}.
\]
It follows that
\[
  \frac{n_{\max}^{(\psi_1)}}{\Lambda_\gamma(\zeta^{(\psi)}(\gamma))}\le \frac{\rho^2}{8}
\]
for all sufficiently large $d$. Thus the separating-set hypothesis in
Theorem~\ref{thm:appendix-rate-engine} holds for the map $A^{(\psi)}$ and the packing set $G_\psi$.

\medskip
\noindent\emph{Passage to the original $\sigma$-indexed model.}
We now pass from the $\tau$-indexed oracle to the original $\sigma$-indexed model. Let $\mathcal D_0$ be the distribution over instances $(n,S)$ from Theorem~\ref{thm:appendix-rate-engine}, and for each such instance define
\[
  \pi^{(S,\gamma)}:=\pi_{S,\gamma}=\nu_S*\mathcal N(0,\gamma^2I_d),
  \qquad
  \widehat s_\sigma^{(S,\gamma)}(x):=\widehat s_{\tau(\sigma)}^{(S)}(x),
  \qquad
  \tau(\sigma):=\sqrt{\gamma^2+\sigma^2}.
\]
Then
\[
  (\pi^{(S,\gamma)})_\sigma=\nu_{S,\tau(\sigma)},
  \qquad
  s_{\pi^{(S,\gamma)},\sigma}=s_{S,\tau(\sigma)}.
\]
Hence Proposition~\ref{prop:appendix-psi-kappa-properties} gives, for every $\sigma>0$ and every $z\ge 0$,
\[
  \mathbb P_{X\sim (\pi^{(S,\gamma)})_\sigma}
  \big[\|\widehat s_\sigma^{(S,\gamma)}(X)-s_{\pi^{(S,\gamma)},\sigma}(X)\|_2\ge z\big]
  \le
  2\exp\!\left(-\frac{z\tau(\sigma)}{\varepsilon_{\rm err}}\right)
  \le
  2\exp\!\left(-\frac{z\sigma}{\varepsilon_{\rm err}}\right),
\]
and both $\widehat s_\sigma^{(S,\gamma)}$ and $s_{\pi^{(S,\gamma)},\sigma}$ satisfy the Lipschitz bound stated in Theorem~\ref{thm:psi1_parameterized}. Thus the family
\[
  \bigl(\pi^{(S,\gamma)},\{\widehat s_\sigma^{(S,\gamma)}\}_{\sigma>0}\bigr)
\]
has the properties asserted in items~(1)--(3) of Theorem~\ref{thm:psi1_parameterized}.

Fix an adaptive algorithm $\mathcal A$ obeying the query budget in Theorem~\ref{thm:psi1_parameterized}, and define $\mathcal A^\sharp$ exactly as in the proof of Theorem~\ref{thm:main_parameterized}. The same transfer argument shows that the two runs induce the same joint law of queries, oracle answers, and final output on every instance. Let $\mathcal D$ be the induced law of
\[
  \bigl(\pi^{(S,\gamma)},\{\widehat s_\sigma^{(S,\gamma)}\}_{\sigma>0}\bigr)
\]
when $(n,S)\sim \mathcal D_0$. Taking the constant in Theorem~\ref{thm:psi1_parameterized} to be $c_0$, the algorithm $\mathcal A^\sharp$ makes at most $Q_\star$ queries. Theorem~\ref{thm:appendix-rate-engine} therefore gives
\[
  \mathbb P_{(\pi,\{\widehat s_\sigma\})\sim \mathcal D}
  \Big[\TV\big(\widehat X,\pi\big)\ge 1-\rho\Big]
  \ge 1-\rho.
\]
This proves Theorem~\ref{thm:psi1_parameterized}.
\end{proof}

\paragraph{Deriving Theorem~\ref{thm:psi1_simplified}.}
Under Assumptions~\ref{ass:bounded} and~\ref{ass:psi1}, the ratio $R/\gamma$ and the parameter $\varepsilon_{\rm err}$ are fixed constants. Consequently,
\[
  H_{\psi_1}
  =
  \log d+\log(R/\gamma)+\frac{R}{\gamma}\frac{\sqrt d}{\varepsilon_{\rm err}}
  =
  \Theta(\sqrt d).
\]
Theorem~\ref{thm:psi1_parameterized} therefore gives
\[
  Q
  =
  \Omega\!\left(
    \frac{d}{\sqrt{d\sqrt d}+\sqrt d}
  \right)
  =
  \Omega(d^{1/4}).
\]
This yields Theorem~\ref{thm:psi1_simplified}.

\section{Synthetic experiments}
\label{app:synthetic}

The synthetic experiment in Figure~\ref{fig:main-synthetic} visualizes the
location and width of the informative window on the hypercube warm-up family
from \cref{sec:intuition}.  It is not a direct numerical evaluation of
the theorem quantity; rather, it isolates, in a simplified model, the geometric
mechanism behind the lower bound.

\paragraph{Setup.}
We work on the hypercube $\{-1,+1\}^d$ and fix the canonical query point
$x=\mathbf{1}$.  The experiment uses a Poissonized shell-count model: the
occupancies $N_k := |\{y\in S : \mathrm{ham}(y,\mathbf{1})=k\}|$ are sampled
as independent Poisson random variables with means
\[
  \mu_k \;=\; \binom{d}{k}\, e^{\rho d - d\log 2},
\]
so that the expected codebook size is $e^{\rho d}$.  This is a standard
approximation to the combinatorial random-coding model of \cref{sec:intuition}
that becomes exact in the limit $d\to\infty$ at fixed $\rho$; for the sparse regime
($\rho=0.2$) and dimensions used here, the two models behave indistinguishably.

Let $q_\rho\in(0,1/2)$ be the unique lower-branch solution of
$h(q_\rho)=\log 2-\rho$, where $h(q)=-q\log q-(1-q)\log(1-q)$ is the binary
entropy.  For $\rho=0.2$ one obtains $q_\rho\approx 0.1948$.

\paragraph{Shell-resolved conditional proxy.}
The ideal pointwise signal at $x=\mathbf{1}$ is
$\mathcal{I}_S(\tau;\mathbf{1})$, defined in the main text.  Direct evaluation
would require an explicit codebook of expected size $e^{\rho d}$, which is
infeasible at the dimensions considered here ($d$ up to $32768$).  We therefore
replace the realized signal by a shell-resolved proxy: we condition on the
shell occupancies $\{N_k\}$ and analytically integrate out the randomness from
vertex placement within each shell.  This removes within-shell sampling noise
and isolates the dependence on the shell occupancy profile.

For a smoothing level $\tau>0$, write $u=e^{-2/\tau^2}$.  A vertex $y$ at
Hamming distance $k=\mathrm{ham}(y,\mathbf{1})$ from $\mathbf{1}$ contributes
a weight $\exp(-\lVert\mathbf{1}-y\rVert^2/(2\tau^2))=u^k$, so in the proxy
all quantities depend only on $\{N_k\}$.  Define the partition function and
density ratio
\[
  Z(\tau) \;=\; \sum_{k=0}^{d} N_k\, u^k,
  \qquad
  R_\tau \;=\;
  \frac{2^d}{|S|}\;\frac{Z(\tau)}{(1+u)^d},
  \qquad |S|=\sum_{k=0}^d N_k.
\]
The posterior mean per coordinate under the planted codebook, averaged over
coordinates, is
\[
  \bar\alpha(\tau)
  \;=\;
  \frac{\sum_{k=0}^d N_k\, u^k\,(1-2k/d)}{Z(\tau)},
\]
while the null posterior mean per coordinate is
$m_0(\tau) = (1-u)/(1+u)$.

The formula for the proxy is derived as follows.  Under a model in which,
conditional on the shell counts, the $N_k$~vertices in shell~$k$ form a
uniformly random subset of that shell, the conditional expectation of
$(1/d)\lVert m(\mathbf{1})-m^{\mathrm{null}}(\mathbf{1})\rVert_2^2$ given
$\{N_k\}$ decomposes into a \emph{bias} term and a \emph{variance} term:
\[
  G_\tau^{\mathrm{cond}}
  \;=\;
  \underbrace{(\bar\alpha(\tau)-m_0(\tau))^2}_{\text{shell-occupancy bias}}
  \;+\;
  \underbrace{%
    \sum_{k=1}^{d-1}
    \frac{N_k\, u^{2k}}{Z(\tau)^2}\; v_k\, f_k
  }_{\text{within-shell variance}},
\]
where
\[
  v_k = \frac{4k(d-k)}{d^2},
  \qquad
  f_k = \frac{\binom{d}{k}-N_k}{\binom{d}{k}-1}.
\]
Here $v_k$ is the per-coordinate variance of a uniformly random vertex in
shell~$k$, and $f_k$ is the finite-population (hypergeometric) correction
factor.  The first term captures how the occupancy profile $\{N_k\}$ shifts
the overall posterior mean away from the null; the second captures the residual
fluctuation from the random placement of vertices within each shell.  The
boundary shells $k\in\{0,d\}$ contribute zero to the variance sum since each
contains a single vertex.

The proxy for $\mathcal{I}_S(\tau;\mathbf{1})$ is then
\[
  \widetilde{\mathcal{I}}(\tau)
  \;=\;
  \tau^{-4}\; R_\tau\; G_\tau^{\mathrm{cond}},
\]
where the factor $\tau^{-4}$ converts the posterior-mean difference to a score
difference via Tweedie's formula.  In the experiment, we evaluate this formula
at shell counts drawn from the Poissonized model.  All reported curves are
\emph{medians} of $\widetilde{\mathcal{I}}(\tau)$ over independently sampled
shell-occupancy vectors.  We use the median rather than the mean because the
upper tails from rare favorable shell configurations make the median a more
stable summary of typical behavior.

\paragraph{Heuristic prediction.}
The null shell weights satisfy
\[
  \frac{\binom{d}{k}\,u^k}{(1+u)^d}
  \;=\;
  \Pr\!\bigl[\mathrm{Bin}(d,q_\tau)=k\bigr],
  \qquad
  q_\tau = \frac{u}{1+u} = \frac{1}{1+e^{2/\tau^2}}.
\]
Hence the shells contributing at smoothing level $\tau$ lie in a band
$k = q_\tau d + O(\sqrt{d})$.

Recall that $q_\rho$ is the critical shell fraction at which the expected
occupancy transitions from exponentially sparse to exponentially dense.  By
Stirling's formula,
\[
  \log \mu_{qd}
  \;=\;
  d\bigl(h(q)-\log 2+\rho\bigr)
  - \tfrac{1}{2}\log\!\bigl(2\pi d\, q(1-q)\bigr) + O(1),
\]
and since $h'(q_\rho)=\log\!\frac{1-q_\rho}{q_\rho}>0$, for
$q = q_\rho + \delta/\sqrt{d}$ one has
\[
  \log \mu_{qd} - \log \mu_{q_\rho d}
  \;=\;
  h'(q_\rho)\,\delta\sqrt{d} + O(1+\delta^2).
\]
Thus across a shell band of width $O(d^{-1/2})$ in $q$, the expected
occupancies change by $e^{\Theta(\sqrt{d})}$: shells slightly inside $q_\rho$
are on the sparse side, while shells slightly outside $q_\rho$ are already on
the dense side.  The informative regime therefore occurs when the contributing
binomial band is centered within $O(d^{-1/2})$ of $q_\rho$, i.e.\
$q_\tau - q_\rho = O(d^{-1/2})$.

Passing to $t=\log\tau$, the map $t\mapsto q_\tau$ is smooth with
\[
  \frac{dq_\tau}{d\log\tau}
  \;=\;
  2\,q_\tau(1-q_\tau)\,\log\!\frac{1-q_\tau}{q_\tau},
\]
which is nonzero at the matching scale $\tau_*$ defined by
$q_{\tau_*}=q_\rho$.  Therefore the informative window on the $\log\tau$ axis
also has width $\Theta(d^{-1/2})$.

\paragraph{Numerical procedure.}
We evaluate the median curve on a uniform grid of $641$ points in $\log\tau$
over the interval $[\log\tau_*-0.5,\;\log\tau_*+0.5]$, for
\[
  d \;\in\; \{512,\;1024,\;2048,\;4096,\;8192,\;16384,\;32768\}.
\]
Shell occupancies are sampled as follows:
\begin{itemize}
\item $\mu_k < 50$: exact Poisson draw;
\item $50 \le \mu_k < 10^8$: Gaussian approximation
  $N_k = \max\!\bigl\{0,\;\mathrm{round}(\mu_k + \sqrt{\mu_k}\cdot Z)\bigr\}$,
  where $Z\sim\mathcal{N}(0,1)$;
\item $\mu_k \ge 10^8$: deterministic replacement $N_k = \mu_k$.
\end{itemize}
All numerically delicate quantities---shell masses, weighted sums, and the
density ratio---are accumulated in log-space.  The right panel of
Figure~\ref{fig:main-synthetic} reports the full width at half maximum (FWHM)
of the raw median curve in the variable $\log\tau$, with the two half-height
crossings obtained by linear interpolation.

\end{document}